\documentclass[11pt]{article}
\usepackage[margin=1in]{geometry}
\usepackage[numbers]{natbib}
\usepackage{parskip}

\usepackage{microtype}
\usepackage{wrapfig}
\usepackage{graphicx}
\usepackage{subfigure}
\usepackage{booktabs} %
\newcommand{\ra}[1]{\renewcommand{\arraystretch}{#1}}

\usepackage{yub}
\usepackage{enumitem}
\allowdisplaybreaks

\usepackage{hyperref}

\usepackage{algorithm}
\usepackage{algorithmic}

\hypersetup{
    colorlinks,
    linkcolor={blue!50!black},
    citecolor={blue!50!black},
}
\colorlet{linkequation}{blue}

\renewcommand{\setto}{\leftarrow}
\renewcommand{\epsilon}{\varepsilon}
\newcommand{\BR}{{\sf BR}}
\newcommand{\wcbr}{{\tt WorstCaseBestResponse}}
\newcommand{\bcbr}{{\tt BestCaseBestResponse}}
\newcommand{\bmls}{{\tt BestMixedLeaderStrategy}}
\newcommand{\coreset}{{\tt CoreSet}}
\newcommand{\gap}{{\sf gap}}
\newcommand{\newphi}{\psi}
\newcommand{\newgap}{\wt{\sf gap}}

\def\shownotes{1}  %
\ifnum\shownotes=1
\newcommand{\authnote}[2]{{\small $\ll$\textsf{#1 notes: #2}$\gg$}}
\else
\newcommand{\authnote}[2]{}
\fi

\title{Sample-Efficient Learning of Stackelberg Equilibria in \\
  General-Sum Games}

\author{
  Yu Bai\thanks{Salesforce Research. E-mail:~\texttt{yu.bai@salesforce.com}}
  \and
  Chi Jin\thanks{Princeton University. E-mail:~\texttt{chij@princeton.edu}}
  \and
  Huan Wang\thanks{Salesforce Research. E-mail:~\texttt{\{huan.wang, cxiong\}@salesforce.com}}
  \and
  Caiming Xiong\footnotemark[3]
}
\date{\today}

\begin{document}

\maketitle

\begin{abstract}
  Real world applications such as economics and policy making often involve solving multi-agent games with two unique features: (1) The agents are inherently \emph{asymmetric} and partitioned into leaders and followers; (2) The agents have different reward functions, thus the game is \emph{general-sum}. The majority of existing results in this field focuses on either symmetric solution concepts (e.g. Nash equilibrium) or zero-sum games. It remains open how to learn the \emph{Stackelberg equilibrium}---an asymmetric analog of the Nash equilibrium---in general-sum games efficiently from noisy samples.
  This paper initiates the theoretical study of sample-efficient learning of the Stackelberg equilibrium, in the bandit feedback setting where we only observe noisy samples of the reward. We consider three representative two-player general-sum games: bandit games, bandit-reinforcement learning (bandit-RL) games, and linear bandit games. In all these games, we identify a fundamental gap between the exact value of the Stackelberg equilibrium and its estimated version using finitely many noisy samples, which can not be closed information-theoretically regardless of the algorithm. We then establish sharp positive results on sample-efficient learning of Stackelberg equilibrium with value optimal up to the gap identified above, with matching lower bounds in the dependency on the gap, error tolerance, and the size of the action spaces. Overall, our results unveil unique challenges in learning Stackelberg equilibria under noisy bandit feedback, which we hope could shed light on future research on this topic.

\end{abstract}

\section{Introduction}

Real-world problems such as economic design and policy making can often be modeled as multi-agent games that involves two levels of thinking: The policy maker---as a player in this game---needs to reason about the other player's optimal behaviors given her decision, in order to inform her own optimal decision making. Consider for example the optimal taxation problem in the AI Economist~\citep{zheng2020ai}, a game modeling a real-world social-economic system involving a \emph{leader} (e.g. the government) and a group of interacting \emph{followers} (e.g. citizens). The leader sets a tax rate which determines an economics-like game for the followers; the followers then play in this game with the objective to maximize their own reward (such as individual productivity). However, the goal of the leader is to maximize her own reward (such as overall equality) which is in general different from the followers' rewards, making these games \emph{general-sum}~\citep{roughgarden2010algorithmic}. Such two-level thinking appears broadly in other applications as well such as in automated mechanism design~\citep{conitzer2002complexity,conitzer2004self}, optimal auctions~\citep{cole2014sample,dutting2019optimal}, security games~\citep{tambe2011security}, reward shaping~\citep{leibo2017multi}, and so on.

Another key feature in such games is that the players are \emph{asymmetric}, and they act in turns: the leader first plays, then the follower sees the leader's action and then adapts to it. This makes symmetric solution concepts such as Nash equilibrium~\citep{nash1951non} not always appropriate. A more natural solution concept for these games is the \emph{Stackelberg equilibrium}: the leader's optimal strategy, assuming the followers play their best response to the leader~\citep{simaan1973stackelberg,conitzer2006computing}. The Stackelberg equlibrium is often the desired solution concept in many of the aforementioned applications. Furthermore, it is of compelling interest to understand the learning of Stackelberg equilibria from \emph{samples}, as it is often the case that we can only learn about the game through interactively deploying policies and observing the (noisy) feedback from the game~\citep{zheng2020ai}. This may be the case even when the game rules are perfectly known but not yet represented in a desired form, as argued in the line of work on empirical game theory~\citep{wellman2006methods,vorobeychik2007learning,jordan2008searching}.

Despite the motivations, theoretical studies of learning Stackelberg equilibria in general-sum games remain open, in particular when we can only learn from noisy samples of the rewards. A line of work provides guarantees for finding Stackelberg equilibria in general-sum games, but restricts attention to either the full observation setting (so that the exact game is observable) or with an exact best-response oracle~\citep{conitzer2006computing,letchford2009learning,von2010leadership,peng2019learning}. These results lay out a foundation for analyzing the Stackelberg equilibrium, but do not generalize to the bandit feedback setting in which the game can only be learned from random samples of the rewards. Another line of work considers the sample complexity of learning the Nash equilibrium in Markov games~\citep{perolat2017learning,bai2020provable,bai2020near,liu2020sharp,xie2020learning,zhang2020model}, which also do not imply algorithms for finding the Stackelberg equilibrium in these games as the Nash is in general different from the Stackelberg equilibrium in general-sum games.

In this work, we study the sample complexity of learning Stackelberg equilibrium in general-sum games. We focus on general-sum games with two players (one leader and one follower), in which we wish to learn an approximate Stackelberg equilbrium for the leader from random samples. Our contributions can be summarized as follows.
\begin{itemize}[leftmargin=1.5pc]
\item As a warm-up, we consider \emph{bandit games} in which the two players play an action in turns and observe their own rewards. We identify a fundamental gap between the exact Stackelberg value and its estimated version from finite samples, which cannot be closed information-theoretically regardless of the algorithm (Section~\ref{section:gap}).
  We then propose a rigorous definition $\gap_\epsilon$ for this gap, and show that it is possible to sample-efficiently learn the $(\gap_\epsilon+\epsilon)$-approximate Stackelberg
  equilibrium with $\wt{O}(AB/\epsilon^2)$ samples, where $A,B$ are the number of actions for the two players. We further show a matching lower bound $\Omega(AB/\epsilon^2)$ (Section~\ref{section:nested-bandit-main}). We also establish similar results for learning Stackelberg in simultaneous matrix games (Appendix~\ref{section:simultaneous}).
  
\item We consider \emph{bandit-RL games} in which the leader's action determines an episodic Markov Decision Process (MDP) for the follower. We show that a $(\gap_\epsilon+\epsilon)$ approximate Stackelberg equilibrium for the leader can be found in $\wt{O}(H^5S^2AB/\epsilon^2)$ episodes of play, where $H,S$ are the horizon length and number of states for the follower's MDP, and $A,B$ are the number of actions for the two players (Section~\ref{section:rl}). Our algorithm utilizes recently developed reward-free reinforcement learning techniques to enable fast exploration for the follower within the MDPs.
\item Finally, we consider \emph{linear bandit games} in which the action spaces for the two players can be arbitrarily large, but the reward is a linear function of a $d$-dimensional feature representation of the actions. We design an algorithm that achieves $\wt{O}(d^2/\eps^2)$ sample complexity upper bound for linear bandit games (Section~\ref{section:linear-bandit}). This only depends polynomially on the feature dimension instead of the size of the action spaces, and has at most an $\wt{O}(d)$ gap from the lower bound.

\end{itemize}

\subsection{Related work}
Since the seminal paper of~\cite{neumann1928theorie}, notions of equilibria in games and their algorithmic computation have received wide attention~\citep[see, e.g.,][]{cesa2006prediction, shoham2008multiagent}. For the scope of this paper, this section focuses on reviewing results that related to learning Stackelberg equilibria.

\paragraph{Learning Stackelberg equilibria in zero-sum games}
The first category of results study two-player zero-sum games, where the rewards of the two players sum to zero. Most results along this line focus on the bilinear or convex-concave setting~\citep[see, e.g.,][]{korpelevich1976extragradient,nemirovski1978cesari,nemirovski2004prox, rakhlin2013optimization, foster2016learning}, where the Stackelberg equilibrium coincide with the Nash equilibrium due to Von Neumann's minimax theorem~\citep{neumann1928theorie}.
Results for learning Stackelberg equilibria beyond convex-concave setting are much more recent, with~\citet{rafique2018non, nouiehed2019solving} considering the nonconvex-concave setting, and~\citet{fiez2019convergence,jin2020local, marchesi2020learning} considering the nonconvex-nonconcave setting.~\citet{marchesi2020learning} provide sample complexity results for learning Stackelberg with infinite strategy spaces, using discretization techniques that may scale exponentially in the dimension without further assumptions on the problem structure.

We remark that a crucial property of zero-sum games is that any two strategies giving similar rewards for the follower will also give similar rewards for the leader. This is no longer true in general-sum games, and prevents most statistical results for learning Stackelberg equilibria in the zero-sum setting from generalizing to the general-sum setting.

\paragraph{Learning Stackelberg equilibria in general-sum games}
The computational complexity of finding Stackelberg equilibria in games with simultaneous play (``computing optimal strategy to commit to'') is studied in~\citep{conitzer2006computing,letchford2010computing,von2010leadership,korzhyk2011stackelberg,ahmadinejad2019duels,blum2019computing}. These results assume full observation of the payoff function, and show that several versions of matrix games and extensive-form (multi-step) games admit polynomial time algorithms. \citet{vasal2020stochastic}~designs computationally efficient algorithms for computing Stackelberg in certain ``conditionally independent controlled'' Markov games.

Another line of work considers learning Stackelberg with a ``best response oracle''~\citep{letchford2009learning,blum2014learning,peng2019learning}, that returns the follower's exact best response strategy when a leader's strategy is queried. This oracle and the noisy reward oracle we assume are in general incomparable (cannot simulate each other regardless of the number of queries), and thus our sample complexity results do not imply each other. The recent work of~\citet{sessa2020learning} proposes the StackelUCB algorithm to sample-efficiently learn a Stackelberg game where the opponent's response function has a linear structure in a certain kernel space, and the observation noise is added in the action space instead of the reward (thus a different noisy feedback model from ours).

Lastly, \citet{fiez2019convergence}~study the local convergence of first-order algorithms for finding Stackelberg equilibria in general-sum games. Their result also assumes exact feedback and do not allow sampling errors. The AI Economist~\citep{zheng2020ai} studies the optimal taxation problem by learning the Stackelberg equilibrium via a two-level reinforcement learning approach.

\paragraph{Learning equilibria in Markov games}
A recent line of results \citep{bai2020provable,bai2020near,xie2020learning,zhang2020model} consider learning Markov games \citep{shapley1953stochastic}---a generalization of Markov decision process to the multi-agent setting. We remark that all three settings studied in this paper can be cast as special cases of general-sum Markov games, which is studied by \citep{liu2020sharp}. In particular, \citet{liu2020sharp} provides sample complexity guarantees for finding Nash equilibria, correlated equilibria, or coarse correlated equilibria of the general-sum Markov games. These Nash-finding algorithms are related to our setting, but do not imply results for learning Stackelberg (see Section~\ref{section:nested-bandit-main} and Appendix~\ref{appendix:relationship-turn-based-mg} for detailed discussions).

\section{Preliminaries}

\paragraph{Bandit games}
A general-sum two-player bandit game can be described by a tuple $M=(\mc{A},\mc{B}, r_1, r_2)$, which defines the following game played by two players, a \emph{leader} and a \emph{follower}:
\begin{itemize}[leftmargin=1.5pc]
\item The leader plays an action $a\in \mc{A}$, with $|\mc{A}|=A$.
\item The follower sees the action played by the leader, and plays an action $b\in\mc{B}$, with $|\mc{B}|=B$.
\item The follower observes a (potentially random) reward $r_2(a,b)\in[0,1]$. The leader also observes her own reward $r_1(a,b)\in[0,1]$.
\end{itemize}
Note that this is a special case of a general-sum turn-based Markov game with two steps~\citep{shapley1953stochastic,bai2020provable}. This game is also a turn-based variant of the simultaneous matrix game considered in~\citep{conitzer2006computing,letchford2009learning} (for which we also provide results in Appendix~\ref{section:simultaneous}).

\paragraph{Best response, Stackelberg equilibrium}
Let $\mu_i(a,b)\defeq \E[r_i(a,b)]$ ($i=1,2$) denote the mean rewards. For each leader action $a$, the \emph{best response set} $\BR_0(a)$ is the set of follower actions that maximize $\mu_2(a,\cdot)$:
\begin{align}
  \label{equation:br0}
  \BR_0(a) \defeq \set{b: \mu_2(a, b) = \max_{b'\in\mc{B}} \mu_2(a, b')}.
\end{align}
Given the best-response set $\BR_0(a)$, we define the function $\phi_0:\mc{A}\to[0,1]$ as the leader's value function when the follower plays the worst-case best response (henceforth the``exact $\phi$-function''):
\begin{align}
  \label{equation:phi0}
  \phi_0(a) \defeq \min_{b\in \BR_0(a)} \mu_1(a, b),
\end{align}
This is the value function for the leader action $a$, assuming the follower plays the best response to $a$ and breaks ties in the best response set against the leader's favor. This is known as \emph{pessimistic tie breaking} and provides a worst-case guarantee for the leader~\citep{conitzer2006computing}. We remark that here restricting $b$ to pure strategies (deterministic actions) is without loss of generality, as there is at least one pure strategy that maximizes $\mu_2(a,\cdot)$ and (among the maximizers) minimize $\mu_1(a,\cdot)$, among all mixed strategies. 

The Stackelberg Equilibrium (henceforth also ``Stackelberg'') for the leader is the ``best response to the best response'', i.e. any action $a_\star$  that maximizes $\phi_0$~\citep{simaan1973stackelberg}:
\begin{align}
  \label{equation:astar}
  a_\star \in \argmax_{a\in\mathcal{A}} \phi_0(a).
\end{align}
We are interested in finding approximate solutions to the Stackelberg equilibrium, that is, an action $\what{a}$ that approximately maximizes $\phi_0(a)$. Note that as the leader's action is seen by the follower, it suffices to only consider pure strategies for the leader too (this is equivalent to the ``optimal committment to pure strategies'' problem of~\citep{conitzer2006computing}).
We also remark that, while we consider pessimistic tie breaking (definition~\eqref{equation:phi0}) in this paper, similar results hold in the optimistic setting as well in which the follower breaks ties in favor of the leader. We defer the statements and proofs of these results to Appendix~\ref{appendix:optimistic}.

{\bf Real-world example} (AI Economist): Consider the following (simplified) optimal taxation problem in the AI Economist~\citep{zheng2020ai} as an example of a bandit game. The government (leader) determines the tax rate $a\in\mc{A}$ for the follower. The citizen (follower) then chooses the amount of labor $b\in\mc{B}$ she wishes to perform. The rewards for the two players are different in general: For example, the citizen's reward $r_2(a,b)$ can be her post-tax income per labor, and the government's reward $r_1(a,b)$ can be a weighted average of its tax income and some measure of the citizen's welfare (e.g. not too much labor). We remark that the actual AI Economist is more similar to a \emph{bandit-RL game} where the follower plays sequentially in an MDP determined by the leader, which we study in Section~\ref{section:rl}.

\paragraph{Sample-efficient learning with bandit feedback}
In this paper we consider the \emph{bandit feedback} setting, that is, the algorithm cannot directly observe the mean rewards $\mu_1(\cdot,\cdot)$ and $\mu_2(\cdot,\cdot)$, and can only query $(a,b)$ and obtain random samples $(r_1(a,b), r_2(a,b))$. Our goal is to determine the number of samples in order to find an approximate maximizer of $\phi_0(a)$.

Note that the bandit feedback setting assumes observation noise in the rewards. As we will see in Section~\ref{section:bandit}, this noise turns out to bring in a fundamental challenge for learning Stackelberg equilibria that is not present in (and thus not directly solved by) existing work on learning Stackelberg, which assumes either exact observation of the mean rewards~\citep{conitzer2006computing,letchford2010computing}, or the best-response oracle that can query $a$ and obtain the exact best response set $\BR_0(a)$~\citep{letchford2009learning,peng2019learning}.

\paragraph{Markov decision processes}
We also present the basics of a Markov Decision Processes (MDPs), which will be useful for the bandit-RL games in Section~\ref{section:rl}. We consider episodic MDPs defined by a tuple $(H, \mc{S}, \mc{B}, d^1, \P, r)$, where $H$ is the horizon length, $\mc{S}$ is the state space, $\mc{B}$ is the action space\footnote{The notation $\mc{B}$ indicates that the MDP is played by the follower (cf. Section~\ref{section:rl}); we reserve $\mc{A}$ as the leader's action space in this paper.}, $\P=\set{\P_h(\cdot|s, b):h\in[H], s\in\mc{S},b\in\mc{B}}$ is the transition probabilities, and $r=\set{r_h:\mc{S}\times \mc{B}\to[0,1], h\in[H]}$ are the (potentially random) reward functions. A policy $\pi=\{\pi^b_h(\cdot | s)\in\Delta_{\mc{B}}:h\in[H], s\in\mc{S}\}$ for the player is a set of probability distributions over actions given the state.

In this paper we consider the exploration setting as the protocol of interacting with MDPs, similar as in~\citep{azar2017minimax,jin2018q}. The learning agent is able to play episodes repeatedly, where in each episode at step $h\in\set{1,\dots,H}$, the agent observes state $s_h\in\mc{S}$, takes an action $b_h\sim \pi_h(\cdot | s_h)$, observes her reward $r_h=r_h(s_h, b_h)\in[0,1]$, and transits to the next state $s_{h+1}\sim \P_h(\cdot|s_h, b_h)$. The initial state is received from the MDP: $s_1\sim d^1(\cdot)$. The overall value function (return) of a policy $\pi$ is defined as $V(\pi) \defeq \E_{\pi} \brac{ \sum_{h=1}^H r_{h}(s_h, b_h) }$.
\section{Warm-up: bandit games}
\label{section:bandit}

\subsection{Hardness of maximizing $\phi_0$ from samples}
\label{section:gap}
Given the exact $\phi$-function $\phi_0$~\eqref{equation:phi0}, a natural notion of approximate Stackelberg equilibrium is to find an action $\what{a}$ that is $\epsilon$ near-optimal for maximizing $\phi_0$:
\begin{align}
  \label{equation:ideal-stackelberg}
  \phi_0(\what{a}) \ge \max_{a\in\mc{A}} \phi_0(a) -  \epsilon.
\end{align}
However, the following lower bound shows that, in the worst case, it is hard to find such $\what{a}$ from finite samples.

\begin{theorem}[$\Omega(1)$ lower bound for maximizing $\phi_0$]
  \label{theorem:lower-bound-exact-stackelberg}
  For any sample size $n$ and any algorithm for maximizing $\phi_0$ that outputs an action $\what{a}\in\mc{A}$,
  there exists a bandit game with $A=B=2$ on which the algorithm must suffer from $\Omega(1)$ error with probability at least $1/3$:
  \begin{align*}
    \phi_0(\what{a}) \le \max_{a\in\mc{A}} \phi_0(a) - 1/2.
  \end{align*}
\end{theorem}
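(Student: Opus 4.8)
The plan is to construct two bandit games that are statistically indistinguishable from few samples but whose $\phi_0$-maximizers have very different $\phi_0$-values, so that no algorithm can reliably output a near-optimal action on both. The key device is a follower reward $\mu_2(a,\cdot)$ that is \emph{exactly tied} between the two follower actions under one leader action: then the best-response set contains both follower actions, and $\phi_0$ at that leader action is the \emph{minimum} of the two leader rewards $\mu_1(a,b)$ (pessimistic tie breaking). A tiny perturbation of $\mu_2$ that is undetectable from $n$ samples breaks the tie one way or the other, which can swing $\phi_0$ at that leader action between a small value and a large value.

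Concretely, I would take $\mc{A}=\{a_1,a_2\}$, $\mc{B}=\{b_1,b_2\}$. For the ``reference'' leader action $a_2$, fix $\mu_2(a_2,b_1)=\mu_2(a_2,b_2)=1/2$ and set $\mu_1(a_2,b_1)=0$, $\mu_1(a_2,b_2)=1$; for the other leader action $a_1$, make the follower strictly prefer some action and set $\phi_0(a_1)=1/2$ (e.g. $\mu_2(a_1,b_1)=1/2$, $\mu_2(a_1,b_2)=0$, $\mu_1(a_1,b_1)=1/2$). Now consider two games:
\begin{itemize}[leftmargin=1.5pc]
\item Game $\mathrm{I}$: exactly as above. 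Then $\BR_0(a_2)=\{b_1,b_2\}$, so $\phi_0(a_2)=\min\{0,1\}=0$, while $\phi_0(a_1)=1/2$; hence $a_1$ is the unique $\epsilon$-optimal action for any $\epsilon<1/2$, and any $\what a=a_2$ incurs error $1/2$.
\item Game $\mathrm{II}$: identical except $\mu_2(a_2,b_2)=1/2+\delta$ for a small $\delta>0$. Then $\BR_0(a_2)=\{b_2\}$, so $\phi_0(a_2)=\mu_1(a_2,b_2)=1$, and $a_2$ is the unique $\epsilon$-optimal action for $\epsilon<1/2$; any $\what a=a_1$ incurs error $1/2$.
\end{itemize}
The rewards can be realized as Bernoulli (or otherwise bounded) random variables so the only difference between the two games is the parameter $\delta$ on the single coordinate $(a_2,b_2)$.

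The main step is the indistinguishability argument: I would choose $\delta = \delta(n)$ small enough (e.g. $\delta \asymp 1/\sqrt{n}$ with a suitable constant, so that $n\,\mathrm{KL}(\mathrm{Ber}(1/2)\,\|\,\mathrm{Ber}(1/2+\delta)) = O(1)$) so that the KL divergence between the joint distributions of the transcripts of any algorithm interacting for $n$ rounds in Game $\mathrm{I}$ versus Game $\mathrm{II}$ is bounded by a constant. By the standard divergence decomposition for interactive bandit protocols, this KL is at most $\E_{\mathrm{I}}[N_n(a_2,b_2)]\cdot \mathrm{KL}(\mathrm{Ber}(1/2)\|\mathrm{Ber}(1/2+\delta)) \le n\cdot \mathrm{KL}(\mathrm{Ber}(1/2)\|\mathrm{Ber}(1/2+\delta))$, which is $O(1)$. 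Then Pinsker's inequality (or the Bretagnolle–Huber inequality, which is cleaner here) shows the total variation between the two transcript distributions is bounded away from $1$, so any fixed (possibly randomized) decision rule $\what a$ must, on at least one of the two games, output the ``wrong'' action with probability at least $1/3$ (choosing constants appropriately), which by the two bullet points above forces $\phi_0(\what a)\le \max_a\phi_0(a)-1/2$ on that game. Since $n$ was arbitrary, this holds for all sample sizes. The only thing to be careful about is picking the constant in $\delta$ and invoking Bretagnolle–Huber rather than Pinsker so that the failure probability lands at $\ge 1/3$ cleanly; the discontinuity of $\phi_0$ in $\mu_2$ at the tie is what does all the real work.
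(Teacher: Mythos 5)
Your proposal is correct and is essentially the paper's proof: a two-point Le Cam argument in which a $\delta\asymp 1/\sqrt{n}$ perturbation of the follower's reward flips the best-response set and hence swings $\phi_0$ by $\Omega(1)$, with indistinguishability controlled via the divergence decomposition; the paper's hard pair perturbs $\mu_2(a_1,\cdot)$ symmetrically as $\frac{1\pm\delta}{2}$ so that the \emph{unique} best response switches between $b_1$ and $b_2$, whereas yours uses an exact tie plus pessimistic tie-breaking in one of the two games, but this is an inessential difference. One small correction to your closing remark: Bretagnolle--Huber caps the two-point testing bound at $\tfrac{1}{4}e^{-\mathrm{KL}}<1/3$, so to land at failure probability $1/3$ you should use Pinsker (as the paper does), choosing $\delta$ so that the transcript KL is at most $2/9$.
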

Theorem~\ref{theorem:lower-bound-exact-stackelberg} shows that, no matter how large the sample size $n$ is, any algorithm in the worst-case have to suffer from an $\Omega(1)$ lower bound for maximizing $\phi_0$ (i.e. determining the Stackelberg equilibrium for the leader). This result stems from a \emph{hardness of determining the best response $\BR_0(a)$ exactly} from samples. (See Table~\ref{table:hard-instance-bandit} for the construction of the hard instance and Appendix~\ref{appendix:proof-lower-bound-exact-stackelberg} for the full proof of Theorem~\ref{theorem:lower-bound-exact-stackelberg}.) This is in stark contrast to the standard $1/\sqrt{n}$ type learning result in finding other solution concepts such as the Nash equilibrium~\citep{bai2020provable,liu2020sharp}, and suggests a new fundamental challenge to learning Stackelberg equilibrium from samples.

\subsection{Learning Stackelberg with value optimal up to gap}
\label{section:nested-bandit-main}

The lower bound in Theorem~\ref{theorem:lower-bound-exact-stackelberg}
shows that approximately maximizing $\phi_0$ is information-theoretically hard. Motivated by this, we consider in turn a slightly relaxed notion of optimality, in which we consider maximizing $\phi_0$ only up to the \emph{gap} between $\phi_0$ and its counterpart using $\epsilon$-approximate best responses.
More concretely, define the $\epsilon$-approximate versions of the best response set and $\phi$-function as
\begin{align*}
  & \phi_\epsilon(a) \defeq \min_{b\in \BR_\epsilon(a)} \mu_1(a,b), \\
  & \BR_\epsilon(a) \defeq \set{b\in\mc{B}: \mu_2(a,b) \ge \max_{b'} \mu_2(a,b') - \epsilon}.
\end{align*}
\begin{wrapfigure}{r}{0.43\textwidth}
  \centering
  \includegraphics[width=0.43\textwidth]{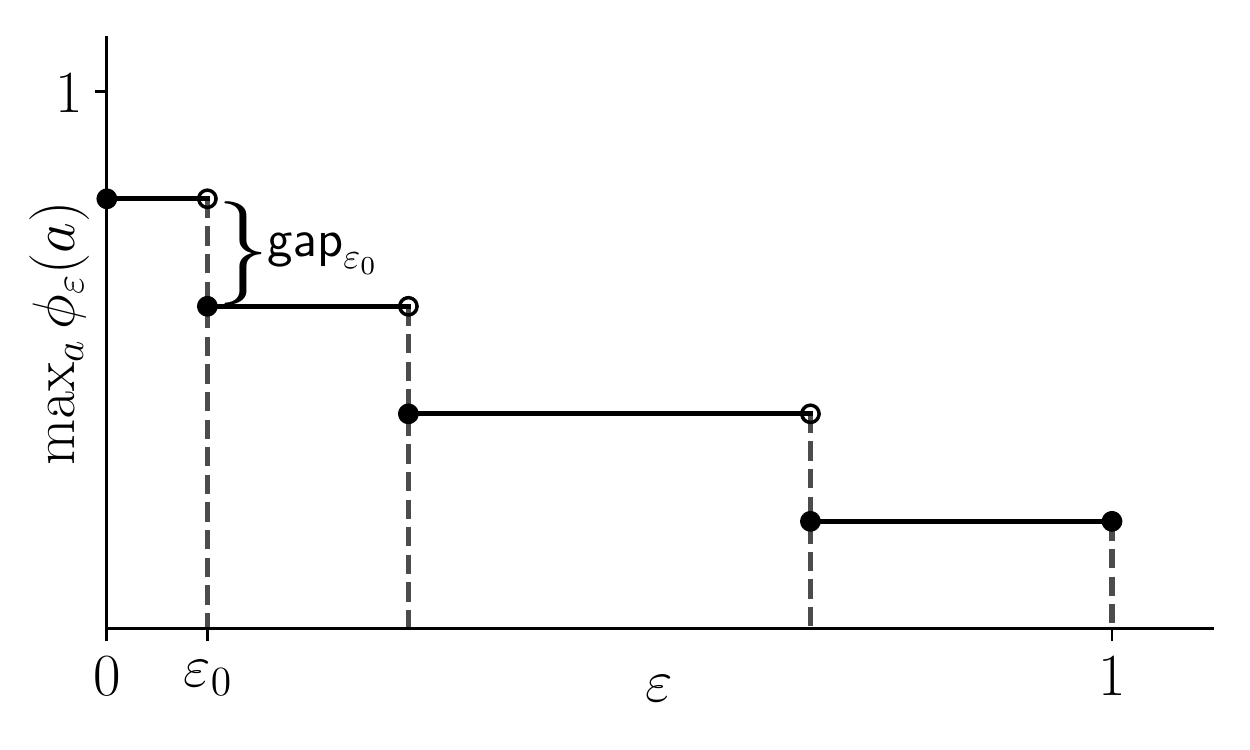}
  \vspace{-1em}
    \caption{Illustration of $\max_{a\in \mathcal{A}}\phi_\epsilon(a)$ and $\gap_\epsilon$ as a function of $\epsilon$. The quantity $\gap_{\epsilon_0}$ can be $\Omega(1)$ for arbitrarily small $\epsilon_0$.}
  \label{figure:gap}
  \vspace{-.5em}
\end{wrapfigure}
These definitions are similar to the vanilla $\BR_0$ and $\phi_0$ in~\eqref{equation:br0} and~\eqref{equation:phi0}, except that we allow any $\epsilon$-approximate best response to be considered as a valid response to the leader action. Observe we always have $\BR_\epsilon(a)\supseteq \BR_0(a)$ and $\phi_\epsilon(a) \le \phi_0(a)$. We then define the \emph{gap} of the game for any $\epsilon\in(0,1)$ as
\begin{align}
  \label{equation:gap}
  \gap_\epsilon \defeq \max_{a\in\mc{A}} \phi_0(a) - \max_{a\in\mc{A}} \phi_\epsilon(a) \ge 0.
\end{align}
This $\gap_\epsilon$ is discontinuous in $\epsilon$ in general, and can be as large as $\Theta(1)$ for any $\epsilon>0$ without additional assumptions on the relation between $\mu_1$ and $\mu_2$\footnote{This $\gap_\epsilon$ could be small when $\mu_1$ and $\mu_2$ have certain relations, such as zero-sum or cooperative structure. See Appendix~\ref{appendix:gap} for more discussions.}.
See Figure~\ref{figure:gap} for an illustration for a typical $\max_{a \in \mc{A}}\phi_\epsilon(a)$ against $\epsilon$.

With the definition of the gap, we are now ready to state our main result, which shows that it is possible to sample-efficiently learn Stackelberg Equilibria with value up to $(\gap_\epsilon+\epsilon)$. The proof can be found in Appendix~\ref{appendix:proof-nested-bandit}.

\begin{theorem}[Learning Stackelberg in bandit games]
  \label{theorem:nested-bandit}
  For any bandit game and $\epsilon\in(0,1)$, Algorithm~\ref{algorithm:bandit} outputs $(\what{a}, \what{b})$ such that with probability at least $1-\delta$,
  \begin{align*}
    & \phi_0(\what{a}) \ge \phi_{\epsilon/2}(\what{a}) \ge \max_{a\in\mc{A}} \phi_0(a) - \gap_\epsilon - \epsilon, \\
    & \mu_2(\what{a}, \what{b}) \ge \max_{b'\in\mc{B}} \mu_2(\what{a}, b') - \epsilon
  \end{align*}
  with $n = \wt{O}\paren{ AB/ \epsilon^2}$ samples, where $\wt{O}(\cdot)$ hides log factors. Further, the algorithm runs in $O(n)=\wt{O}(AB/\epsilon^2)$ time.
\end{theorem}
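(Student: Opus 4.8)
The plan is to design an algorithm that first uses uniform exploration to estimate all mean rewards $\mu_1(a,b),\mu_2(a,b)$ to within accuracy $\epsilon/8$ (say) with high probability, and then to emulate the "pessimistic tie-breaking" computation of $\phi_0$ on the \emph{empirical} game, but with an inflated best-response tolerance to absorb the estimation error. Concretely: allocate $n/(AB)$ pulls to each action pair, forming estimates $\what\mu_i(a,b)$ with $|\what\mu_i - \mu_i|\le \epsilon/8$ uniformly, which by Hoeffding plus a union bound over $AB$ pairs needs $n = \wt O(AB/\epsilon^2)$ samples. Define the empirical best-response set at level $\epsilon/4$, namely $\what\BR(a)\defeq\{b:\what\mu_2(a,b)\ge\max_{b'}\what\mu_2(a,b')-\epsilon/4\}$, the empirical value $\what\phi(a)\defeq\min_{b\in\what\BR(a)}\what\mu_1(a,b)$, and output $\what a\in\argmax_a\what\phi(a)$ together with $\what b\defeq\argmax_b\what\mu_2(\what a,b)$ (the empirically best follower action at $\what a$). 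The running time is clearly $O(n)$ since everything is a pass over the estimates.

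The first key step is a \emph{sandwiching} lemma relating $\what\BR(a)$ to the true approximate best-response sets: on the good event, $\BR_0(a)\subseteq\what\BR(a)\subseteq\BR_{\epsilon/2}(a)$ for every $a$. The left inclusion holds because any true best response $b$ has $\what\mu_2(a,b)\ge\mu_2(a,b)-\epsilon/8=\max_{b'}\mu_2(a,b')-\epsilon/8\ge\max_{b'}\what\mu_2(a,b')-\epsilon/4$; the right inclusion holds because $b\in\what\BR(a)$ forces $\mu_2(a,b)\ge\what\mu_2(a,b)-\epsilon/8\ge\max_{b'}\what\mu_2(a,b')-\epsilon/4-\epsilon/8\ge\max_{b'}\mu_2(a,b')-\epsilon/2$. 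The second step combines this with the monotonicity already noted in the excerpt ($\BR_\epsilon(a)\supseteq\BR_0(a)$, $\phi_\epsilon(a)\le\phi_0(a)$): since $\what\BR(a)$ lies between $\BR_0(a)$ and $\BR_{\epsilon/2}(a)$ and minimizing $\mu_1$ over a larger set can only decrease the value, we get $\phi_{\epsilon/2}(a)\le \min_{b\in\what\BR(a)}\mu_1(a,b)\le\phi_0(a)$, and then using $|\what\mu_1-\mu_1|\le\epsilon/8$ over the (nonempty) set $\what\BR(a)$ yields $|\what\phi(a)-\min_{b\in\what\BR(a)}\mu_1(a,b)|\le\epsilon/8$, hence $\phi_{\epsilon/2}(a)-\epsilon/8\le\what\phi(a)\le\phi_0(a)+\epsilon/8$ for all $a$.

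Now chain these bounds at $\what a$ and at the true Stackelberg action $a_\star$. From the display above at $a_\star$: $\what\phi(a_\star)\ge\phi_{\epsilon/2}(a_\star)-\epsilon/8$. Since $\what a$ maximizes $\what\phi$: $\what\phi(\what a)\ge\what\phi(a_\star)\ge\phi_{\epsilon/2}(a_\star)-\epsilon/8$. From the display above at $\what a$: $\phi_{\epsilon/2}(\what a)\ge\what\phi(\what a)-\epsilon/8\ge\phi_{\epsilon/2}(a_\star)-\epsilon/4\ge\max_{a}\phi_\epsilon(a)-\epsilon/4$ (using $\phi_{\epsilon/2}\ge\phi_\epsilon$), which by the definition $\gap_\epsilon=\max_a\phi_0(a)-\max_a\phi_\epsilon(a)$ equals $\max_a\phi_0(a)-\gap_\epsilon-\epsilon/4\ge\max_a\phi_0(a)-\gap_\epsilon-\epsilon$; and $\phi_0(\what a)\ge\phi_{\epsilon/2}(\what a)$ is the stated monotonicity. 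For the follower guarantee, $\mu_2(\what a,\what b)\ge\what\mu_2(\what a,\what b)-\epsilon/8=\max_{b'}\what\mu_2(\what a,b')-\epsilon/8\ge\max_{b'}\mu_2(\what a,b')-\epsilon/4\ge\max_{b'}\mu_2(\what a,b')-\epsilon$. Taking $\delta$ through the Hoeffding union bound gives the claimed probability.

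The main obstacle is not the estimation (a routine concentration argument) but getting the \emph{tolerance bookkeeping} exactly right: the empirical best-response threshold must be loose enough that $\BR_0\subseteq\what\BR$ (so we never wrongly discard the true best response and thus never overestimate $\phi_0$ beyond sampling error) yet tight enough that $\what\BR\subseteq\BR_{\epsilon/2}$ (so we never include a spurious response that drags the leader's value below $\phi_{\epsilon/2}$). The choice of $\epsilon/8$ estimation accuracy against an $\epsilon/4$ empirical threshold is what makes both inclusions hold simultaneously; any constant-factor mismatch breaks one side. I would present the sandwiching lemma as the crux and treat the rest as bookkeeping.
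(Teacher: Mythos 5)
Your overall strategy---uniform sampling, Hoeffding with a union bound, an empirical best-response set at an inflated threshold, and a sandwiching lemma---is exactly the paper's, but your tolerance bookkeeping anchors the sandwich at the wrong pair of sets, and this breaks the $\phi_{\epsilon/2}(\what a)$ half of the claim. Your sandwich is $\BR_0(a)\subseteq\what\BR(a)\subseteq\BR_{\epsilon/2}(a)$. From $\what\BR(a)\subseteq\BR_{\epsilon/2}(a)$ you can only conclude $\min_{b\in\what\BR(a)}\mu_1(a,b)\ge\phi_{\epsilon/2}(a)$, i.e.\ $\what\phi(a)\ge\phi_{\epsilon/2}(a)-\epsilon/8$; the step ``from the display above at $\what a$: $\phi_{\epsilon/2}(\what a)\ge\what\phi(\what a)-\epsilon/8$'' reads that inequality backwards---your display only yields $\phi_{\epsilon/2}(\what a)\le\what\phi(\what a)+\epsilon/8$ and $\phi_0(\what a)\ge\what\phi(\what a)-\epsilon/8$. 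The gap is real, not presentational: a follower action $b$ with $\mu_2(\what a,b)$ within $\epsilon/2$ of optimal but $\mu_1(\what a,b)$ tiny can lie in $\BR_{\epsilon/2}(\what a)\setminus\what\BR(\what a)$, so your algorithm never accounts for it and $\phi_{\epsilon/2}(\what a)$ can be far below $\what\phi(\what a)$. Concretely, take $A=B=2$ with $\mu_2(a_1,b_1)=0.9$, $\mu_2(a_1,b_2)=0.9-0.4\epsilon$, $\mu_1(a_1,b_1)=1$, $\mu_1(a_1,b_2)=0$, and $\mu_1(a_2,\cdot)\equiv 0.6$, $\mu_2(a_2,\cdot)\equiv 0.9$. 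Then $\gap_\epsilon=0.4$ and the theorem demands $\phi_{\epsilon/2}(\what a)\ge 0.6-\epsilon$; but even with zero estimation error the empirical gap at $a_1$ is $0.4\epsilon>\epsilon/4$, so $\what\BR(a_1)=\set{b_1}$, $\what\phi(a_1)\approx 1>\what\phi(a_2)\approx 0.6$, your algorithm outputs $\what a=a_1$, and $\phi_{\epsilon/2}(a_1)=0$. (A secondary, fixable slip: $\phi_{\epsilon/2}(a_\star)\ge\max_a\phi_\epsilon(a)$ is false since $a_\star$ maximizes $\phi_0$, not $\phi_\epsilon$; you should instead chain through $\what\phi(\what a)=\max_a\what\phi(a)\ge\max_a\phi_{\epsilon/2}(a)-\epsilon/8$.)

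The repair is to raise the empirical threshold so that the sandwich becomes $\BR_{\epsilon/2}(a)\subseteq\what\BR(a)\subseteq\BR_{\epsilon}(a)$---e.g.\ threshold $3\epsilon/4$ against estimation error $\epsilon/8$, which is what Algorithm~\ref{algorithm:bandit} does. The containment $\what\BR(\what a)\supseteq\BR_{\epsilon/2}(\what a)$ then gives $\phi_{\epsilon/2}(\what a)=\min_{b\in\BR_{\epsilon/2}(\what a)}\mu_1(\what a,b)\ge\min_{b\in\what\BR(\what a)}\mu_1(\what a,b)$ (minimizing over a subset can only increase the value), which is exactly the direction needed to lower-bound $\phi_{\epsilon/2}(\what a)$ by the quantity your algorithm maximizes; the other containment $\what\BR(a)\subseteq\BR_\epsilon(a)$ ties $\max_a\what\phi(a)$ to $\max_a\phi_\epsilon(a)=\max_a\phi_0(a)-\gap_\epsilon$ and also delivers the follower guarantee. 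As written, your argument does establish the strictly weaker conclusion $\phi_0(\what a)\ge\max_a\phi_0(a)-\gap_\epsilon-\epsilon$ (modulo the $a_\star$ slip), but not the theorem as stated, whose $\phi_{\epsilon/2}(\what a)$ guarantee is genuinely false for your algorithm.
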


\paragraph{Implications; Overview of algorithm}
Theorem~\ref{theorem:nested-bandit} shows that it is possible to learn $\what{a}$
that maximizes $\phi_0(a)$ up to $(\gap_\epsilon+\epsilon)$ accuracy, using $\wt{O}(AB/\epsilon^2)$ samples. The quantity $\gap_\epsilon$ is not bounded and can be as large as $\Theta(1)$ for any $\epsilon$ (see Lemma~\ref{lemma:gap} for a formal statement); however the gap is non-increasing as we decrease $\epsilon$. In the situation where for every $a$ the best follower action for $\mu_2(a,\cdot)$ is at least $\epsilon_0$-better than the second best action, then for $\epsilon<\epsilon_0$ we have $\gap_\epsilon=0$ and Theorem~\ref{theorem:nested-bandit} implies an $\epsilon$-optimal Stackelberg guarantee.
In general, Theorem~\ref{theorem:nested-bandit} presents a ``best-effort'' positive result for learning Stackelberg under this relaxed notion of optimality. To the best of our knowledge, this is the first result for sample-efficient learning of Stackelberg equilibrium in general-sum games with noisy bandit feedbacks. We remark that Theorem~\ref{theorem:nested-bandit} also provides a near-optimality guarantee for $\phi_{\epsilon/2}(\what{a})$ which is slightly stronger than $\phi_0$ (since $\phi_{\epsilon/2}(\what{a})\le \phi_0(\what{a})$), and guarantees the learned $\what{b}$ is indeed an $\epsilon$-approximate best response of $\what{a}$.

From a more practical point of view, Theorem~\ref{theorem:nested-bandit} (and our later results on bandit-RL games and linear bandit games) spells out concretely the sample size required to learn an $\epsilon$-approximate Stackelberg, in terms of the scaling with problem parameters. For instance, in the AI Economist example, $A$ is the number of tax rate choices for the government, and $B$ is the number of actions for the citizen, and our results show that there exist an algorithm with sample complexity polynomial in $A$, $B$, and $1/\epsilon$.

The main step in Algorithm~\ref{algorithm:bandit} is to construct approximate best response sets $\what{\BR}_{3\epsilon/4}(a)$ for all $a\in\mc{A}$ based on the empirical estimates of the rewards. Through concentration, we argue that $\what{\BR}_{3\epsilon/4}(a)$ is a good approximation of the true best response sets in the sense that $\BR_{\epsilon/2}(a) \subseteq \what{\BR}_{3\epsilon/4}(a) \subseteq \BR_{\epsilon}(a)$ holds for all $a\in\mc{A}$, from which the Stackelberg guarantee follows.

\paragraph{Irreducibility to Nash-finding algorithms}
We also remark that our bandit game is equivalent to a turn-based general-sum Markov game with two steps, $A$ states, and $(A,B)$ actions~\citep{bai2020provable}. Further, the Stackelberg equilibrium $a_\star$ along with a follower policy that plays the best response (with pessimistic tie-breaking) constitutes a Nash equilibrium for that Markov game (see Appendix~\ref{appendix:relationship-turn-based-mg} for a formal statement and proof). However, existing Nash-finding algorithms for general-sum Markov games such as {\tt Multi-Nash-VI}~\citep{liu2020sharp} \emph{do not} imply an algorithm for finding the Stackelberg equilibrium. This is because general-sum games have multiple Nash equilibria (with different values) in general~\citep{roughgarden2010algorithmic}, and these existing Nash-finding algorithms cannot pre-specify which Nash to output.

\begin{algorithm}[t]
  \caption{Learning Stackelberg in bandit games}
  \label{algorithm:bandit}
  \begin{algorithmic}[1]
    \REQUIRE Target accuracy $\epsilon>0$.
    \SET $N\setto C\log(AB/\delta)/\epsilon^2$ for some constant $C>0$.
    \STATE Query each $(a,b)\in\mc{A}\times \mc{B}$ for $N$ times and obtain $\{r_1^{(j)}(a,b), r_2^{(j)}(a,b)\}_{j=1}^N$.
    \STATE Construct empirical estimates of the means
      $\what{\mu}_i(a,b) = \frac{1}{N}\sum_{j=1}^N r_i^{(j)}(a,b)$ for $i=1,2$.
    \STATE Construct approximate best response sets and values for all $a\in\mc{A}$:
    \begin{align*}
      \what{\phi}_{3\epsilon/4}(a) \defeq \min_{b\in \what{\BR}_{3\epsilon/4}(a)} \what{\mu}_1(a, b),~~~\textrm{where}~~~\what{\BR}_{3\epsilon/4}(a) \defeq \set{b: \what{\mu}_2(a, b) \ge \max_{b'\in\mc{B}} \what{\mu}_2(a, b') - 3\epsilon/4}.
    \end{align*}
    \STATE Output $(\what{a}, \what{b})$, where
      $\what{a} = \argmax_{a\in \mc{A}} \what{\phi}_{3\epsilon/4}(a)$, $\what{b}=\argmin_{b\in\what{\BR}_{3\epsilon/4}(\what{a})}\what{\mu}_1(\what{a}, b)$.
  \end{algorithmic}
\end{algorithm}

\paragraph{Lower bound}
We accompany Theorem~\ref{theorem:nested-bandit} by an $\Omega(AB/\epsilon^2)$ sample complexity lower bound, showing that Theorem~\ref{theorem:nested-bandit} achieves the optimal sample complexity up to logarithmic factors.
\begin{theorem}[Lower bound for bandit games]
  \label{theorem:lower-bound-nested-bandit}
  There exists an absolute constant $c>0$ such that the following holds. For any $\epsilon\in(0,c)$, $g\in[0,c)$, any $A,B\ge 3$, and any algorithm that queries $N\le c\brac{AB/\epsilon^2}$ samples and outputs an estimate $\what{a}\in\mc{A}$, there exists a bandit game $M$ on which $\gap_\epsilon=g$ and the algorithm suffers from $(g+\epsilon)$ error:
  \begin{align*}
    \phi_{\epsilon/2}(\what{a}) \le \phi_0(\what{a}) \le \max_{a\in\mc{A}} \phi_0(a) - g - \epsilon
  \end{align*}
  with probability at least $1/3$.
\end{theorem}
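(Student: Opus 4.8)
The strategy is to construct a family of hard bandit games indexed by a hidden ``needle'' leader action and then run a change-of-measure (Le Cam / Fano) argument. Fix a constant $V^\star\in(0,1)$ and a reference game $M_0$ in which \emph{every} leader action is ``bad'' in a symmetric way, so that $\phi_0(a)=\phi_\epsilon(a)=V^\star-g-\epsilon$ for all $a$. For each $a^\star\in\mc A$ let $M^{(a^\star)}$ agree with $M_0$ on all leader actions except $a^\star$, at which a small perturbation of the \emph{follower's} reward means promotes one follower action to be the unique best response and a second one to be an $\epsilon$-approximate (but not exact) best response; the leader means at these two are $V^\star$ and $V^\star-g$, arranged so that $\phi_0(a^\star)=V^\star$ and $\phi_{\epsilon/2}(a^\star)=\phi_\epsilon(a^\star)=V^\star-g$. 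Then $\max_a\phi_0=V^\star$ and $\max_a\phi_\epsilon=V^\star-g$, so $\gap_\epsilon=g$ \emph{exactly} on $M^{(a^\star)}$, while any output $\what a\neq a^\star$ has $\phi_{\epsilon/2}(\what a)\le\phi_0(\what a)=V^\star-g-\epsilon$, i.e.\ it incurs the claimed $(g+\epsilon)$ error. Hence it suffices to show that no algorithm querying $N\le c\,AB/\epsilon^2$ samples can output $a^\star$ with probability exceeding $2/3$, uniformly over $a^\star\in\mc A$.

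\textbf{The construction.} The delicate point is to engineer the perturbation at $a^\star$ so that (i) $\gap_\epsilon=g$ holds on the nose; (ii) $M_0$ and $M^{(a^\star)}$ differ only at a constant number of cells $(a^\star,b)$, and at those cells the mean rewards differ by $\Theta(\epsilon)$ --- \emph{never} by anything of order $g$; and (iii) those cells sit at positions within $a^\star$ that are statistically invisible under $M_0$. Point (ii) is the heart of the matter: the $g$-gap must be carried by the \emph{best-response-set structure} (which follower action is the exact vs.\ the $\epsilon$-approximate best response), not by a large leader-reward gap. Concretely, both leader-mean values $V^\star$ and $V^\star-g$ already occur at $a^\star$ in $M_0$, so ``turning $a^\star$ good'' only \emph{swaps which cell is the best response}, via an $\Theta(\epsilon)$ change of follower means and an $\Theta(\epsilon)$ change that pushes the ``runner-up'' cell out of $\BR_\epsilon$; the leader means are left untouched --- this is the same hardness-of-exact-best-response phenomenon underlying Theorem~\ref{theorem:lower-bound-exact-stackelberg}. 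Point (iii) forces the perturbed cells to be hidden among a pool of $\Theta(B)$ follower arms that are pairwise identically distributed under $M_0$, so that detecting the perturbation at a single action is itself a $B$-arm, $\Theta(\epsilon)$-gap detection problem costing $\Omega(B/\epsilon^2)$ samples on that action. One also checks the $\BR_\epsilon$ thresholds at the boundary and keeps all means in $[0,1]$, which is why one works with $\epsilon,g\in(0,c)$ for a small absolute constant $c$ and $A,B\ge3$.

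\textbf{The information-theoretic step.} Given such a family, let $\Pr_{M_0},\Pr_{M^{(a^\star)}}$ denote the laws of the observed transcript after $N$ queries, and let $N_{a,b}$ be the (random) number of pulls of $(a,b)$. By Pinsker's inequality, $\Pr_{M^{(a^\star)}}[\what a=a^\star]\le \Pr_{M_0}[\what a=a^\star]+\sqrt{\tfrac12\,\mathrm{KL}(\Pr_{M_0}\,\|\,\Pr_{M^{(a^\star)}})}$, and the chain rule for KL gives $\mathrm{KL}(\Pr_{M_0}\,\|\,\Pr_{M^{(a^\star)}})\le O(\epsilon^2)\sum_{(a,b)}\E_{M_0}\!\big[N_{a,b}\,\bone\{(a,b)\text{ perturbed in }M^{(a^\star)}\}\big]$, which only involves the $O(1)$ perturbed cells at $a=a^\star$. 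Since those cells are uniform over a size-$\Theta(B)$ pool that is invisible under $M_0$ (hence independent of the transcript there), the relevant expectation is at most $O(1/B)\,\E_{M_0}[N_{a^\star}]$ with $N_{a^\star}=\sum_bN_{a^\star,b}$. Summing over $a^\star$, using $\sum_{a^\star}\E_{M_0}[N_{a^\star}]=\E_{M_0}[N]\le N$, Cauchy--Schwarz over the $A$ choices of $a^\star$, and $\sum_{a^\star}\Pr_{M_0}[\what a=a^\star]\le1$, we obtain $\sum_{a^\star}\Pr_{M^{(a^\star)}}[\what a=a^\star]\le 1+O(\epsilon)\sqrt{A N/B}$. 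If every term were $\ge2/3$ this forces $N\ge\Omega(AB/\epsilon^2)$; hence for $N\le c\,AB/\epsilon^2$ with $c$ small there is some $a^\star$ with $\Pr_{M^{(a^\star)}}[\what a\ne a^\star]\ge1/3$, which by the first paragraph yields the stated $(g+\epsilon)$ error with probability $\ge1/3$.

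\textbf{Main obstacle.} The real work is the construction, not the (routine) change-of-measure computation: one must simultaneously (a) keep $\gap_\epsilon$ exactly equal to the prescribed $g$ across the whole family; (b) prevent the $g$-gap from producing \emph{any} cheaply detectable signal --- a genuine tension, since a gap of size $g$ intrinsically wants to surface as a reward difference, so it must be hidden behind an $\Theta(\epsilon)$ near-tie in the follower's rewards, and one must verify this does not accidentally leave a low-variance ``coherent'' signal (which would collapse the bound to $\Omega(A/\epsilon^2)$); and (c) place the perturbed cells at hidden positions within each action so the bound scales with $AB$ rather than $A$, which is what ties the per-action best-response-identification lower bound ($\Omega(B/\epsilon^2)$) to the needle-over-$A$ lower bound. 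Secondary points are handling the $A,B\ge3$ and small-$\epsilon,g$ regimes, and noting that both phenomena in the $(g+\epsilon)$ lower bound --- the $g$ part (echoing Theorem~\ref{theorem:lower-bound-exact-stackelberg}) and the $\epsilon$ part together with the $AB/\epsilon^2$ scaling (a best-arm-identification-type bound) --- are extracted from the \emph{same} single family.
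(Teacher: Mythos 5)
Your proposal matches the paper's proof in all essentials: the same needle construction in which the leader's means are identical across all instances (carrying the values $V^\star$, $V^\star-g$, and $V^\star-g-\epsilon$ in three blocks of follower actions), the $g$-gap is encoded purely by an $\Theta(\epsilon)$ perturbation of the \emph{follower's} means at two cells hidden uniformly within blocks of size $\Theta(B)$, and the same divergence-decomposition/Pinsker/averaging argument yields the $\Omega(AB/\epsilon^2)$ bound. The only cosmetic differences are that you phrase the final step as a sum of success probabilities with Cauchy--Schwarz rather than the paper's mixture-prior average, and your claim that $\phi_{\epsilon/2}(a^\star)=V^\star-g$ is slightly off relative to the paper's placement of the runner-up (at distance $3\epsilon/4$), but neither affects the conclusion.
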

This lower bound shows the tightness of Theorem~\ref{theorem:nested-bandit}, and suggests that $(\gap_\epsilon+\epsilon)$ suboptimality is perhaps a sensible learning goal, as for any algorithm and any value of $g\ge 0$ there exists a game with $\gap_\epsilon=g$, on which the algorithm has to suffer from $(g+\epsilon)$ error, if the number of samples is at most $O(AB/\epsilon^2)$. The proof of Theorem~\ref{theorem:lower-bound-nested-bandit} is deferred to Appendix~\ref{appendix:proof-lower-bound-nested-bandit}.

\section{Bandit-RL games}
\label{section:rl}
In this section, we investigate learning Stackelberg equilibrium in bandit-RL games, in which each leader's action determines an episodic Markov Decision Process (MDP) for the follower. This setting extends the two-player bandit games by allowing the follower to play sequentially, and has strong practical motivations in particular in policy making problems involving sequential plays for the follower, such as the optimal taxation problem in the AI Economist~\citep{zheng2020ai}.

\paragraph{Setting}
A bandit-RL game is described by the leader's action set $\mc{A}$ (with $|\mc{A}|=A$), and a family of MDPs $M=\set{M^a:a\in\mc{A}}$. Each leader action $a\in\mc{A}$ determines an episodic MDP $M^a=(H, \mc{S}, \mc{B}, \P^a, r_{1,h}(a,\cdot,\cdot), r_{2,h}(a,\cdot,\cdot))$ that contains $H$ steps, $S$ states, $B$ actions, with two reward functions $r_1$ and $r_2$. In each episode of play,
\begin{itemize}[leftmargin=1.5pc]
\item The leader plays action $a\in\mc{A}$.
\item The follower sees this action and enters the MDP $M^a$. She observes the deterministic\footnote{The general case where $s_1$ is stochastic reduces to the deterministic case by adding a step $h=0$ with a single deterministic initial state $s_0$, which only increases the horizon of the game by 1.} initial state $s_1$, and plays in $M^a$ with exploration feedback for one episode.
\item While the follower plays in the MDP, she observes reward $r_{2,h}(a,s_h,b_h)$, whereas the leader also observes her own reward $r_{1,h}(a,s_h,b_h)$.
\end{itemize}
We let $\pi^b$ denote a policy for the follower, and let $V_1(a, \pi^b)$ and $V_2(a, \pi^b)$ denote its value functions (in $M^a$) for the leader and the follower respectively.

Similar as in bandit games, we define the $\epsilon$-approximate best-response set $\BR_{\epsilon}(a)$ and the $\epsilon$-approximate $\phi$-function $\phi_\epsilon(a)$ for all $\epsilon\ge 0$ as
\begin{align*}
  \phi_\epsilon(a) \defeq \min_{\pi^b \in \BR_\epsilon(a)} V_1(a, \pi^b),~~~\textrm{where}~~~\BR_\epsilon(a) \defeq \set{\pi^b: V_2(a, \pi^b) \ge \max_{\wt{\pi}^{b}} V_2(a, \wt{\pi}^{b}) - \epsilon }.
\end{align*}
Define $\gap_\epsilon=\max_{a\in\mc{A}} \phi_0(a) - \max_{a\in\mc{A}}\phi_\epsilon(a)$ similarly as in~\eqref{equation:gap}.
We are interested in the number of episodes in order to find a $(\gap_\epsilon+\epsilon)$ near-optimal Stackelberg equilibrium.

\subsection{Algorithm description}
At a high level, our algorithm for bandit-RL games is similar as for bandit games -- query each leader action $a\in\mc{A}$ sufficiently many times, let the follower learn the best response (i.e. best policy for the MDP $M^a$) for each $a\in\mc{A}$, and then choose the leader action that maximizes the best response value function. This requires solving
\begin{equation}
  \label{equation:rl-alg-prototype}
  \begin{aligned}
    & \argmax_{a\in\mc{A}} \phi_{3\epsilon/4}(a) \defeq \argmax_{a\in\mc{A}} \min_{\pi^b\in\what{\BR}_{3\epsilon/4}(a)} \what{V}_1(a, \pi^b), \\
    & \what{\BR}_{3\epsilon/4}(a) \defeq \set{\pi^b: \what{V}_2(a, \pi^b) \ge \max_{\wt{\pi}^b} \what{V}_2(a, \wt{\pi}^b) - 3\epsilon/4},
  \end{aligned}
\end{equation}
where $\what{V}_1$ and $\what{V}_2$ are empirical estimates of the true value functions.

Two technical challenges emerge as we instantiate~\eqref{equation:rl-alg-prototype}. First, the follower not only needs to find her own best policy during the exploration phase, but also has to accurately estimate both her own and the leader's reward over the entire approximate best response set $\what{\BR}_{3\epsilon/4}$ so as to make sure the estimates $\what{V}_i(a,\pi^b)$ ($i=1,2$) reliable. Standard fast PAC-exploration algorithms such as those in~\citep{azar2017minimax,jin2018q} do not provide such guarantees. We resolve this by applying the \emph{reward-free learning algorithm} of~\citet{jin2020reward} for the follower to explore the environment efficiently while providing value concentration guarantees for multiple rewards and policies. We remark that we slightly generalize the guarantees of~\citep{jin2020reward} to the situation where the rewards are random and have to be estimated from samples.

Second, the problem $\min_{\pi^b\in\what{\BR}_{3\epsilon/4}(a)} \what{V}_1(a, \pi^b)$ in~\eqref{equation:rl-alg-prototype} requires minimizing a value function over the near-optimal policy set of another value function. We build on the linear programming reformulation in the constrained MDP literature~\citep{altman1999constrained} to translate~\eqref{equation:rl-alg-prototype} into a linear program \wcbr, which adopts efficient solution in ${\rm poly}(HSB)$ time~\citep{boyd2004convex}. (the description of this subroutine can be found in Algorithm~\ref{algorithm:lp-subroutine} in Appendix~\ref{appendix:wcbr}). Our full algorithm is described in Algorithm~\ref{algorithm:two-level-rl}.

\begin{algorithm}[t]
  \caption{Learning Stackelberg in bandit-RL games}
  \label{algorithm:two-level-rl}
  \begin{algorithmic}[1]
    \REQUIRE Target accuracy $\epsilon>0$.
    \FOR{$a\in\mc{A}$}
    \STATE Let the leader pull arm $a\in\mc{A}$ and the follower run the {\tt Reward-Free RL-Explore} algorithm for $N\setto \wt{O}(H^5S^2B/\epsilon^2+H^7S^4B/\epsilon)$ episodes, and obtain model estimate $\what{M}^a$.
    \STATE Let $(\what{V}_1(a,\cdot),\what{V}_2(a,\cdot))$ denote the value functions for the model $\what{M}^a$.
    \STATE Compute the empirical best response value $\what{V}_2^\star(a) \defeq \max_{\pi^b} \what{V}_2(a, \pi^b)$ by any optimal planning algorithm (e.g. value iteration) on the empirical MDP $\what{M}^a$.
    \STATE Solve the following program
    \begin{equation}
      \label{equation:worst-case-vanilla-formulation}
      \begin{aligned}
        & {\rm minimize}_{\pi^b}~~ \what{V}_1(a, \pi^b)~~~ {\rm s.t.}~~\pi^b \in \what{\BR}_{3\epsilon/4}(a) \defeq \set{\pi^b: \what{V}_2(a, \pi^b) \ge \what{V}_2^\star(a) - 3\epsilon/4}
      \end{aligned}
    \end{equation}
    by subroutine $(\what{\pi}^{b, (a)}, \what{\phi}_{3\epsilon/4}(a)) \setto \wcbr(\what{M}^a, \what{V}_2^\star(a) - 3\epsilon/4)$.
    \ENDFOR
    \OUTPUT $(\what{a}, \what{\pi}^b)$ where $\what{a} \setto \argmax_{a\in\mc{A}} \what{\phi}_{3\epsilon/4}(a)$ and $\what{\pi}^b\setto \what{\pi}^{b, (\what{a})}$.
  \end{algorithmic}
\end{algorithm}

\subsection{Main result}
We now state our theoretical guarantee for Algorithm~\ref{algorithm:two-level-rl}. The proof can be found in Appendix~\ref{appendix:proof-two-level-rl}.
\begin{theorem}[Learning Stackelberg in bandit-RL games]
  \label{theorem:two-level-rl}
  For any bandit-RL game and sufficiently small $\epsilon\le O(1/H^2S^2)$, Algorithm~\ref{algorithm:two-level-rl} with $n=\wt{O}(H^5S^2AB/\epsilon^2+H^7S^4AB/\epsilon)$ episodes of play can return $(\what{a}, \what{\pi}^b)$ such that with probability at least $1-\delta$,
  \begin{align*}
    & \phi_0(\what{a}) \ge \phi_{\epsilon/2}(\what{a}) \ge \max_{a\in\mc{A}} \phi_0(a) - \gap_\epsilon - \epsilon, \\
    & V_2(\what{a}, \what{\pi}^b) \ge \max_{\wt{\pi}^b} V_2(\what{a}, \wt{\pi}^b) - \epsilon,
  \end{align*}
  where $\wt{O}(\cdot)$ hides $\log(HSAB/\delta\epsilon)$ factors. Further, the algorithm runs in ${\rm poly}(HSAB/\delta\epsilon)$ time.
\end{theorem}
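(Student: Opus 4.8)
Let me think about how to prove the bandit-RL version. The structure should mirror the bandit-game proof (Theorem~\ref{theorem:nested-bandit}): the core idea is to show that the empirical approximate best-response sets $\what{\BR}_{3\epsilon/4}(a)$ are sandwiched between $\BR_{\epsilon/2}(a)$ and $\BR_{\epsilon}(a)$, and that the empirical values are close to the true ones on all relevant policies, so that maximizing $\what\phi_{3\epsilon/4}$ gives value at least $\max_a \phi_{\epsilon/2}(a) \ge \max_a \phi_0(a) - \gap_\epsilon$ up to lower-order terms.

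The key technical ingredient is a uniform value-estimation guarantee for the follower's MDP. The plan: invoke the reward-free exploration algorithm of Jin et al. (the {\tt Reward-Free RL-Explore} subroutine), which after $N = \wt O(H^5S^2B/\epsilon^2 + H^7S^4B/\epsilon)$ episodes returns an estimated model $\what M^a$ such that, simultaneously for all policies $\pi^b$ and all reward functions bounded in $[0,1]$, the estimated value $\what V(a,\pi^b)$ is within $\epsilon/16$ (say) of the true value $V(a,\pi^b)$ — with high probability. I would need the mild generalization mentioned in the text: the rewards $r_{1,h}, r_{2,h}$ are themselves random and must be estimated, which adds an $\wt O(\sqrt{H^2 S B/N})$-type term from a Hoeffding/Bernstein bound on each $(h,s,b)$ reward mean, easily absorbed into the stated $N$. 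So after the exploration phase for action $a$, we have $\sup_{\pi^b} |\what V_i(a,\pi^b) - V_i(a,\pi^b)| \le \epsilon/16$ for $i=1,2$. Take a union bound over the $A$ leader actions; this is where the factor $A$ enters the sample complexity, giving $n = A\cdot N$.

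Given this uniform closeness, the argument is then essentially the same as in the bandit case. Step 1: Since $|\what V_2(a,\pi^b) - V_2(a,\pi^b)|\le \epsilon/16$ uniformly, the empirical optimum $\what V_2^\star(a)$ is within $\epsilon/16$ of $\max_{\pi^b}V_2(a,\pi^b)$, and hence the empirical constraint $\what V_2(a,\pi^b) \ge \what V_2^\star(a) - 3\epsilon/4$ implies $V_2(a,\pi^b) \ge \max_{\wt\pi^b}V_2(a,\wt\pi^b) - 3\epsilon/4 - \epsilon/8 \ge \dots - \epsilon$, i.e. $\what\BR_{3\epsilon/4}(a)\subseteq \BR_\epsilon(a)$; conversely $\BR_{\epsilon/2}(a)\subseteq\what\BR_{3\epsilon/4}(a)$. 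Step 2: For the output $\what a = \argmax_a \what\phi_{3\epsilon/4}(a)$, use $\BR_{\epsilon/2}(\what a)\subseteq\what\BR_{3\epsilon/4}(\what a)$ and value closeness to get $\phi_{\epsilon/2}(\what a) \ge \what\phi_{3\epsilon/4}(\what a) - \epsilon/8$; then since $\what a$ is the empirical maximizer, compare against $a_\star$ (an exact maximizer of $\phi_0$, which also lies in every $\BR_{\epsilon'}$), giving $\what\phi_{3\epsilon/4}(\what a)\ge \what\phi_{3\epsilon/4}(a_\star) \ge \phi_0(a_\star) - \epsilon/4$ using $\what\BR_{3\epsilon/4}(a_\star)\subseteq\BR_\epsilon(a_\star)$ and $\phi_\epsilon(a_\star)\ge\max_a\phi_\epsilon(a) \ge \max_a\phi_0(a) - \gap_\epsilon$. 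Combining, $\phi_0(\what a)\ge\phi_{\epsilon/2}(\what a)\ge\max_a\phi_0(a) - \gap_\epsilon - \epsilon$. The follower-optimality claim $V_2(\what a,\what\pi^b)\ge\max_{\wt\pi^b}V_2(\what a,\wt\pi^b)-\epsilon$ follows directly since $\what\pi^{b,(\what a)}$ satisfies the empirical constraint, hence lies in $\BR_\epsilon(\what a)$.

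The main obstacle — and the part that genuinely differs from the bandit case — is Step 0: establishing the uniform-over-all-policies, uniform-over-the-two-rewards value concentration from the reward-free exploration, with the right polynomial dependence on $H$ and $S$, and handling the random (rather than known) rewards. One must be careful that the program~\eqref{equation:worst-case-vanilla-formulation}, i.e. minimizing $\what V_1$ over the empirical near-optimal set of $\what V_2$, is solved on $\what M^a$, so the error in its value transfers to the true game only through the uniform closeness bound — this is exactly why an all-policy guarantee (not just PAC-optimal-policy) is needed, and why the constrained-MDP LP reformulation (subroutine \wcbr, correctness and $\mathrm{poly}(HSB)$ runtime via~\cite{altman1999constrained,boyd2004convex}) must be invoked to argue \eqref{equation:worst-case-vanilla-formulation} is solved exactly on the empirical model. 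The small-$\epsilon$ condition $\epsilon\le O(1/H^2S^2)$ is presumably what makes the lower-order $\wt O(H^7S^4B/\epsilon)$ term in $N$ dominated appropriately and/or what the reward-free guarantee of~\cite{jin2020reward} requires; I would track constants to confirm this is the only place it is needed. The runtime bound is immediate: $A$ calls to reward-free exploration ($\mathrm{poly}(HSB)$ per episode), one planning call, and one LP solve per leader action.
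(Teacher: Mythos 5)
Your overall route is the same as the paper's: a uniform, all-policy value-concentration bound for both rewards obtained from reward-free exploration (union-bounded over the $A$ leader actions), the sandwich $\BR_{\epsilon/2}(a)\subseteq\what{\BR}_{3\epsilon/4}(a)\subseteq\BR_\epsilon(a)$, the LP reformulation for exactly solving~\eqref{equation:worst-case-vanilla-formulation} on the empirical model, and then the comparison argument from the bandit case. Two points need repair.

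First, your Step 2 chain is wrong as written. You compare $\what{a}$ against $a_\star\in\argmax_{a}\phi_0(a)$ and assert $\what{\phi}_{3\epsilon/4}(a_\star)\ge\phi_0(a_\star)-\epsilon/4$ and $\phi_\epsilon(a_\star)\ge\max_{a}\phi_\epsilon(a)$. Both inequalities are false in general: $\what{\BR}_{3\epsilon/4}(a_\star)$ contains $\epsilon$-approximate best responses whose $V_1$-value can be far below $\phi_0(a_\star)$ (this is exactly the phenomenon that makes $\gap_\epsilon$ nonzero, cf.\ Lemma~\ref{lemma:gap}), and the maximizer of $\phi_0$ need not maximize $\phi_\epsilon$. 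The correct comparator is $\wt{a}\in\argmax_{a}\phi_\epsilon(a)$ — equivalently, compare against every $a$ and take $\max_a\phi_\epsilon(a)$ at the end, as the paper does — which gives $\what{\phi}_{3\epsilon/4}(\what{a})\ge\what{\phi}_{3\epsilon/4}(\wt{a})\ge\phi_\epsilon(\wt{a})-\epsilon/4=\max_{a}\phi_0(a)-\gap_\epsilon-\epsilon/4$ using only $\what{\BR}_{3\epsilon/4}(\wt{a})\subseteq\BR_\epsilon(\wt{a})$ and the uniform value bound. Your final displayed conclusion is correct, but the route you give does not prove it.

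Second, the reward-estimation step you defer (``a Hoeffding/Bernstein bound on each $(h,s,b)$ reward mean, easily absorbed into the stated $N$'') is where most of the new work in this theorem lives, and a per-entry Hoeffding bound alone does not yield a uniform-over-policies value error: a policy may concentrate its visitation on state–action pairs that the data-gathering distribution rarely visits, so entrywise reward errors do not sum to $O(\epsilon)$ without a coverage argument. The paper's Lemma~\ref{lemma:rf-reward-estimate} uses the specific guarantee of the reward-free data-gathering distribution $\mu_h$ — namely $\max_{\pi^b}\P^{\pi^b}_h(s,b)/\mu_h(s,b)\le 2SBH$ on $\delta$-significant states — together with a split into significant and insignificant states, a Cauchy–Schwarz step, and a multiplicative Chernoff bound on the visitation counts $N_h(s,b)$, to convert entrywise reward errors into the $\sup_{\pi^b}$ value bound. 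This, together with the exploration-phase cost of~\citet{jin2020reward}, is the source of the $H^7S^4AB/\epsilon$ term and the condition $\epsilon\le O(1/H^2S^2)$ (which makes the $1/\epsilon^2$ term dominate); your guess about the role of that condition is correct, but the lemma itself is a necessary ingredient rather than a routine absorption into $N$.
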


\paragraph{Sample complexity, relationship with reward-free RL}
Theorem~\ref{theorem:two-level-rl} shows that for bandit-RL games, the approximate Stackelberg Equilibrium (with value optimal up to $\gap_\epsilon+\epsilon$) can be efficiently found with polynomial sample complexity and runtime. In particular, (for small $\epsilon$) the leading term in the sample complexity scales as $\wt{O}(H^5S^2AB/\epsilon^2)$. Since bandit-RL games include bandit games as a special case, the $\Omega(AB/\epsilon^2)$ lower bound for bandit games (Theorem~\ref{theorem:lower-bound-nested-bandit}) apply here and implies that the $A,B$ dependence in Theorem~\ref{theorem:two-level-rl} is tight, while the $H$ dependence may be slightly suboptimal. 

We also remark the learning goal for the follower in our bandit-RL game is a new RL setting in between the single-reward setting and the full reward-free setting, for which the optimal $S$ dependence is currently unclear.
In the single-reward setting, existing fast exploration algorithms such as {\tt UCBVI}~\citep{azar2017minimax} only require linear in $S$ episodes for finding a near-optimal policy. In contrast, in the full reward-free setting (follower wants accurate estimation of any reward), it is known $\Omega(S^2)$ is unavoidable~\citep{jin2020reward}. Our setting poses a unique challenge in between: The follower wishes to accurately estimate both $r_1,r_2$ on all near-optimal policies for $r_2$. This further renders recent linear in $S$ algorithms for reward-free learning with finitely many rewards~\citep{liu2020sharp} not applicable here, as they only guarantee accurate estimation of each reward on near-optimal policies for \emph{that} reward. We believe the optimal sample complexity for bandit-RL games is an interesting open question.

\section{Linear bandit games}
\label{section:linear-bandit}

\paragraph{Setting}
We consider a bandit game with action space $\mc{A}$, $\mc{B}$ that
are finite but potentially arbitrarily large, and assume in addition that the reward functions has a linear form
\begin{align}
  \label{equation:linear-bandit}
  r_1(a,b) = \phi(a,b)^\top \theta_1^\star + z_1,~~~ r_2(a,b) = \phi(a,b)^\top\theta_2^\star+ z_2,
\end{align}
where $\phi:\mc{A}\times\mc{B}\to \R^d$ is a $d$-dimensional feature map, $\theta_1^\star,\theta_2^\star\in\R^d$ are unknown ground truth parameters for the rewards, and $z_1,z_2$ are random noise which we assume are mean-zero and $1$-sub-Gaussian. Let $\Phi\defeq \set{\phi(a,b):(a,b)\in\mc{A}\times \mc{B}}$  denote the set of all possible features. For linear bandit games, we define $\gap_\eps$ same as definition~\eqref{equation:gap} for bandit games.

\begin{algorithm}[t]
  \caption{Learning Stackelberg in linear bandit games}
  \label{algorithm:linear-bandit}
  \begin{algorithmic}[1]
    \REQUIRE Target accuracy $\epsilon>0$. 
    \STATE Find $(\mc{K}, \rho)\setto\coreset(\Phi)$ (cf.~\eqref{equation:coreset}). Let $\mc{K}=\set{(a_j, b_j): 1\le j\le K}$ where $K=|\mc{K}|$. \label{line:coreset}
    \STATE Query each $(a_j,b_j)$ for $N=O(d\log (d/\delta)/\eps^2)$ times. Let $(\what{\mu}_{1,j},\what{\mu}_{2,j})$ denote the empirical mean of the observed rewards over the $N$ queries.
    \STATE Estimate $(\theta_1^\star, \theta_2^\star)$ via weighted least squares
    \vspace{-.5em}
    \begin{align}
      \label{equation:least-squares}
      \what{\theta}_i \defeq \argmin_{\theta\in\R^d} \sum_{i=1}^{K} \rho(a_j,b_j) \paren{ \phi(a_j,b_j)^\top \theta_i - \what{\mu}_{i,j}}^2,~~~i=1,2.
    \end{align}
    \vspace{-.5em}
    \label{line:wls}
    \STATE Construct approximate best response sets and values for all $a\in\mc{A}$:
    \begin{align*}
      & \what{\BR}_{3\epsilon/4}(a) \defeq \set{b: \phi(a, b)^\top \what{\theta}_2 \ge \max_{b'\in\mc{B}} \phi(a, b')^\top \what{\theta}_2 - 3\epsilon/4}, \\
      & \what{\phi}_{3\epsilon/4}(a) \defeq \min_{b\in \what{\BR}_{3\epsilon/4}(a)} \phi(a, b)^\top \what{\theta}_1.
    \end{align*}
    \STATE Output $(\what{a}, \what{b})$, where
      $\what{a} = \argmax_{a\in \mc{A}} \what{\phi}_{3\epsilon/4}(a)$, $\what{b}=\argmin_{b\in\what{\BR}_{3\epsilon/4}(\what{a})}\phi(\what{a}, b)^\top \what{\theta}_1$. \label{line:output-linear}
  \end{algorithmic}
\end{algorithm}

\paragraph{Algorithm and guarantee}
We present our algorithm for linear bandit games in Algorithm~\ref{algorithm:linear-bandit}. Compared with our Algorithm~\ref{algorithm:bandit} for bandit games, Algorithm~\ref{algorithm:linear-bandit} takes advantage of the linear structure through the following important modifications: (1) Rather than querying every action pair, we only query $(a,b)$ in a \emph{core set} $\mc{K}$ found through the following subroutine
\begin{equation}
  \label{equation:coreset}
  \begin{aligned}
    & \coreset(\Phi) \defeq (\mc{K}, \rho)~~~\textrm{where}~~\mc{K}\subset \mc{A}\times\mc{B},~\rho\in\Delta_{\mc{K}},~~~\textrm{such that} \\
    & \max_{\phi\in\Phi} \phi^\top \Big(\sum_{(a,b)\in\mc{K}} \rho(a,b)\phi(a,b)\phi(a, b)^\top\Big)^{-1} \phi \le 2d~~~{\rm and}~~~K = |\mc{K}| \le 4d\log\log d + 16.
  \end{aligned}
\end{equation}
Such a core set is guaranteed to exist for any finite $\Phi$~\citep[Theorem 4.4]{lattimore2020learning}, and can be found efficiently in $O(ABd^2)$ steps of computation~\citep[Lemma 3.9]{todd2016minimum}. (2) Rather than estimating the reward at every $(a,b)$ in a tabular fashion, we use a weighted least-squares~\eqref{equation:least-squares} to obtain estimates $(\what{\theta}_1,\what{\theta}_2)$ which are then used to approximate the true reward for all $(a,b)$.

We now state our main guarantee for Algorithm~\ref{algorithm:linear-bandit}. The proof can be found in Appendix~\ref{appendix:proof-linear-bandit}.

\begin{theorem}[Learning Stackelberg in linear bandit games]
  \label{theorem:linear-bandit}
  For any linear bandit game, Algorithm~\ref{algorithm:linear-bandit} outputs a $(\gap_\eps+\eps)$-approximate Stackelberg equilibrium $(\what{a}, \what{b})$ with probability at least $1-\delta$:
  \begin{align*}
    \phi_0(\what{a}) \ge \phi_{\eps/2}(\what{a}) \ge \max_{a\in\mc{A}} \phi_0(a) - \gap_{\eps} - \eps,
  \end{align*}
  in at most $n=\wt{O}(d^2/\eps^2)$ queries.
\end{theorem}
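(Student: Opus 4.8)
The plan is to mirror the analysis of Theorem~\ref{theorem:nested-bandit} for bandit games, replacing the tabular concentration argument with a least-squares concentration bound exploiting the core set. The central quantity to control is the uniform error $|\phi(a,b)^\top \what\theta_i - \mu_i(a,b)|$ over all $(a,b)\in\mc{A}\times\mc{B}$ and $i\in\{1,2\}$. Once we establish that this error is at most $\epsilon/8$ with probability $1-\delta$, the rest of the argument is identical in structure to the bandit-game case: we show the sandwich relation $\BR_{\epsilon/2}(a)\subseteq \what\BR_{3\epsilon/4}(a)\subseteq \BR_{\epsilon}(a)$ for all $a$, and then deduce $\phi_{\epsilon/2}(\what a)\ge \max_a \phi_\epsilon(a)\ge \max_a \phi_0(a)-\gap_\epsilon$, together with the sub-optimality accounting that loses the extra additive $\epsilon$.

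The key step is the concentration bound. Let $V_\rho \defeq \sum_{(a,b)\in\mc{K}}\rho(a,b)\phi(a,b)\phi(a,b)^\top$ be the (population-level) design matrix induced by the core set weights. Because we query each core-set point $(a_j,b_j)$ exactly $N$ times, the empirical means $\what\mu_{i,j}$ satisfy $\what\mu_{i,j} = \phi(a_j,b_j)^\top\theta_i^\star + \xi_{i,j}$ where $\xi_{i,j}$ is the average of $N$ independent $1$-sub-Gaussian noises, hence $(\sqrt N/1)$-sub-Gaussian-scaled, i.e. $\xi_{i,j}$ is $(1/\sqrt N)$-sub-Gaussian. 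The weighted least-squares solution~\eqref{equation:least-squares} is then $\what\theta_i = V_\rho^{-1}\sum_j \rho(a_j,b_j)\phi(a_j,b_j)\what\mu_{i,j} = \theta_i^\star + V_\rho^{-1}\sum_j \rho(a_j,b_j)\phi(a_j,b_j)\xi_{i,j}$, since the core set spans $\R^d$ (otherwise restrict to the span). For any fixed $\phi=\phi(a,b)\in\Phi$,
\begin{align*}
  \phi^\top(\what\theta_i - \theta_i^\star) = \sum_{j=1}^{K} \rho(a_j,b_j)\, \phi^\top V_\rho^{-1}\phi(a_j,b_j)\, \xi_{i,j},
\end{align*}
which is a weighted sum of independent $(1/\sqrt N)$-sub-Gaussians, hence sub-Gaussian with variance proxy $\tfrac1N\sum_j \rho(a_j,b_j)^2 (\phi^\top V_\rho^{-1}\phi(a_j,b_j))^2 \le \tfrac1N \sum_j \rho(a_j,b_j)(\phi^\top V_\rho^{-1}\phi(a_j,b_j))^2 = \tfrac1N \phi^\top V_\rho^{-1}\phi \le \tfrac{2d}{N}$, where the last inequality is exactly the core-set guarantee~\eqref{equation:coreset} and the middle equality uses $\sum_j \rho(a_j,b_j)\phi(a_j,b_j)\phi(a_j,b_j)^\top = V_\rho$. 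A Hoeffding/sub-Gaussian tail bound then gives $|\phi^\top(\what\theta_i-\theta_i^\star)|\lesssim \sqrt{d\log(1/\delta')/N}$ with probability $1-\delta'$; union-bounding over the (at most $AB$) features in $\Phi$ and over $i=1,2$ — or, to avoid the $\log(AB)$ factor, union-bounding only over the $K=O(d\log\log d)$ core-set directions and using a linear-algebra argument to transfer the bound to all of $\Phi$ — and plugging $N=O(d\log(d/\delta)/\epsilon^2)$ yields the desired uniform bound $\le \epsilon/8$. This $N$ gives total query count $KN = \wt O(d)\cdot\wt O(d/\epsilon^2)=\wt O(d^2/\epsilon^2)$.

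Given the uniform reward-estimation error $\le\epsilon/8$, the sandwiching and sub-optimality steps are routine: for the inclusion $\what\BR_{3\epsilon/4}(a)\subseteq\BR_\epsilon(a)$, note that $b\in\what\BR_{3\epsilon/4}(a)$ implies $\phi(a,b)^\top\what\theta_2 \ge \max_{b'}\phi(a,b')^\top\what\theta_2 - 3\epsilon/4$, and shifting each side by the estimation error $\pm\epsilon/8$ gives $\mu_2(a,b)\ge \max_{b'}\mu_2(a,b') - 3\epsilon/4 - 2\cdot\epsilon/8 = \max_{b'}\mu_2(a,b')-\epsilon$; similarly $\BR_{\epsilon/2}(a)\subseteq\what\BR_{3\epsilon/4}(a)$ since $3\epsilon/4 \ge \epsilon/2 + 2\cdot\epsilon/8$. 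Then $\what\phi_{3\epsilon/4}(a)=\min_{b\in\what\BR_{3\epsilon/4}(a)}\phi(a,b)^\top\what\theta_1$ is within $\epsilon/8$ of $\min_{b\in\what\BR_{3\epsilon/4}(a)}\mu_1(a,b)$, which is sandwiched between $\phi_\epsilon(a)$ and $\phi_{\epsilon/2}(a)$ by the set inclusions; chaining the $\argmax$ choice of $\what a$ with these bounds and the definition~\eqref{equation:gap} of $\gap_\epsilon$ produces $\phi_{\epsilon/2}(\what a)\ge \max_a\phi_\epsilon(a) - O(\epsilon) \ge \max_a\phi_0(a) - \gap_\epsilon - \epsilon$ after tracking constants (the constant in the concentration bound is chosen so the accumulated slack is exactly $\epsilon$). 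The inequality $\phi_{\epsilon/2}(\what a)\le\phi_0(\what a)$ is immediate from $\BR_0\subseteq\BR_{\epsilon/2}$.

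The main obstacle I expect is the concentration step — specifically, obtaining the uniform bound over all of $\Phi$ without paying a $\sqrt{\log(AB)}$ factor, which would spoil the claim that the rate depends only polynomially on $d$ and not on the action-set sizes. The fix is the standard $G$-optimal-design trick: since the core set is near-$G$-optimal (the $2d$ bound in~\eqref{equation:coreset} is the Kiefer–Wolfowitz optimal-design guarantee), the prediction error at \emph{any} $\phi\in\Phi$ is controlled by the prediction errors at the $K$ core-set points via $\phi^\top V_\rho^{-1}\phi\le 2d$, so one only needs a union bound over the $K=O(d\log\log d)$ core-set directions (or a single high-probability bound on $\|V_\rho^{-1/2}\sum_j\rho_j\phi_j\xi_{i,j}\|_2$ via a $\chi^2$-type / Hanson–Wright argument), after which $\log K = O(\log d)$ and the $\wt O$ absorbs it. A secondary point to handle carefully is the reduction to the case where $\Phi$ (equivalently the core set) spans $\R^d$: if not, one works in the span, and the best-response/$\phi$-function definitions only ever evaluate $\theta_i^\star$ against features in $\Phi$, so components of $\theta_i^\star$ orthogonal to $\mathrm{span}(\Phi)$ are irrelevant and can be ignored.
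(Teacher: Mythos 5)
Your proposal follows essentially the same route as the paper: query only the core-set points, form the weighted least-squares estimates, establish a uniform bound $\max_{\phi\in\Phi}|\phi^\top(\what{\theta}_i-\theta_i^\star)|\le \eps/8$, and then rerun the sandwich argument $\BR_{\eps/2}(a)\subseteq\what{\BR}_{3\eps/4}(a)\subseteq\BR_{\eps}(a)$ from Theorem~\ref{theorem:nested-bandit} verbatim. Your central computation --- that for each fixed $\phi$, writing $V_\rho\defeq\sum_j\rho_j\phi_j\phi_j^\top$, the error $\phi^\top(\what{\theta}_i-\theta_i^\star)=\sum_j\rho_j(\phi^\top V_\rho^{-1}\phi_j)\xi_{i,j}$ is sub-Gaussian with variance proxy at most $\frac{1}{N}\phi^\top V_\rho^{-1}\phi\le 2d/N$ --- is correct and is the heart of the matter; the downstream sandwiching and the $NK=\wt{O}(d^2/\eps^2)$ query count are handled exactly as in the paper.

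The one place to be careful is how the bound is made uniform over $\Phi$. Your primary route (union bound over all $AB$ features) proves the theorem only with a $\log(AB)$ factor inside the $\wt{O}$, whereas the paper's bound carries no dependence on $A,B$ at all --- which matters here precisely because $A,B$ may be arbitrarily large relative to $d$. More importantly, the two fixes you sketch do not recover the claimed rate as stated: (i) union-bounding the prediction errors at the $K$ core-set points and transferring via $|\phi^\top\Delta|\le\|\phi\|_{V_\rho^{-1}}\|\Delta\|_{V_\rho}\le\sqrt{2d}\,\max_j|\phi_j^\top\Delta|$ is lossy, because core-set points can themselves have leverage $\phi_j^\top V_\rho^{-1}\phi_j$ as large as $2d$, so this route can cost an extra factor of $d$ in the total sample complexity; (ii) controlling $\|V_\rho^{-1/2}\sum_j\rho_j\phi_j\xi_{i,j}\|_2$ and then applying Cauchy--Schwarz worst-cases over directions and similarly loses a $\sqrt{d}$ in the per-feature error. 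The paper's actual mechanism is to factor the error \emph{before} taking any probability: since $\sum_j\rho_j=1$, Jensen (over the design distribution $\rho$) gives, deterministically and simultaneously for all $\phi\in\Phi$,
\begin{align*}
  \Big|\sum_j\rho_j(\phi^\top V_\rho^{-1}\phi_j)\xi_{i,j}\Big|
  \le \max_j|\xi_{i,j}|\cdot\Big(\sum_j\rho_j(\phi^\top V_\rho^{-1}\phi_j)^2\Big)^{1/2}
  = \max_j|\xi_{i,j}|\cdot(\phi^\top V_\rho^{-1}\phi)^{1/2}
  \le \sqrt{2d}\,\max_j|\xi_{i,j}|,
\end{align*}
so that only the $2K=\wt{O}(d)$ raw noise averages $\xi_{i,j}$ (each $1/N$-sub-Gaussian) need a union bound, yielding $\max_{\phi\in\Phi}|\phi^\top(\what{\theta}_i-\theta_i^\star)|\le C\sqrt{d\log(d/\delta)/N}$ with no $\log(AB)$ and no extra factor of $d$. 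With this substitution for the uniformization step, your argument is complete.
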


\paragraph{Sample complexity; computation}
Theorem~\ref{theorem:linear-bandit} shows that Algorithm~\ref{algorithm:linear-bandit} achieves $\wt{O}(d^2/\eps^2)$ sample complexity for learning Stackelberg equilibria in linear bandit games. This only depends polynomially on the feature dimension $d$ instead of the size of the action spaces $A,B$, which improves over Algorithm~\ref{algorithm:bandit} when $A,B$ are large and is desired given the linear structure~\eqref{equation:linear-bandit}. This sample complexity has at most a $\wt{O}(d)$ gap from the lower bound: An $\Omega(d/\eps^2)$ lower bound for linear bandit games can be obtained by directly adapting $\Omega(AB/\eps^2)$ lower bound for bandit games in Theorem~\ref{theorem:lower-bound-nested-bandit} (see Appendix~\ref{appendix:lower-bound-linear-bandit} for a formal statement and proof). We also note that, while the focus of Theorem~\ref{theorem:linear-bandit} is on the sample complexity rather than the computation, Algorithm~\ref{algorithm:linear-bandit} is guaranteed to run in ${\rm poly}(A, B, d, 1/\eps^2)$ time, since the \coreset~subroutine, the weighted least squares step~\eqref{equation:least-squares}, and the final optimization step in approximate best-response sets can all be solved in polynomial time.

\section{Conclusion}
This paper provides the first line of sample complexity results for learning Stackelberg equilibria in general-sum games with bandit feedback of the rewards and sampling noise. We identify a fundamental gap between the exact and estimated versions of the Stackelberg value, and design sample-efficient algorithms for learning Stackelberg with value optimal up to this gap, in several representative two-player general-sum games. We believe our results open up a number of interesting future directions, such as the optimal sample complexity for bandit-RL games and linear-bandit games, learning Stackelberg in more general Markov games, or further characterizations on what kinds of games admit a small gap.
\section*{Acknowledgment}
The authors would like to thank Alex Trott and Stephan Zheng for the inspiring discussions on the AI Economist. The authors also thank the Theory of Reinforcement Learning program at the Simons Institute (Fall 2020) for hosting the authors and incubating our initial discussions.

\bibliography{bib}
\bibliographystyle{abbrvnat}

\appendix

\makeatletter
\def\renewtheorem#1{%
  \expandafter\let\csname#1\endcsname\relax
  \expandafter\let\csname c@#1\endcsname\relax
  \gdef\renewtheorem@envname{#1}
  \renewtheorem@secpar
}
\def\renewtheorem@secpar{\@ifnextchar[{\renewtheorem@numberedlike}{\renewtheorem@nonumberedlike}}
\def\renewtheorem@numberedlike[#1]#2{\newtheorem{\renewtheorem@envname}[#1]{#2}}
\def\renewtheorem@nonumberedlike#1{  
\def\renewtheorem@caption{#1}
\edef\renewtheorem@nowithin{\noexpand\newtheorem{\renewtheorem@envname}{\renewtheorem@caption}}
\renewtheorem@thirdpar
}
\def\renewtheorem@thirdpar{\@ifnextchar[{\renewtheorem@within}{\renewtheorem@nowithin}}
\def\renewtheorem@within[#1]{\renewtheorem@nowithin[#1]}
\makeatother

\renewtheorem{theorem}{Theorem}[section]
\renewtheorem{lemma}{Lemma}[section]
\renewtheorem{remark}{Remark}
\renewtheorem{corollary}{Corollary}[section]
\renewtheorem{observation}{Observation}[section]
\renewtheorem{proposition}{Proposition}[section]
\renewtheorem{definition}{Definition}[section]
\renewtheorem{claim}{Claim}[section]
\renewtheorem{fact}{Fact}[section]
\renewtheorem{assumption}{Assumption}[section]
\renewcommand{\theassumption}{\Alph{assumption}}
\renewtheorem{conjecture}{Conjecture}[section]

\section{Results with optimistic tie-breaking}
\label{appendix:optimistic}

In this section, we present alternative versions of our main results where the Stackelberg equilibrium is defined via \emph{optimistic tie-breaking}.

\subsection{Bandit games}
\label{appendix:bandit-optimistic}
The setting is exactly the same as in Section~\ref{section:bandit}, except that now we consider optimistic versions of the $\phi$-functions that take the {\bf max} over best-response sets (henceforth the $\newphi$-functions):
\begin{align}
  \label{equation:phi-bandit-optimistic}
  \newphi_\epsilon(a) \defeq \max_{b\in\BR_\epsilon(a)} \mu_1(a,b),
\end{align}
for all $\epsilon\ge 0$. Notice that now $\newphi_\epsilon\ge \newphi_0$, and we consider the following new definition of gap:
\begin{align*}
  & \newgap_\epsilon \defeq \max_{a\in\mc{A}_\epsilon} \brac{\newphi_\epsilon(a) - \newphi_0(a)},~~~{\rm where} \\
  & \mc{A}_\epsilon \defeq \set{a\in\mc{A} : \psi_\epsilon(a) \ge \max_{a\in\mc{A}} \psi_0(a) - \epsilon}.
\end{align*}
Our desired optimality guarantee is
\begin{align*}
  \newphi_0(\what{a}) \ge \max_{a\in\mc{A}} \newphi_0(a) - \newgap_\epsilon - \epsilon.
\end{align*}

\begin{algorithm}[t]
  \caption{Learning Stackelberg in bandit games with optimistic tie-breaking}
  \label{algorithm:bandit-optimistic}
  \begin{algorithmic}[1]
    \REQUIRE Target accuracy $\epsilon>0$.
    \SET $N\setto C\log(AB/\delta)/\epsilon^2$ for some constant $C>0$.
    \STATE Query each $(a,b)\in\mc{A}\times \mc{B}$ for $N$ times and obtain $\{r_1^{(j)}(a,b), r_2^{(j)}(a,b)\}_{j=1}^N$.
    \STATE Construct empirical estimates of the means
    $\what{\mu}_i(a,b) = \frac{1}{N}\sum_{j=1}^N r_i^{(j)}(a,b)$ for $i=1,2$.
    \STATE Construct approximate best response sets and values for all $a\in\mc{A}$:
    \begin{align*}
      & \what{\BR}_{3\epsilon/4}(a) \defeq \set{b: \what{\mu}_2(a, b) \ge \max_{b'\in\mc{B}} \what{\mu}_2(a, b') - 3\epsilon/4}, \\
      & \what{\newphi}_{3\epsilon/4}(a) \defeq \max_{b\in \what{\BR}_{3\epsilon/4}(a)} \what{\mu}_1(a, b).
    \end{align*}
    \STATE Output $(\what{a}, \what{b})$ where $\what{a} = \argmax_{a\in \mc{A}} \what{\newphi}_{3\epsilon/4}(a)$, $\what{b}=\argmax_{b\in\what{\BR}_{3\epsilon/4}(\what{a})}\what{\mu}_1(\what{a}, b)$.
  \end{algorithmic}
\end{algorithm}

We now state our sample complexity upper bound for learning Stackelberg in bandit games under optimistic tie-breaking. The proof can be found in Section~\ref{appendix:proof-nested-bandit-optimistic}.

\begin{theorem}[Bandit games with optimistic tie-breaking]
  \label{theorem:nested-bandit-optimistic}
  For the two-player bandit game and any $\epsilon\in(0,1)$, Algorithm~\ref{algorithm:bandit-optimistic} outputs $(\what{a}, \what{b})$ such that with probability at least $1-\delta$,
  \begin{align*}
    & \newphi_0(\what{a}) \ge \max_{a\in\mc{A}} \newphi_0(a) - \newgap_\epsilon - \epsilon, \\
    & \mu_2(\what{a}, \what{b}) \ge \max_{b'\in\mc{B}} \mu_2(\what{a}, b') - \epsilon
  \end{align*}
  with $n = \wt{O}\paren{ AB/ \epsilon^2}$ samples, where $\wt{O}(\cdot)$ hides log factors. Further, the algorithm runs in $O(n)=\wt{O}(AB/\epsilon^2)$ time.
\end{theorem}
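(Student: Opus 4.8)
The plan is to follow the template of the proof of Theorem~\ref{theorem:nested-bandit}, replacing the global gap by the restricted quantity $\newgap_\epsilon$ and exploiting that $\newphi_\epsilon$ is now a \emph{maximum} (rather than a minimum) of $\mu_1(a,\cdot)$ over the best-response set. First I would apply Hoeffding's inequality together with a union bound over all $(a,b)\in\mc{A}\times\mc{B}$ and $i\in\{1,2\}$: since $N=C\log(AB/\delta)/\epsilon^2$ for a large enough absolute constant $C$, with probability at least $1-\delta$ we have $|\what{\mu}_i(a,b)-\mu_i(a,b)|\le \epsilon/8$ simultaneously for all $(a,b)$ and $i$, and I would condition on this event for the rest of the argument. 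On this event, the $\epsilon/8$ deviation bound together with the $3\epsilon/4$ slack in the definition of $\what{\BR}_{3\epsilon/4}$ yields, by a short chain of inequalities,
\[
  \BR_{\epsilon/2}(a) \;\subseteq\; \what{\BR}_{3\epsilon/4}(a) \;\subseteq\; \BR_\epsilon(a)\qquad\text{for every }a\in\mc{A}.
\]
Since $\BR_\cdot(a)$ is monotone in its subscript and $\newphi$ is a maximum over the best-response set, this gives $\newphi_{\epsilon/2}(a)\le \max_{b\in\what{\BR}_{3\epsilon/4}(a)}\mu_1(a,b)\le \newphi_\epsilon(a)$, and then, using $|\what{\mu}_1-\mu_1|\le\epsilon/8$ again, the two-sided estimate $\newphi_{\epsilon/2}(a)-\epsilon/8 \le \what{\newphi}_{3\epsilon/4}(a)\le \newphi_\epsilon(a)+\epsilon/8$ for all $a\in\mc{A}$.

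Next I would establish the Stackelberg guarantee. Let $a_\star\in\argmax_{a\in\mc{A}}\newphi_0(a)$. Optimality of $\what{a}$ for $\what{\newphi}_{3\epsilon/4}$, the lower part of the sandwich at $a_\star$, and $\newphi_{\epsilon/2}\ge\newphi_0$ (which follows from $\BR_{\epsilon/2}\supseteq\BR_0$) give
\[
  \what{\newphi}_{3\epsilon/4}(\what{a})\ \ge\ \what{\newphi}_{3\epsilon/4}(a_\star)\ \ge\ \newphi_{\epsilon/2}(a_\star)-\epsilon/8\ \ge\ \newphi_0(a_\star)-\epsilon/8 .
\]
The upper part of the sandwich at $\what{a}$ then yields $\newphi_\epsilon(\what{a})\ge \what{\newphi}_{3\epsilon/4}(\what{a})-\epsilon/8\ge \max_{a\in\mc{A}}\newphi_0(a)-\epsilon/4$. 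The crucial observation is that this last inequality is exactly a membership certificate $\what{a}\in\mc{A}_\epsilon$, so the definition of $\newgap_\epsilon$ may be instantiated at $\what{a}$, giving $\newphi_\epsilon(\what{a})-\newphi_0(\what{a})\le\newgap_\epsilon$. Chaining the two displays,
\[
  \newphi_0(\what{a})\ \ge\ \newphi_\epsilon(\what{a})-\newgap_\epsilon\ \ge\ \max_{a\in\mc{A}}\newphi_0(a)-\newgap_\epsilon-\epsilon ,
\]
which is the first claim. The second claim is immediate from Step~2: $\what{b}\in\what{\BR}_{3\epsilon/4}(\what{a})\subseteq\BR_\epsilon(\what{a})$, i.e.\ $\mu_2(\what{a},\what{b})\ge\max_{b'\in\mc{B}}\mu_2(\what{a},b')-\epsilon$. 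Finally, the total sample count is $n=AB\cdot N=\wt{O}(AB/\epsilon^2)$ and the algorithm clearly runs in $O(n)$ time.

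The conceptually new ingredient relative to the pessimistic case (Theorem~\ref{theorem:nested-bandit}) — and the step I expect to require the most care — is the observation in the previous paragraph that the algorithm's output $\what{a}$ automatically satisfies $\newphi_\epsilon(\what{a})\ge\max_a\newphi_0(a)-\epsilon$, hence lies in $\mc{A}_\epsilon$; this is precisely what licenses replacing a global ``$\max$ over all $a$'' gap by the restricted $\newgap_\epsilon$, and without it the bound would be vacuous. Everything else is routine concentration and set-inclusion bookkeeping; the only thing to watch is threading the $\epsilon/8$, $\epsilon/2$, and $3\epsilon/4$ constants so that both inclusions in Step~2 hold simultaneously on the concentration event.
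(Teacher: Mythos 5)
Your proposal is correct and follows essentially the same route as the paper's proof: Hoeffding plus a union bound to get the $\epsilon/8$ concentration event, the sandwich $\BR_{\epsilon/2}(a)\subseteq\what{\BR}_{3\epsilon/4}(a)\subseteq\BR_\epsilon(a)$, and then the key observation that the output satisfies $\newphi_\epsilon(\what{a})\ge\max_a\newphi_0(a)-\epsilon$ so that $\what{a}\in\mc{A}_\epsilon$ and the restricted gap $\newgap_\epsilon$ applies. The only (immaterial) difference is that you compare against the specific maximizer $a_\star$ and track slightly tighter constants, whereas the paper takes a max over all $a$ on the right-hand side.
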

\paragraph{Intuitions about new gap}
We provide some intuitions about why---in contrast to the $\gap_\epsilon$ defined in Section~\ref{section:bandit}---our sample complexity depends on this newly defined $\newgap_\epsilon$ here. Observe that, $\newgap_\epsilon$ measures the max gap between $\psi_\epsilon(a) - \psi_0(a)$, over all possible $a$'s whose $\psi_\epsilon$ can compete with the best $\psi_0$. For any of these $a$'s, statistically (if we only have $O(AB/\epsilon^2)$ samples), the best response set $\BR_\epsilon(a)$ is indistinguishable from $\BR_0(a)$, and we may well pick these $a$'s as the Stackelberg equilibrium. However, their true $\psi_0$ can be (much) lower than the $\psi_\epsilon$, and thus picking one of these $a$'s we may have to suffer from the so-defined $\gap_\epsilon$ in the worst case.

\subsection{Bandit-RL games}
The setting is exactly the same as in Section~\ref{section:rl}, except the definition of the $\newphi$-functions takes the max over best-response sets:
\begin{align*}
  \newphi_\epsilon(a) \defeq \max_{\pi^b\in\BR_\epsilon(a)} V_1(a, \pi^b),
\end{align*}
for all $\epsilon\ge 0$. Similar as in Section~\ref{appendix:bandit-optimistic}, we consider the following new definition of gap:
\begin{align*}
  & \newgap_\epsilon \defeq \max_{a\in\mc{A}_\epsilon} \brac{\newphi_\epsilon(a) - \newphi_0(a)},~~~{\rm where} \\
  & \mc{A}_\epsilon \defeq \set{a\in\mc{A} : \psi_\epsilon(a) \ge \max_{a\in\mc{A}} \psi_0(a) - \epsilon}.
\end{align*}

\begin{algorithm}[t]
  \caption{Learning Stackelberg in bandit-RL games with optimistic tie-breaking}
  \label{algorithm:two-level-rl-optimistic}
  \begin{algorithmic}[1]
    \REQUIRE Target accuracy $\epsilon>0$.
    \SET $N\setto \wt{O}(H^5S^2B/\epsilon^2+H^7S^4B/\epsilon)$.
    \FOR{$a\in\mc{A}$}
    \STATE Let the leader pull arm $a\in\mc{A}$ and the follower run the {\tt Reward-Free RL-Explore} algorithm for $N$ episodes, and obtain model estimate $\what{M}^a$.
    \STATE Let $(\what{V}_1(a,\cdot),\what{V}_2(a,\cdot))$ denote the value functions for the model $\what{M}^a$.
    \STATE Compute the empirical best response value $\what{V}_2^\star(a) \defeq \max_{\pi^b} \what{V}_2(a, \pi^b)$ by any optimal planning algorithm (e.g. value iteration) on the empirical MDP $\what{M}^a$.
    \STATE Solve the following program
    \begin{equation}
      \label{equation:best-case-vanilla-formulation}
      \begin{aligned}
        & {\rm maximize}_{\pi^b}~~ \what{V}_1(a, \pi^b) \\
        & {\rm s.t.}~~\pi^b \in \what{\BR}_{3\epsilon/4}(a) \defeq \set{\pi^b: \what{V}_2(a, \pi^b) \ge \what{V}_2^\star(a) - 3\epsilon/4}
      \end{aligned}
    \end{equation}
    by subroutine $(\what{\pi}^{b, (a)}, \what{\newphi}_{3\epsilon/4}(a)) \setto \bcbr(\what{M}^a, \what{V}_2^\star(a) - 3\epsilon/4)$.
    \ENDFOR
    \OUTPUT $(\what{a}, \what{\pi}^b)$, where $\what{a} \setto \argmax_{a\in\mc{A}} \what{\newphi}_{3\epsilon/4}(a)$ and $\what{\pi}^b\setto \what{\pi}^{b, (\what{a})}$.
  \end{algorithmic}
\end{algorithm}

We now state our sample complexity upper bound for learning Stackelberg in bandit-RL games under optimistic tie-breaking. The proof is analogous to Theorem~\ref{theorem:two-level-rl}, and can be found in Section~\ref{appendix:proof-two-level-rl-optimistic}.

\begin{theorem}[Learning Stackelberg in bandit-RL games with optimistic tie-breaking]
  \label{theorem:two-level-rl-optimistic}
  For any bandit-RL game and sufficiently small $\epsilon\le O(1/H^2S^2)$, Algorithm~\ref{algorithm:two-level-rl-optimistic} with $n=\wt{O}(H^5S^2AB/\epsilon^2+H^7S^4AB/\epsilon)$ episodes of play can return $(\what{a}, \what{\pi}^b)$ such that with probability at least $1-\delta$,
  \begin{align*}
    & \newphi_0(\what{a}) \ge \max_{a\in\mc{A}} \newphi_0(a) - \newgap_\epsilon - \epsilon, \\
    & V_2(\what{a}, \what{\pi}^b) \ge \max_{\wt{\pi}^b} V_2(\what{a}, \wt{\pi}^b) - \epsilon,
  \end{align*}
  where $\wt{O}(\cdot)$ hides $\log(HSAB/\delta\epsilon)$ factors. Further, the algorithm runs in ${\rm poly}(HSAB/\delta\epsilon)$ time.
\end{theorem}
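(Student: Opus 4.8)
The plan is to mirror the proof of Theorem~\ref{theorem:two-level-rl}, replacing the pessimistic $\phi$-function by the optimistic $\newphi$-function and the subroutine \wcbr\ by \bcbr, and then to track carefully how the optimistic gap $\newgap_\epsilon$---which by definition is a maximum only over the near-optimal set $\mc{A}_\epsilon$---enters the final bound. Throughout, fix $a^\star\in\argmax_{a\in\mc{A}}\newphi_0(a)$ and write $V^\star\defeq\newphi_0(a^\star)=\max_{a\in\mc{A}}\newphi_0(a)$.

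\textbf{Step 1 (uniform value concentration via reward-free exploration).} For each leader action $a$, I would invoke the guarantee of {\tt Reward-Free RL-Explore}: after $N=\wt O(H^5S^2B/\epsilon^2+H^7S^4B/\epsilon)$ episodes in $M^a$, the learned model $\what M^a$ satisfies, with probability $1-\delta/A$, that $|\what V_i(a,\pi^b)-V_i(a,\pi^b)|\le \epsilon/8$ \emph{simultaneously} for all Markov policies $\pi^b$ and both $i\in\{1,2\}$. This rests on two extensions of the analysis of~\citet{jin2020reward}: (i) their guarantee holds for any bounded deterministic reward, so a union bound over $i$ covers $r_1$ and $r_2$ at once; (ii) since here the per-step rewards are random and must be estimated, I would fold the standard Hoeffding/Bernstein concentration for the empirical reward estimates into the overall model error, contributing only lower-order terms in the regime $\epsilon\le O(1/H^2S^2)$ (where the $H^5S^2B/\epsilon^2$ term dominates and the $\Omega(S^2)$ phenomenon of reward-free learning is the binding constraint). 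A union bound over the $A$ leader actions then gives this event for all $a$ at once, costing only $\log(HSAB/\delta\epsilon)$ factors and $n=AN$ total episodes.

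\textbf{Step 2 (sandwiching the empirical best-response set; correctness of \bcbr).} On the concentration event, $|\what V_2^\star(a)-V_2^\star(a)|\le \epsilon/8$ and $|\what V_2(a,\pi^b)-V_2(a,\pi^b)|\le\epsilon/8$ uniformly in $\pi^b$, so a two-sided argument identical to the bandit case yields $\BR_{\epsilon/2}(a)\subseteq\what\BR_{3\epsilon/4}(a)\subseteq\BR_{\epsilon}(a)$ for every $a\in\mc{A}$ (one direction uses $\epsilon/2+\epsilon/8+\epsilon/8\le 3\epsilon/4$, the other $3\epsilon/4+\epsilon/8+\epsilon/8\le\epsilon$). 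I would also record that \bcbr, built on the occupancy-measure linear-programming reformulation of constrained MDPs~\citep{altman1999constrained}, exactly returns $\what\pi^{b,(a)}\in\argmax_{\pi^b\in\what\BR_{3\epsilon/4}(a)}\what V_1(a,\pi^b)$ together with the optimal value $\what\newphi_{3\epsilon/4}(a)$, since the feasible set $\{\pi^b:\what V_2(a,\pi^b)\ge\what V_2^\star(a)-3\epsilon/4\}$ is a polytope in occupancy-measure space and the objective is linear; this solves in ${\rm poly}(HSB)$ time~\citep{boyd2004convex}.

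\textbf{Step 3 (the value chain, and the obstacle).} Using $\BR_0(a^\star)\subseteq\what\BR_{3\epsilon/4}(a^\star)$ together with $\what V_1\ge V_1-\epsilon/8$ gives $\what\newphi_{3\epsilon/4}(\what a)\ge\what\newphi_{3\epsilon/4}(a^\star)\ge V^\star-\epsilon/8$. Conversely, writing $\what\newphi_{3\epsilon/4}(\what a)=\what V_1(\what a,\what\pi^b)$ with $\what\pi^b\in\what\BR_{3\epsilon/4}(\what a)\subseteq\BR_{\epsilon}(\what a)$ and using $\what V_1\le V_1+\epsilon/8$ gives $\newphi_\epsilon(\what a)\ge\what\newphi_{3\epsilon/4}(\what a)-\epsilon/8\ge V^\star-\epsilon/4$. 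Hence $\newphi_\epsilon(\what a)\ge\max_{a}\newphi_0(a)-\epsilon$, i.e.\ $\what a\in\mc{A}_\epsilon$, so the definition of $\newgap_\epsilon$ yields $\newphi_0(\what a)\ge\newphi_\epsilon(\what a)-\newgap_\epsilon\ge V^\star-\newgap_\epsilon-\epsilon$; and $\what\pi^b\in\BR_{\epsilon}(\what a)$ directly gives $V_2(\what a,\what\pi^b)\ge\max_{\wt\pi^b}V_2(\what a,\wt\pi^b)-\epsilon$. The runtime claim follows from Steps 1--2. The value chain here is essentially bookkeeping once one uses the correct set $\mc{A}_\epsilon$; the genuinely new work---and the main obstacle---is Step 1: upgrading the reward-free exploration guarantee to hold uniformly over \emph{all} policies and \emph{both noisy} reward functions at the stated $H^5S^2B/\epsilon^2$ leading order, which is where the careful regime $\epsilon\le O(1/H^2S^2)$ and the only analysis beyond~\citep{jin2020reward} are needed.
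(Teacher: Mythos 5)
Your proposal is correct and follows essentially the same route as the paper, whose own proof of this theorem is simply a pointer to the arguments for Theorem~\ref{theorem:two-level-rl} (reward-free exploration giving uniform value concentration over all policies and both noisy rewards, the sandwich $\BR_{\epsilon/2}(a)\subseteq\what{\BR}_{3\epsilon/4}(a)\subseteq\BR_\epsilon(a)$, and the occupancy-measure LP subroutine) combined with the optimistic value chain of Theorem~\ref{theorem:nested-bandit-optimistic} that first certifies $\what{a}\in\mc{A}_\epsilon$ and then invokes the definition of $\newgap_\epsilon$. Your Step 3 anchors the lower bound at $\BR_0(a^\star)$ rather than routing through $\newphi_{\epsilon/2}$ as the paper does, but this is an immaterial variation, and your Step 1 correctly identifies the reward-estimation extension of \citet{jin2020reward} (the paper's Lemma~\ref{lemma:rf-reward-estimate}) as the only genuinely new ingredient.
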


\section{Matrix game with simultaneous play}
\label{section:simultaneous}

In this section, we consider a variant of the two-player bandit game, in which the leader and follower instead play a matrix game simultaneously, and the follower cannot see the leader's action. The problem of finding Stackelberg in this setting is also known as learning the ``optimal strategy to commit to''~\citep{conitzer2006computing}.

\paragraph{Setting}
A general-sum matrix game with simultaneous play can be described as $M=(\mc{A},\mc{B},r_1,r_2)$ with $|\mc{A}|=A$, $|\mc{B}|=B$, and $r_1,r_2:\mc{A}\times\mc{B}\to[0,1]$, which defines the following game:
\begin{itemize}[leftmargin=1.5pc]
\item The leader pre-specifies a policy $\pi^a\in\Delta_{\mc{A}}$ and reveals this policy to the follower.
\item The leader plays $a\sim\pi^a$, and the follower plays an action $b\in\mc{B}$ simultaneously without seeing $a$.
\item The leader receives reward $r_1(a,b)$ and the follows receives reward $r_2(a,b)$.
\end{itemize}
(Above, $\Delta_{\mc{A}}$ denotes the probability simplex on $\mc{A}$.)
Let $\mu_i(\pi^a,b)=\sum_{a'\in\mc{A}} \pi^a(a')\E[r_i(a',b)]$, $i=1,2$ denote the mean rewards (for mixed policies), and
\begin{align*}
  & \phi_\epsilon(\pi^a) \defeq \min_{b\in \BR_\epsilon(\pi^a)} \mu_1(\pi^a,b), \\
  & \BR_\epsilon(\pi^a) \defeq \set{b\in\mc{B}: \mu_2(\pi^a,b) \ge \max_{b'} \mu_2(\pi^a,b') - \epsilon}
\end{align*}
denote the $\epsilon$-approximate best response sets and best response values for any $\epsilon\ge 0$, similar as in bandit games. We also overload notation to let $\phi_\epsilon(a_1)\defeq \phi_\epsilon(\delta_{a_1})$ to denote the $\phi_\epsilon$ value at pure strategies ($\delta_a$ is the pure strategy of always taking $a$).

The main difference between this setting and bandit games is that now the Stackelberg equilibrium for the leader may be achieved at \emph{mixed strategies} only, so that we can no longer restrict attention to pure strategies $a\in\mc{A}$ for the leader. To see why this is true, consider 2x2 game of~\citep{conitzer2006computing} shown in Table~\ref{table:mixed}.
In this game, the two pure strategies $\set{a_1,a_2}$ achieve $\phi_0(a_1)=2$ (since the best response is $b_1$) and $\phi_0(a_2)=3$ (since the best response is $b_2$). However, if we take $\pi^a_p=p\delta_{a_1}+(1-p)\delta_{a_2}$, then the follower's best response is $b_2$ whenever $p<1/2$. Taking $p\to (1/2)_-$, the leader can achieve value $\phi_0(\pi^a_p)=4p+3(1-p)\to 3.5$, which is higher than both pure strategies. For $p\ge 1/2$, the follower's best response is $b_1$, and $\phi_0(\pi^a_p)\le 2$. Therefore the Stackelberg equilibrium for the leader is to take $\pi^a_p$ with $p\to(1/2)_-$\footnote{The reason why the optimal policy can only be approached instead of exactly achieved is because of the pessimistic tie-breaking at $p=1/2$, and is resolved if we take optimistic tie-breaking.
}.

\begin{table}[h]
  \centering
  {
  \ra{1.1}
      \begin{tabular}{|c|c|c|}
        \hline
        $\mu_1, \mu_2$ & $b_1$ & $b_2$ \\
        \hline
        $a_1$ & $2$, $1$ & $4$, $0$ \\
        \hline
        $a_2$ & $1$, $0$ & $3$, $1$ \\
        \hline
      \end{tabular}
    }
    \vspace{0.5em}
  \caption{Example of matrix game with simultaneous play, where the Stackelberg strategy for the leader is mixed.}
  \label{table:mixed}
\end{table}

\subsection{Main result}
Let $\gap_\epsilon\defeq \sup_{\pi^a\in\Delta_{\mc{A}}} \phi_0(\pi^a) - \sup_{\pi^a\in\Delta_{\mc{A}}} \phi_\epsilon(\pi^a)$ denote the gap.
The following result shows that $\wt{O}(AB/\epsilon^2)$ samples suffice for learning the Stackelberg up to $(\gap_\epsilon+\epsilon)$ in simultaneous matrix games, similar as in bandit games. The proof can be found in
Appendix~\ref{appendix:proof-simultaneous}.
\begin{theorem}[Learning Stackelberg in simultaneous matrix games]
  \label{theorem:simultaneous}
  For any matrix game with simultaneous play, Algorithm~\ref{algorithm:simultaneous} queries for $n=O(AB\log(AB/\delta)/\epsilon^2)=\wt{O}(AB/\epsilon^2)$ samples, and outputs $(\what{\pi}^a, \what{b})$ such that with probability at least $1-\delta$,
  \begin{align*}
    &\phi_0(\what{\pi}^a) \ge \phi_{\epsilon/2}(\what{\pi}^a) \ge \sup_{\pi^a \in \Delta_{\mc{A}}} \phi_0(\pi^a) - \gap_\epsilon - \epsilon, \\
    & \mu_2(\what{\pi}^a, \what{b}) \ge \max_{b'\in\mc{B}} \mu_2(\what{\pi}^a, b') - \epsilon.
  \end{align*}
\end{theorem}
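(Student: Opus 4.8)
The plan is to mirror the bandit-game argument behind Theorem~\ref{theorem:nested-bandit}, the only genuinely new difficulty being that the leader now optimizes over the \emph{continuous} simplex $\Delta_{\mc A}$ rather than the finite set $\mc A$. Algorithm~\ref{algorithm:simultaneous} will query each pure pair $(a,b)\in\mc A\times\mc B$ exactly $N$ times with $N=C\log(AB/\delta)/\epsilon^2$ for a suitable absolute constant $C$, form the empirical means $\what\mu_i(a,b)$, and extend them to mixed leader strategies by linearity, $\what\mu_i(\pi^a,b)\defeq\sum_{a'}\pi^a(a')\what\mu_i(a',b)$. It then builds the empirical $(3\epsilon/4)$-approximate best-response sets $\what\BR_{3\epsilon/4}(\pi^a)\defeq\set{b:\what\mu_2(\pi^a,b)\ge\max_{b'}\what\mu_2(\pi^a,b')-3\epsilon/4}$ and values $\what\phi_{3\epsilon/4}(\pi^a)\defeq\min_{b\in\what\BR_{3\epsilon/4}(\pi^a)}\what\mu_1(\pi^a,b)$ exactly as in Algorithm~\ref{algorithm:bandit}, invokes a subroutine $\bmls$ to attain $\sup_{\pi^a\in\Delta_{\mc A}}\what\phi_{3\epsilon/4}(\pi^a)$ up to an additive $\epsilon/8$, and outputs that (near-)maximizing $\what\pi^a$ together with $\what b\defeq\argmin_{b\in\what\BR_{3\epsilon/4}(\what\pi^a)}\what\mu_1(\what\pi^a,b)$.

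First I would establish a \emph{uniform} concentration statement. Hoeffding's inequality and a union bound over the $2AB$ events show that, with probability at least $1-\delta$, $|\what\mu_i(a,b)-\mu_i(a,b)|\le\epsilon/8$ for all $(a,b)\in\mc A\times\mc B$ and $i\in\set{1,2}$. The key point is that because $\mu_i(\pi^a,b)$ and $\what\mu_i(\pi^a,b)$ are both linear in $\pi^a$, their difference at $\pi^a$ is a convex combination of the pure-pair errors, so on the same event the bound $|\what\mu_i(\pi^a,b)-\mu_i(\pi^a,b)|\le\epsilon/8$ holds \emph{simultaneously for all} $\pi^a\in\Delta_{\mc A}$, all $b$, and $i=1,2$. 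This is why finitely many samples suffice even though the leader ranges over a continuum.

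Conditioned on this event, I would prove the sandwich $\BR_{\epsilon/2}(\pi^a)\subseteq\what\BR_{3\epsilon/4}(\pi^a)\subseteq\BR_{\epsilon}(\pi^a)$ for every $\pi^a\in\Delta_{\mc A}$; each inclusion is a one-line triangle-inequality computation using the $\epsilon/8$ accuracy, verbatim as in the proof of Theorem~\ref{theorem:nested-bandit}. Since $\BR_0(\pi^a)\ne\emptyset$, all three sets are nonempty, and taking minima over nested nonempty sets together with the $\epsilon/8$ accuracy of $\what\mu_1$ yields the value sandwich $\phi_\epsilon(\pi^a)-\epsilon/8\le\what\phi_{3\epsilon/4}(\pi^a)\le\phi_{\epsilon/2}(\pi^a)+\epsilon/8$ for all $\pi^a$. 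Taking suprema over $\pi^a$ and using $\gap_\epsilon=\sup_{\pi^a}\phi_0(\pi^a)-\sup_{\pi^a}\phi_\epsilon(\pi^a)$, the output $\what\pi^a$ (within $\epsilon/8$ of $\sup_{\pi^a}\what\phi_{3\epsilon/4}$) satisfies $\phi_{\epsilon/2}(\what\pi^a)\ge\what\phi_{3\epsilon/4}(\what\pi^a)-\epsilon/8\ge\sup_{\pi^a}\what\phi_{3\epsilon/4}(\pi^a)-\epsilon/4\ge\sup_{\pi^a}\phi_\epsilon(\pi^a)-3\epsilon/8=\sup_{\pi^a}\phi_0(\pi^a)-\gap_\epsilon-3\epsilon/8\ge\sup_{\pi^a}\phi_0(\pi^a)-\gap_\epsilon-\epsilon$. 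Because $\BR_0(\what\pi^a)\subseteq\BR_{\epsilon/2}(\what\pi^a)$ we also get $\phi_0(\what\pi^a)\ge\phi_{\epsilon/2}(\what\pi^a)$, and $\what b\in\what\BR_{3\epsilon/4}(\what\pi^a)\subseteq\BR_\epsilon(\what\pi^a)$ gives $\mu_2(\what\pi^a,\what b)\ge\max_{b'}\mu_2(\what\pi^a,b')-\epsilon$. The sample count is $n=AB\cdot N=O(AB\log(AB/\delta)/\epsilon^2)$.

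The main obstacle is the continuity issue concealed in the $\bmls$ subroutine: the map $\pi^a\mapsto\what\phi_{3\epsilon/4}(\pi^a)$ is only piecewise-linear and can be \emph{discontinuous} — as $\pi^a$ moves, additional actions enter the empirical slab $\what\BR_{3\epsilon/4}(\pi^a)$ and the inner minimum drops — so $\sup_{\pi^a}\what\phi_{3\epsilon/4}(\pi^a)$ may only be approached, not attained, exactly the $p\to(1/2)_-$ phenomenon of Table~\ref{table:mixed}. I would handle this by decomposing $\Delta_{\mc A}$ into the finitely many polyhedral cells on which the set-valued map $\what\BR_{3\epsilon/4}(\cdot)$ is constant (each cell cut out by linear inequalities in $\pi^a$ comparing $\what\mu_2(\pi^a,b)$ with $\what\mu_2(\pi^a,b')$ and with $\max_{b''}\what\mu_2(\pi^a,b'')-3\epsilon/4$); on the closure of each cell $\what\phi_{3\epsilon/4}$ coincides with a concave piecewise-linear function, so its supremum over the cell is a linear program, and $\sup_{\pi^a\in\Delta_{\mc A}}\what\phi_{3\epsilon/4}(\pi^a)$ equals the maximum of these finitely many LP values, approachable within $\epsilon/8$ by a point in the relative interior of the appropriate cell — which is all the accuracy we need. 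Since the theorem claims only a sample-complexity bound, the (possibly large) number of cells does not matter here.
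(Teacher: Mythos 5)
Your proposal is correct and follows essentially the same route as the paper's proof: Hoeffding plus a union bound over pure pairs, extension to all of $\Delta_{\mc{A}}$ by linearity, the sandwich $\BR_{\epsilon/2}(\pi^a)\subseteq\what{\BR}_{3\epsilon/4}(\pi^a)\subseteq\BR_\epsilon(\pi^a)$, and then comparing $\what{\phi}_{3\epsilon/4}$ to $\phi_\epsilon$ and $\phi_{\epsilon/2}$ at an approximate maximizer of the empirical objective. Your explicit value sandwich and $\epsilon/8$ tolerance are just cleaner bookkeeping than the paper's $\epsilon/2$ tolerance in \eqref{equation:approximate-sup}, and your polyhedral-cell remark about attaining the supremum matches the paper's own footnote on the (exponential-time, but sample-irrelevant) finite procedure.
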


\begin{algorithm}[t]
  \caption{Learning Stackelberg in matrix games with simultaneous play}
  \label{algorithm:simultaneous}
  \begin{algorithmic}[1]
    \REQUIRE Target accuracy $\epsilon>0$.
    \SET $N\setto C\log(AB/\delta)/\epsilon^2$ for some constant $C>0$.
    \STATE Query each $(a,b)\in\mc{A}\times \mc{B}$ for $N$ times and obtain $\{r_1^{(j)}(a,b), r_2^{(j)}(a,b)\}_{j=1}^N$.
    \STATE Construct empirical estimates $ \what{\mu}_i(\pi^a,b) = \sum_{a'\in\mc{A}} \pi^a(a') \frac{1}{N}\sum_{j=1}^N r_i^{(j)}(a',b)$ for $i=1,2$.
    \STATE Construct approximate best response sets and values for all $\pi^a\in\Delta_{\mc{A}}$:
    \begin{align*}
      & \what{\BR}_{3\epsilon/4}(\pi^a) \defeq \set{b: \what{\mu}_2(\pi^a, b) \ge \max_{b'\in\mc{B}} \what{\mu}_2(\pi^a, b') - 3\epsilon/4}, \\
      & \what{\phi}_{3\epsilon/4}(\pi^a) \defeq \min_{b\in \what{\BR}_{3\epsilon/4}(\pi^a)} \what{\mu}_1(\pi^a, b).
    \end{align*}
    \STATE Output $(\what{\pi}^a, \what{b})$ such that
    \begin{align}
      & \what{\phi}_{3\epsilon/4}(\what{\pi}^a) \ge \sup_{\pi^a\in\Delta_{\mc{A}}} \what{\phi}_{3\epsilon/4}(\pi^a) - \epsilon/2, \label{equation:approximate-sup} \\
      & \what{b}=\argmin_{b\in\what{\BR}_{3\epsilon/4}(\what{\pi}^a)}\what{\mu}_1(\what{\pi}^a, b). \nonumber
    \end{align}
    \label{algorithm:line-sup} 
  \end{algorithmic}
\end{algorithm}
Theorem~\ref{theorem:simultaneous} implies that $\wt{O}(AB/\epsilon^2)$ samples is also enough for determining the approximate (up to gap) Stackelberg equilibrium in simultaneous games. Also, as we assumed bandit feedback, Theorem~\ref{theorem:simultaneous} extends the results of~\citet{letchford2009learning,peng2019learning} which studied the sample complexity assuming a best response oracle (can query $\BR_0(\pi^a)$ for any $\pi^a\in\Delta_{\mc{A}}$).

\paragraph{Comparison between learning Stackelberg and Nash}
We compare Theorem~\ref{theorem:simultaneous} with existing results on learning Nash equilibria in general-sum matrix games. On the one hand, when we have $\wt{O}(AB/\epsilon^2)$ samples, with only a $(\gap_\epsilon+\epsilon)$ near-optimal Stackelberg equilibrium, but we can learn an $\epsilon$-approximate Nash equilibrium~\citep{liu2020sharp}.
On the other hand, the Stackelberg value is uniquely defined (as the max of $\phi_0$), whereas there can be multiple Nash values~\citep{roughgarden2010algorithmic} for general-sum games. Additionally, at the Stackelberg equilibrium, the leader's payoff is guaranteed to be at least as good as any Nash value (the leader can pre-specify any Nash policy). This makes Stackelberg a perhaps better solution concept in asymmetric games where the learning goal focuses more on the leader.

\paragraph{Runtime}
In Algorithm~\ref{algorithm:simultaneous}, the step of approximately maximizing $\what{\phi}_{3\epsilon/4}(\pi^a)$ in~\eqref{equation:approximate-sup} requires optimizing a discontinuous function over a continuous domain. It is unclear whether this program can be reformulated to be solved efficiently in polynomial time\footnote{This program has a finite-time solution by the following strategy (which utilizes the specific structure of this program): First partition $\Delta_{\mc{A}}$ according to which subsets of $\mc{B}$ are $3\epsilon/4$ best response, and then within each partition solve an linear program (over $\pi^a$) to a fixed accuracy (e.g. $\epsilon/10$). However, the runtime is exponential because there are $2^B$ subsets induced by the partition.}\footnote{We also remark that~\citep[Theorem 9 \& Proposition 10]{von2010leadership} provides an efficient reformulation of the pessimistic Stackelberg problem in simultaneous matrix games. However, their reformulation relies crucially on the best response set being \emph{exact}, and does not generalize to our setting which requires to solve the pessimitic Stackelberg problem with \emph{approximate} best response sets.}.
However, we remark that this is special to the pessimistic tie-breaking we assumed~\citep{letchford2009learning}.
Learning the Stackelberg equilibrium with optimistic tie-breaking has the same $\wt{O}(AB/\epsilon^2)$ sample complexity while admitting an efficient polynomial-time algorithm via linear programming (see Section~\ref{appendix:optimistic-simultaneous} for the formal statement and proof).

\subsection{Optimistic tie-breaking}
\label{appendix:optimistic-simultaneous}

We also study simultaneous matrix games with optimistic tie-breaking. The setting is exactly the same as above except the definition of the $\newphi$-functions takes the max over best-response sets:
\begin{align*}
  \newphi_\epsilon(\pi^a) \defeq \max_{b\in\BR_\epsilon(\pi^a)} \mu_1(\pi^a,b),
\end{align*}
for all $\epsilon\ge 0$. Similar as in bandit games (Section~\ref{appendix:bandit-optimistic}), we consider the following new definition of gap:
\begin{align*}
  & \newgap_\epsilon \defeq \max_{\pi^a\in\mc{A}_\epsilon} \brac{\newphi_\epsilon(\pi^a) - \newphi_0(\pi^a)},~~~{\rm where} \\
  & \mc{A}_\epsilon \defeq \set{\pi^a\in\Delta_{\mc{A}} : \psi_\epsilon(\pi^a) \ge \max_{\pi^a\in\Delta_{\mc{A}}} \psi_0(\pi^a) - \epsilon}.
\end{align*}

\begin{algorithm}[t]
  \caption{Learning Stackelberg in matrix games with simultaneous play (optimistic tie-breaking version)}
  \label{algorithm:simultaneous-optimistic}
  \begin{algorithmic}[1]
    \REQUIRE Target accuracy $\epsilon>0$.
    \SET $N\setto C\log(AB/\delta)/\epsilon^2$ for some constant $C>0$.
    \STATE Query each $(a,b)\in\mc{A}\times \mc{B}$ for $N$ times and obtain $\{r_1^{(j)}(a,b), r_2^{(j)}(a,b)\}_{j=1}^N$.
    \STATE Construct empirical estimates
    $\what{\mu}_i(\pi^a,b) = \sum_{a'\in\mc{A}} \pi^a(a') \frac{1}{N}\sum_{j=1}^N r_i^{(j)}(a',b)$ for $i=1,2$.
    \STATE Construct approximate best response sets and values for all $\pi^a\in\Delta_{\mc{A}}$:
    \begin{align*}
      & \what{\BR}_{3\epsilon/4}(\pi^a) \defeq \set{b: \what{\mu}_2(\pi^a, b) \ge \max_{b'\in\mc{B}} \what{\mu}_2(\pi^a, b') - 3\epsilon/4}, \\
      & \what{\phi}_{3\epsilon/4}(\pi^a) \defeq \max_{b\in \what{\BR}_{3\epsilon/4}(\pi^a)} \what{\mu}_1(\pi^a, b).
    \end{align*}
    \STATE Output
    \begin{align}
      & \what{\pi}^a = \argmax_{\pi^a\in\Delta_{\mc{A}}} \what{\phi}_{3\epsilon/4}(\pi^a), \label{equation:max-optimistic} \\
      & \what{b}=\argmax_{b\in\what{\BR}_{3\epsilon/4}(\what{\pi}^a)}\what{\mu}_1(\what{\pi}^a, b). \nonumber
    \end{align}
    By calling the subroutine $(\what{\pi}^a, \what{b}) \setto \bmls(\what{\mu}_1, \what{\mu}_2)$.
    \label{algorithm:line-max-optimistic} 
  \end{algorithmic}
\end{algorithm}

\begin{theorem}[Learning Stackelberg in simultaneous matrix games with optimistic tie-breaking]
  \label{theorem:simultaneous-optimistic}
  For any matrix game with simultaneous play, Algorithm~\ref{algorithm:simultaneous-optimistic} queries for $n=O(AB\log(AB/\delta)/\epsilon^2)=\wt{O}(AB/\epsilon^2)$ samples, and outputs $(\what{\pi}^a, \what{b})$ such that with probability at least $1-\delta$,
  \begin{align*}
    &\newphi_0(\what{\pi}^a) \ge \max_{\pi^a \in \Delta_{\mc{A}}} \newphi_0(\pi^a) - \newgap_\epsilon - \epsilon, \\
    & \mu_2(\what{\pi}^a, \what{b}) \ge \max_{b'\in\mc{B}} \mu_2(\what{\pi}^a, b') - \epsilon.
  \end{align*}
  Further, the algorithm runs in ${\rm poly}(n)$ time.
\end{theorem}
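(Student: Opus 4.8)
The plan is to mirror the analysis of the pessimistic simultaneous game (Theorem~\ref{theorem:simultaneous}) and of the optimistic bandit game (Theorem~\ref{theorem:nested-bandit-optimistic}), combining the sandwiching argument for approximate best-response sets with the "competitive-$a$" argument that explains why the relevant gap is $\newgap_\epsilon$ rather than $\gap_\epsilon$. First I would establish the concentration event: with $N = C\log(AB/\delta)/\epsilon^2$ samples per pair $(a,b)$, a union bound over the $AB$ pairs gives that $|\what{\mu}_i(a,b) - \mu_i(a,b)| \le \epsilon/8$ simultaneously for $i=1,2$ and all $(a,b)$, with probability at least $1-\delta$. Since $\mu_i(\pi^a,b) = \sum_{a'} \pi^a(a')\mu_i(a',b)$ is a convex combination, the same $\epsilon/8$ deviation bound transfers to $|\what{\mu}_i(\pi^a,b)-\mu_i(\pi^a,b)|$ \emph{uniformly over all mixed strategies} $\pi^a\in\Delta_{\mc{A}}$ — this is the key observation letting us avoid any discretization of the simplex. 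Working on this event for the rest of the proof, the max-over-$b$ of $\what{\mu}_2(\pi^a,\cdot)$ is within $\epsilon/8$ of that of $\mu_2(\pi^a,\cdot)$, so the $3\epsilon/4$-empirical best-response set is sandwiched: $\BR_{\epsilon/2}(\pi^a) \subseteq \what{\BR}_{3\epsilon/4}(\pi^a) \subseteq \BR_{\epsilon}(\pi^a)$ for every $\pi^a$ (the constant $3\epsilon/4$ being chosen so that $3\epsilon/4 - 2\cdot(\epsilon/8) = \epsilon/2$ and $3\epsilon/4 + 2\cdot(\epsilon/8) = \epsilon$).

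Next I would use the sandwiching to control the empirical $\newphi$. Because $\what{\newphi}_{3\epsilon/4}$ maximizes $\what{\mu}_1$ over the set $\what{\BR}_{3\epsilon/4}$, and $\what{\mu}_1$ is within $\epsilon/8$ of $\mu_1$, monotonicity of the max over the nested sets yields, for every $\pi^a$,
\begin{align*}
  \newphi_0(\pi^a) - \epsilon/8 \;\le\; \newphi_{\epsilon/2}(\pi^a) - \epsilon/8 \;\le\; \what{\newphi}_{3\epsilon/4}(\pi^a) \;\le\; \newphi_{\epsilon}(\pi^a) + \epsilon/8.
\end{align*}
The output $\what{\pi}^a$ (produced by the \bmls\ subroutine, which exactly solves~\eqref{equation:max-optimistic}) satisfies $\what{\newphi}_{3\epsilon/4}(\what{\pi}^a) = \max_{\pi^a}\what{\newphi}_{3\epsilon/4}(\pi^a) \ge \what{\newphi}_{3\epsilon/4}(\pi^a_\star) \ge \newphi_0(\pi^a_\star) - \epsilon/8 = \max_{\pi^a}\newphi_0(\pi^a) - \epsilon/8$, where $\pi^a_\star$ is an (approximate, if need be) maximizer of $\newphi_0$. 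Combining with the upper bound $\what{\newphi}_{3\epsilon/4}(\what{\pi}^a) \le \newphi_{\epsilon}(\what{\pi}^a)+\epsilon/8$ gives $\newphi_{\epsilon}(\what{\pi}^a) \ge \max_{\pi^a}\newphi_0(\pi^a) - \epsilon/4$. The crucial final step — and the place where the gap enters — is to note that $\what{\pi}^a\in\mc{A}_\epsilon$: from $\what{\newphi}_{3\epsilon/4}(\what{\pi}^a)\ge \max\newphi_0 - \epsilon/8$ together with $\what{\newphi}_{3\epsilon/4}\le \newphi_\epsilon$ (up to $\epsilon/8$) we obtain $\newphi_\epsilon(\what{\pi}^a) \ge \max_{\pi^a}\newphi_0(\pi^a) - \epsilon/4 \ge \max_{\pi^a}\newphi_0(\pi^a) - \epsilon$, i.e. $\what{\pi}^a$ lies in $\mc{A}_\epsilon$ (using $\max\newphi_0 = \max\psi_0$ in this notation). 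Hence by definition of $\newgap_\epsilon$, $\newphi_\epsilon(\what{\pi}^a) - \newphi_0(\what{\pi}^a) \le \newgap_\epsilon$, so $\newphi_0(\what{\pi}^a) \ge \newphi_\epsilon(\what{\pi}^a) - \newgap_\epsilon \ge \max_{\pi^a}\newphi_0(\pi^a) - \newgap_\epsilon - \epsilon$, which is the desired leader guarantee.

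For the follower guarantee $\mu_2(\what{\pi}^a,\what{b}) \ge \max_{b'}\mu_2(\what{\pi}^a,b') - \epsilon$: since $\what{b}\in\what{\BR}_{3\epsilon/4}(\what{\pi}^a)\subseteq\BR_\epsilon(\what{\pi}^a)$ by the sandwiching, this is immediate from the definition of $\BR_\epsilon$. The sample count is $n = AB\cdot N = O(AB\log(AB/\delta)/\epsilon^2) = \wt O(AB/\epsilon^2)$. For the ${\rm poly}(n)$ runtime claim, I would invoke the \bmls\ subroutine: with optimistic tie-breaking, maximizing $\what{\mu}_1(\pi^a,b)$ subject to $b$ being a $3\epsilon/4$-empirical best response can be written, for each fixed $b^\dagger\in\mc{B}$, as the linear program $\max_{\pi^a\in\Delta_{\mc{A}}} \sum_{a'}\pi^a(a')\what\mu_1(a',b^\dagger)$ subject to $\sum_{a'}\pi^a(a')(\what\mu_2(a',b^\dagger)-\what\mu_2(a',b'))\ge -3\epsilon/4$ for all $b'\in\mc{B}$; solving $B$ such LPs and taking the best feasible solution gives $(\what\pi^a,\what b)$ in ${\rm poly}(A,B)$ time (this is the standard multiple-LP formulation of commitment, which goes through here because with optimistic tie-breaking we want the supporting strategy inside the declared best-response set rather than the worst one). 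The main obstacle, as in the bandit case, is not the concentration or the LP but getting the bookkeeping in the "competitive-$a$" step exactly right: one must verify that the slack built into the $3\epsilon/4$ threshold is enough to simultaneously (i) certify $\what\pi^a\in\mc{A}_\epsilon$ and (ii) translate the empirical optimality of $\what\pi^a$ into near-optimality of $\newphi_0(\what\pi^a)$ through the chain $\newphi_0 \le \newphi_{\epsilon/2}+0 \le \what\newphi_{3\epsilon/4}+\epsilon/8 \le \newphi_\epsilon + \epsilon/4$, all while the maximizer of $\newphi_0$ may only be approached (not attained) under pessimistic-flavored tie-breaking — handled by taking $\pi^a_\star$ to be an $\epsilon/8$-approximate maximizer and absorbing it into the error budget.
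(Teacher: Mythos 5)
Your proof is correct and follows essentially the same route as the paper's: uniform convergence over the $AB$ pure pairs transferred to mixed strategies by convexity, the sandwich $\BR_{\epsilon/2}(\pi^a)\subseteq\what{\BR}_{3\epsilon/4}(\pi^a)\subseteq\BR_{\epsilon}(\pi^a)$, the certification that $\what{\pi}^a\in\mc{A}_\epsilon$ so that $\newgap_\epsilon$ controls $\newphi_\epsilon(\what{\pi}^a)-\newphi_0(\what{\pi}^a)$, and the reduction of~\eqref{equation:max-optimistic} to $B$ linear programs over $\Delta_{\mc{A}}$ for the ${\rm poly}(n)$ runtime. The only cosmetic divergence is that your LP constraint carries the $-3\epsilon/4$ slack, which matches~\eqref{equation:max-optimistic} more faithfully than the exact-best-response constraint written in Algorithm~\ref{algorithm:lp-simultaneous-subroutine}.
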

The proof can be found in Section~\ref{appendix:proof-simultaneous-optimistic}.

\paragraph{Efficient runtime}
Theorem~\ref{theorem:simultaneous-optimistic} shares the same sample complexity $\wt{O}(AB/\epsilon^2)$ as its pessimistic tie-breaking counterpart (Theorem~\ref{theorem:simultaneous}), albeit with a slightly different definition of the gap. However, an additional advantage of the optimistic version is that it is guaranteed to have a polynomial runtime. The core reason behind this is that with optimistic tie-breaking now $(\what{\pi}^a,\what{b})$ solves a \emph{max-max problem} (instead of a max-min problem), for which we can exchange the order of maximization. Concretely, we can now first maximize over $\pi^a$ for each $b$, which admits a linear programming formulation (cf. the \bmls~subroutine in Algorithm~\ref{algorithm:lp-simultaneous-subroutine}, also in~\citep{conitzer2006computing}).

\section{Proofs for Section~\ref{section:bandit}}
\subsection{Proof of Theorem~\ref{theorem:lower-bound-exact-stackelberg}}
\label{appendix:proof-lower-bound-exact-stackelberg}

To prove Theorem~\ref{theorem:lower-bound-exact-stackelberg}, we will construct a pair of hard instances, and use Le Cam's method~\citep[Section 15.2]{wainwright2019high} to reduce the estimation error into a testing problem between the two hard instances. Consider the following two games $M_1$ and $M_{-1}$, where the rewards follow Bernoulli distributions: $r_i(a,b)\sim \Ber(\mu_i(a,b))$ with means shown in Table~\ref{table:hard-instance-bandit}, where $\delta\in(0,1)$ is a parameter to be determined:

\begin{table}[h!]
  \centering
  {
    \ra{1.5}
    \begin{tabular}{|c|c|c|}
      \hline
      $r_1, r_2$ & $b_1$ & $b_2$ \\
      \hline
      $a_1$ & $1$, $\frac{1+\delta}{2}$ & $0$, $\frac{1-\delta}{2}$ \\
      \hline
      $a_2$ & $\frac{1}{2}$, $1$ & $\frac{1}{2}$, $1$ \\
      \hline
    \end{tabular}
  }
  \qquad
    {
    \ra{1.5}
    \begin{tabular}{|c|c|c|}
      \hline
      $r_1, r_2$ & $b_1$ & $b_2$ \\
      \hline
      $a_1$ & $1$, $\frac{1-\delta}{2}$ & $0$, $\frac{1+\delta}{2}$ \\
      \hline
      $a_2$ & $\frac{1}{2}$, $1$ & $\frac{1}{2}$, $1$ \\
      \hline
    \end{tabular}
  }
  \caption{Pair of hard instances $M_1$ (left) and $M_{-1}$ (right). Each table lists $\mu_1(a,b), \mu_2(a,b)$ for $a\in\set{a_1,a_2}$, $b\in\set{b_1, b_2}$.}
  \label{table:hard-instance-bandit}
\end{table}

Based on Table~\ref{table:hard-instance-bandit}, it is straightforward to check that $\phi^{M_1}_0(a_1)=1$, $\phi^{M_{-1}}_0(a_1)=0$, and $\phi^{M_1}_0(a_2)=\phi_0^{M_{-1}}(a_2)=1/2$. Further, $a_\star^{M_1}=a_1$ and $a_\star^{M_{-1}}=a_2$.

For any algorithm that outputs a (possibly randomized) estimator $\what{a}\in\mc{A}$ of the Stackelberg equilibrium, let $\pi$ denotes its querying policy, that is, given prior queries and observations $\set{a^{(i)}, b^{(i)},r_1^{(i)}, r_2^{(i)}}_{i=1}^{k-1}$, $\pi^{(k)}(a, b | \set{a^{(i)}, b^{(i)},r_1^{(i)}, r_2^{(i)}}_{i=1}^{k-1})$ denotes the distribution of the next query. Let $\P_{M_1,\pi}$ and $\P_{M_{-1},\pi}$ denote the distribution of all $n$ observations generated by the querying policy $\pi$. For these two instances, we have
\begin{align*}
  & \quad \sup_{M\in \set{M_1, M_{-1}}} \P_M \paren{
    \max_{a\in\mc{A}}\phi_0^M(a) - \phi_0^M(\what{a}) \ge \frac{1}{2}} \\
  & = \sup_{M\in \set{M_1, M_{-1}}} \P_M \paren{ \what{a} \neq
    \argmax_{a\in\mc{A}}\phi_0^M(a) } \\
  & \ge \frac{1}{2} \paren{ \P_{M_1}\paren{ \what{a} \neq a_1 } + \P_{M_{-1}}\paren{ \what{a} \neq a_2 }} \\
  & \ge \frac{1}{2} \paren{ 1 - \tv\paren{\P_{M_1, \pi}, \P_{M_{-1}, \pi}}  } \\
  & \ge \frac{1}{2} \paren{1 - \sqrt{\frac{1}{2}\kl\paren{\P_{M_1, \pi} \| \P_{M_{-1}, \pi}}}},
\end{align*}
where the second-to-last step used Le Cam's inequality, and the last step used Pinsker's inequality. To upper bound the KL distance between $\P_{M_1, \pi}$ and $\P_{M_{-1}, \pi}$, we apply the divergence decomposition of~\citep[Lemma 15.1]{lattimore2020bandit} and obtain that
\begin{align*}
  \kl(\P_{M_1, \pi} \| \P_{M_{-1}, \pi}) \le \sum_{(a, b) \in \mc{A}\times \mc{B}} \E_{M_1, \pi}[T_{a,b}(n)] \cdot \kl\paren{\P_{M_1}^{a,b} \| \P_{M_{-1}}^{a,b}} \le n\cdot \max_{(a,b)\in \mc{A}\times \mc{B}} \kl\paren{ \P_{M_1}^{a,b} \| \P_{M_{-1}}^{a,b} },
\end{align*}
where $T_{a,b}(n)$ denotes the number of queries to $(a,b)$ among the $n$ queries, and $\P_{M_i}^{a,b}$ denote the distribution of the observation $(r_1(a,b), r_2(a,b))$ in problem $M_i$, $i=1,2$. We have $\kl(\P_{M_1}^{a,b}\| \P_{M_{-1}}^{a,b})=0$ for $(a,b)=(a_2,b_1)$ and $(a,b)=(a_2, b_2)$ since these $(a,b)$ yield exactly the same reward distributions. For $(a,b)=(a_1,b_1)$ and $(a,b)=(a_1, b_2)$, using the bound $\kl(\Ber(\frac{1+\delta}{2}) \| \Ber(\frac{1-\delta}{2})) = \delta\log\frac{1+\delta}{1-\delta}\le 3\delta^2$ for $\delta\le 1/2$ (and the same bound for $\kl(\Ber(\frac{1-\delta}{2}) \| \Ber(\frac{1+\delta}{2}))$). Therefore, we get
\begin{align*}
  \kl\paren{ \P_{M_1,\pi} \| \P_{M_{-1}, \pi} } \le 3n\delta^2.
\end{align*}
Choosing $\delta=1/\sqrt{(27/2)n}$, the above is upper bounded by $2/9$, and thus plugging back to the preceding bound yields
\begin{align*}
  \sup_{M\in \set{M_1, M_{-1}}} \P\paren{ \what{a} \neq \argmax_{a\in\mc{A}}\phi_0^M(a) } \ge \frac{1}{2} \paren{1 - \sqrt{\frac{1}{2}\kl\paren{\P_{M_1, \pi} \| \P_{M_{-1}, \pi}}}} \ge \frac{1}{3}.
\end{align*}
Therefore, choosing the problem class to be $\mc{M}_n=\set{M_1, M_{-1}}$ with $\delta=1/\sqrt{(27/2)n}$, the above is the desired lower bound.
\qed

\subsection{A Lemma on the gap}
\label{appendix:proof-gap}

\begin{lemma}[Gap can be $\Omega(1)$]
  \label{lemma:gap}
  For any $0\le \epsilon_1<\epsilon_2 < 1$, there exists a two-player bandit game $M=M_{\epsilon_1, \epsilon_2}$ with $A=B=2$, such that
  \begin{align*}
    & \max_{a\in\mc{A}} \phi_{\epsilon_1}(a) - \max_{a\in\mc{A}} \phi_{\epsilon_2}(a) \ge \frac{1}{2}, \\
    & \max_{a\in\mc{A}} \phi_{\epsilon_1}(a) - \phi_{\epsilon_1}\paren{ \argmax_{a'\in \mc{A}} \phi_{\epsilon_2}(a') } \ge \frac{1}{2}.
  \end{align*}
  In particular, (taking $\epsilon_1=0$), for any $\epsilon$ there exists a game in which $\gap_\epsilon=\max_{a\in\mc{A}} \phi_0(a) - \max_{a\in\mc{A}}\phi_\epsilon(a) \ge 1/2$.
\end{lemma}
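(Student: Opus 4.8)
The plan is to exhibit a single explicit $2\times 2$ bandit game and verify both displayed inequalities by direct computation; the ``in particular'' assertion will then be the special case $\epsilon_1=0$, $\epsilon_2=\epsilon$. Concretely, take $\mc{A}=\set{a_1,a_2}$, $\mc{B}=\set{b_1,b_2}$, and let the rewards be Bernoulli with means
\begin{align*}
  &\mu_1(a_1,b_1)=1,\quad \mu_1(a_1,b_2)=0,\quad \mu_2(a_1,b_1)=\tfrac{1+\epsilon_2}{2},\quad \mu_2(a_1,b_2)=\tfrac{1-\epsilon_2}{2},\\
  &\mu_1(a_2,b_1)=\mu_1(a_2,b_2)=\tfrac12,\quad \mu_2(a_2,b_1)=\mu_2(a_2,b_2)=1.
\end{align*}
Since $0\le\epsilon_1<\epsilon_2<1$, all means lie in $[0,1]$, so this is a valid bandit game (it is precisely the instance $M_1$ of Table~\ref{table:hard-instance-bandit} with the substitution $\delta=\epsilon_2$).

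Next I would compute the $\epsilon$-approximate best-response sets and the associated $\phi$-values. At $a_2$ the two follower actions are identical in both coordinates, so $\BR_\epsilon(a_2)=\set{b_1,b_2}$ and $\phi_\epsilon(a_2)=\tfrac12$ for every $\epsilon\ge0$. At $a_1$ the follower's reward gap is $\mu_2(a_1,b_1)-\mu_2(a_1,b_2)=\epsilon_2$, so $b_2\in\BR_\epsilon(a_1)$ iff $\epsilon\ge\epsilon_2$; hence $\BR_{\epsilon_1}(a_1)=\set{b_1}$ (using $\epsilon_1<\epsilon_2$) while $\BR_{\epsilon_2}(a_1)=\set{b_1,b_2}$ (using $\epsilon_2\ge\epsilon_2$ with the weak inequality in the definition of $\BR_\epsilon$). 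Under pessimistic tie-breaking this gives $\phi_{\epsilon_1}(a_1)=\mu_1(a_1,b_1)=1$ and $\phi_{\epsilon_2}(a_1)=\min\set{\mu_1(a_1,b_1),\mu_1(a_1,b_2)}=0$.

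Assembling, $\max_{a}\phi_{\epsilon_1}(a)=\max\set{1,\tfrac12}=1$, attained uniquely at $a_1$, and $\max_{a}\phi_{\epsilon_2}(a)=\max\set{0,\tfrac12}=\tfrac12$, attained at $a_2$. The first inequality follows as $1-\tfrac12=\tfrac12$. For the second, $\argmax_{a'}\phi_{\epsilon_2}(a')=a_2$ with $\phi_{\epsilon_1}(a_2)=\tfrac12$, so $\max_a\phi_{\epsilon_1}(a)-\phi_{\epsilon_1}(a_2)=1-\tfrac12=\tfrac12$. Finally, setting $\epsilon_1=0$ and $\epsilon_2=\epsilon$ for the given $\epsilon\in(0,1)$: here $b_2$ is strictly suboptimal at $a_1$, so $\BR_0(a_1)=\set{b_1}$ and $\phi_0(a_1)=1$, whence $\gap_\epsilon=\max_a\phi_0(a)-\max_a\phi_\epsilon(a)=1-\tfrac12=\tfrac12$.

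I do not anticipate a real obstacle; the construction is elementary. The only points requiring care are (i) making the follower-reward gap at $a_1$ \emph{exactly} $\epsilon_2$ rather than some value strictly between $\epsilon_1$ and $\epsilon_2$, so that the same instance handles the boundary case and the $\epsilon_1=0$ corollary uniformly, and (ii) correctly invoking pessimistic tie-breaking when evaluating $\phi_{\epsilon_2}(a_1)$, where the enlarged best-response set forces the minimum over $\set{b_1,b_2}$ and thereby collapses the leader's value from $1$ to $0$.
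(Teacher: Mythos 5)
Your proof is correct and takes essentially the same approach as the paper: an explicit $2\times 2$ game where $a_2$ gives a safe value of $\tfrac12$ and $a_1$'s leader value collapses from $1$ to $0$ once the follower's slightly suboptimal action $b_2$ enters the best-response set. The only (immaterial) difference is that you set the follower's reward gap at $a_1$ to exactly $\epsilon_2$ and rely on the weak inequality in the definition of $\BR_{\epsilon_2}$, whereas the paper places it at the midpoint $\frac{\epsilon_1+\epsilon_2}{2}$ strictly between $\epsilon_1$ and $\epsilon_2$; both choices yield $\BR_{\epsilon_1}(a_1)=\set{b_1}$ and $\BR_{\epsilon_2}(a_1)=\set{b_1,b_2}$ and hence the claimed bounds.
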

\begin{proof}
Let $0\le \epsilon_1< \epsilon_2$. We construct the problem $M=M_{\epsilon_1, \epsilon_2}$ as follows: $\mc{A}=\set{a_1,a_2}$ and $\mc{B}=\set{b_1,b_2}$, and the  rewards $\set{r_1(a,b), r_2(a,b)}_{a\in\mc{A},b\in\mc{B}}$ are all deterministic and valued as in the following table:
\begin{table}[h]
  \centering
  {
    \ra{1.5}
    \begin{tabular}{|c|c|c|}
      \hline
      $r_1, r_2$ & $b_1$ & $b_2$ \\
      \hline
      $a_1$ & $1$, $\frac{\epsilon_1+\epsilon_2}{2}$ & $0$, $0$ \\
      \hline
      $a_2$ & $\frac{1}{2}$, $1$ & $\frac{1}{2}$, $1$ \\
      \hline
    \end{tabular}
  }
  \caption{Construction of $r_1(a,b), r_2(a,b)$ for $a\in\set{a_1,a_2}$, $b\in\set{b_1, b_2}$.}
  \label{table:gap}
\end{table}

For the arm $a_2$, actions $b_1$ and $b_2$ are exactly the same, so we have $\phi_\epsilon(a_2)=\frac{1}{2}$ for all $\epsilon$. For the arm $a_1$, observe that $\epsilon_1< \frac{\epsilon_1+\epsilon_2}{2} < \epsilon_2$, and thus $\BR_{\epsilon_1}(a_1)=\set{b_1}$ and $\phi_{\epsilon_1}(a_1)=1$, but $\BR_{\epsilon_2}(a_1)=\set{b_1, b_2}$ and $\phi_{\epsilon_2}(a_1)=0$. Therefore,
\begin{align*}
  & \max_{a\in\mc{A}} \phi_{\epsilon_1}(a) = \max\set{1, \frac{1}{2}} = 1, \\
  & \max_{a\in\mc{A}} \phi_{\epsilon_2}(a) = \max\set{0, \frac{1}{2}} = \frac{1}{2}, \\
  & \phi_{\epsilon_1}\paren{ \argmax_{a'\in\mc{A}} \phi_{\epsilon_2}(a') } = \phi_{\epsilon_1}(a_2) = \frac{1}{2}.
\end{align*}
This shows the desired result.
\end{proof}

\subsection{Proof of Theorem~\ref{theorem:nested-bandit}}
\label{appendix:proof-nested-bandit}
Algorithm~\ref{algorithm:bandit} pulled each arm $(a,b)$ for $N=O(\log(AB/\delta)/\epsilon^2)$ times, and $\what{\mu}_1(a,b)$, $\what{\mu}_2(a,b)$ are the empirical means of the observed rewards. By the Hoeffding inequality and union bound over $(a,b)$, with probability at least $1-\delta$, we have
\begin{align}
  \label{equation:uc-bandit}
  \max_{(a,b)\in\mc{A}\times \mc{B}} \abs{ \what{\mu}_i(a,b) - \mu_i(a,b) } \le \epsilon/8~~~{\rm for}~i=1,2.
\end{align}

\paragraph{Properties of $\what{\BR}_{3\epsilon/4}(a)$}
On the uniform convergence event~\eqref{equation:uc-bandit}, we have the following: for any $b\in \BR_{\epsilon/2}(a)$, we have
\begin{align*}
  \what{\mu}_2(a, b) \ge \mu_2(a, b) - \epsilon/8 \ge \max_{b'\in\mc{B}} \mu_2(a,b') - 5\epsilon/8 \ge \max_{b'\in\mc{B}} \what{\mu}_2(a,b') - 3\epsilon/4,
\end{align*}
and thus $b\in \what{\BR}_{3\epsilon/4}(a)$. This shows that $\BR_{\epsilon/2}(a) \subseteq \what{\BR}_{3\epsilon/4}(a)$. Similarly we can show that $\what{\BR}_{3\epsilon/4}(a) \subseteq \BR_{\epsilon}(a)$. In other words,
\begin{align*}
  \BR_\epsilon(a) \supseteq \what{\BR}_{3\epsilon/4}(a) \supseteq \BR_{\epsilon/2}(a)~~~\textrm{for all}~a\in\mc{A}.
\end{align*}
Notably, this implies that $\what{b} \in \what{\BR}_{3\epsilon/4}(\what{a}) \in \BR_\epsilon(\what{a})$, the desired near-optimality guarantee for $\what{b}$.

\paragraph{Near-optimality of $\what{a}$}
On the one hand, because $\what{a}$ maximizes $\what{\mu}_1(a, \what{b}(a))$, we have for any $a\in\mc{A}$ that
\begin{align*}
  \min_{b'\in \what{\BR}_{3\epsilon/4}(\what{a})} \what{\mu}_1(\what{a}, b') \stackrel{(i)}{=} \what{\mu}_1(\what{a}, \what{b}) \ge \what{\mu}_1(a, \what{b}(a)) \stackrel{(ii)}{\ge} \min_{b\in \BR_\epsilon(a)} \what{\mu}_1(a, b),
\end{align*}
where (i) is because $\what{b}$ minimizes $\what{\mu}_1(\what{a},\cdot)$ within $\what{\BR}_{3\epsilon/4}(\what{a})$, and (ii) is because $\what{b}(a) \in \what{\BR}_{3\epsilon/4}(a) \subseteq \BR_\epsilon(a)$. By the uniform convergence~\eqref{equation:uc-bandit}, we get that
\begin{align*}
  \min_{b' \in \what{\BR}_{3\epsilon/4}(\what{a})} \mu_1(\what{a}, b') \ge \min_{b\in \BR_\epsilon(a)} \mu_1(a,b) - 2\cdot \epsilon/8 \ge \phi_\epsilon(a) - \epsilon.
\end{align*}
Since the above holds for all $a\in\mc{A}$, taking the max on the right hand side gives
\begin{align*}
  \max_{a\in\mc{A}} \phi_\epsilon(a) - \epsilon \le \min_{b' \in \what{\BR}_{3\epsilon/4}(\what{a})} \mu_1(\what{a}, b') \stackrel{(i)}{\le} \min_{b'\in \BR_{\epsilon/2}(\what{a})} \mu_1(\what{a}, b') = \phi_{\epsilon/2}(\what{a}),
\end{align*}
where (i) is because $\what{\BR}_{3\epsilon/4}(\what{a}) \supseteq \BR_{\epsilon/2}(a)$. This yields that
\begin{align*}
  \phi_{\epsilon/2}(\what{a}) \ge \max_{a\in\mc{A}}\phi_\epsilon(a) - \epsilon = \max_{a\in\mc{A}} \phi_0(a) - \gap_\epsilon - \epsilon,
\end{align*}
which is the first part of the bound for $\what{a}$.

On the other hand, since $\phi_\epsilon(a)$ is increasing as we decrease $\epsilon$, we directly have
\begin{align*}
  \phi_{\epsilon/2}(\what{a}) \le \phi_0(\what{a}) \le \max_{a\in\mc{A}} \phi_0(a).
\end{align*}
This is the second part of the bound for $\what{a}$.
\qed

\subsection{Proof of Theorem~\ref{theorem:lower-bound-nested-bandit}}
\label{appendix:proof-lower-bound-nested-bandit}

  Suppose $\epsilon\in(0,c)$ and $g\in[0,c)$ where $c>0$ is an absolute constant to be determined. For any algorithm that outputs an estimator $\what{a}\in\mc{A}$, let $\pi$ denote its (sequential) querying policy, and $\P_{M,\pi}$ denote the joint distribution of the $N$ observed rewards $(r_1(a^{(i)}, b^{(i)}), r_2(a^{(i)}, b^{(i)}))_{i=1}^N$ under game $M$. We will rely on the divergence decomposition of~\citep[Lemma 15.1]{lattimore2020bandit}:
  \begin{align}
    \label{equation:divergence-decomposition}
    \kl\paren{\P_{M, \pi} \| \P_{M', \pi}} \le \sum_{(a,b)\in\mc{A}\times \mc{B}} \E_{M_1, \pi}\brac{T_{a,b}(N)} \cdot \kl\paren{\P_M^{a,b} \| \P_{M'}^{a,b}},
  \end{align}
  where $P_M^{a,b}$ denotes the distribution of observation $(r_1(a,b),r_2(a,b))$ under game $M$, and $T_{a,b}(N)$ denotes the number of queries of $(a,b)$ using algorithm $\pi$ (which is a random variable). We will also use the fact that
  \begin{align}
    \label{equation:kl-bound}
    \kl(\Ber(1/2) \| \Ber(1/2+\delta)) = \frac{1}{2}\log \frac{1/2}{1/2+\delta} + \frac{1}{2}\log\frac{1/2}{1/2-\delta} = \frac{1}{2}\log\frac{1}{1-4\delta^2} \le \frac{1}{2} \cdot 8\delta^2 \le 4\delta^2
  \end{align}
  whenever $4\delta^2\le 1/2$, i.e. $|\delta|\le 1/2\sqrt{2}$.

  \paragraph{Construction of hard instance}
  In our construction below, the rewards follow Bernoulli distributions: $r_i(a,b)\sim\Ber(\mu_i(a,b))$, so that it suffices to specify $\mu_i(a,b)$. Without loss of generality assume $B/3$ is an integer, and let $\mc{B}=[B]=\set{1,\dots,B}$ for notational simplicity.

  We define a family of games $M_{a_\star, b_\star^1, b_\star^2}$ indexed by $a_\star\in\mc{A}$ and $b_\star^1,b_\star^2\in\mc{B}$. Each game $M_{a_\star, b_\star^1, b_\star^2}$ is defined as follows:
  \begin{itemize}
  \item $\mu_1(a, b)=1/2+g+\epsilon$ for all $a\in\mc{A}$ and $1\le b\le B/3$;
  \item $\mu_1(a,b)=1/2+\epsilon$ for all $a\in\mc{A}$ and $B/3+1\le b\le 2B/3$;
  \item $\mu_1(a,b)=1/2$ for all $a\in\mc{A}$ and $2B/3+1\le b\le B$.
  \item $\mu_2(a_\star, b_\star^1)=1/2+2\epsilon$, where $b_\star^1\in\set{1,\dots,B/3}$.
  \item $\mu_2(a_\star, b_\star^2)=1/2+5\epsilon/4$, where $b_\star^2\in\set{B/3+1,\dots,2B/3}$.
  \item $\mu_2(a_\star, b') = 1/2$ for all $b'\neq b_\star^1,b_\star^2$.
  \item $\mu_2(a',b) = 1/2$ for all $a'\neq a_\star$ and $b\in\mc{B}$.
  \end{itemize}
  For this game, we have $\phi_0(a_\star)=1/2+g+\epsilon$, $\phi_\epsilon(a_\star)=1/2+\epsilon$, and $\phi_0(a')=\phi_\epsilon(a')=1/2$ for all $a'\neq a_\star$. Therefore,
  \begin{align*}
    \gap_\epsilon = \max_{a\in\mc{A}} \phi_0(a) - \max_{a\in\mc{A}} \phi_\epsilon(a) = g.
  \end{align*}
  Further, notice that as long as $\what{a}\neq a_\star$, we have $\phi_0(\what{a})=\max_a \phi_0(a) - (g+\epsilon)$.

  Define $\P_M$ as the mixture of over the prior of $M_{a_\star, b_\star^1, b_\star^2}$ where the prior samples $a_\star\sim {\rm Unif}(\mc{A})$, $b_\star^1\sim{\rm Unif}(\set{1,\dots,B/3})$, and $b_\star^2\sim {\rm Unif}(\set{B/3+1,\dots,2B/3})$. Define $M_0$ as the ``null-game'' where all the $r_2$ are $1/2$, and $r_1$ has the same configuration as in the above game.

  \paragraph{Proof of lower bound}
  Under the mixture $\P_M$, we have
  \begin{align*}
    & \quad \P_M\paren{ \phi_0(\what{a}) \le \max_{a\in\mc{A}} \phi_0(a) - (g+\epsilon) } = \P_M\paren{ \what{a} \neq a_\star} \\
    & = \frac{1}{A(B^2/9)} \sum_{a_\star} \sum_{b_\star^1=1}^{B/3} \sum_{b_\star^2=B/3+1}^{2B/3} \P_{M_{a_\star, b_\star^1, b_\star^2}}(\what{a}\neq a_\star) \\
    & \ge \frac{1}{A(B^2/9)} \sum_{a_\star} \sum_{b_\star^1=1}^{B/3} \sum_{b_\star^2=B/3+1}^{2B/3}  \P_0(\what{a} \neq a_\star) - \frac{1}{A(B^2/9)} \sum_{a_\star} \sum_{b_\star^1=1}^{B/3} \sum_{b_\star^2=B/3+1}^{2B/3}   \tv\paren{\P_{0,\pi},  \P_{M_{a_\star,b_\star^1, b_\star^2}, \pi}} \\
    & \ge \frac{1}{A} \sum_{a_\star} \P_0\paren{\what{a} \neq a_\star} - \frac{1}{A(B^2/9)} \sum_{a_\star} \sum_{b_\star^1=1}^{B/3} \sum_{b_\star^2=B/3+1}^{2B/3}   \sqrt{ \frac{1}{2}\kl\paren{\P_{0,\pi} \| \P_{M_{a_\star,b_\star^1, b_\star^2}, \pi}} } \\
    & \ge 1 - \frac{1}{A} - \underbrace{ \frac{1}{A(B^2/9)} \sum_{a_\star} \sum_{b_\star^1=1}^{B/3} \sum_{b_\star^2=B/3+1}^{2B/3}   \sqrt{ \frac{1}{2}\kl\paren{\P_{0,\pi} \| \P_{M_{a_\star,b_\star^1, b_\star^2}, \pi}} }}_{(\star)}.
  \end{align*}
  We now show that $(\star)\le 1/3$ if $N\le c[AB/\epsilon^2]$ for some small absolute constant $c>0$. Using the divergence decomposition~\eqref{equation:divergence-decomposition}, we have
  \begin{align*}
    & (\star) \le 
      \frac{1}{A(B^2/9)} \sum_{a_\star} \sum_{b_\star^1=1}^{B/3} \sum_{b_\star^2=B/3+1}^{2B/3}
      \sqrt{ \frac{1}{2}\sum_{a,b} \E_{0,\pi}\brac{T_{a,b}(N)}  \kl\paren{\P_0^{a,b} \| \P_{M_{a_\star, b_\star^1, b_\star^2}}^{a,b} }  } \\
    & \stackrel{(i)}{=} 
      \frac{1}{A(B^2/9)} \sum_{a_\star} \sum_{b_\star^1=1}^{B/3} \sum_{b_\star^2=B/3+1}^{2B/3}
      \bigg( \frac{1}{2}\E_{0,\pi}\brac{T_{a_\star,b_\star^1}(N)}  \kl\paren{\P_0^{a_\star,b_\star^1} \| \P_{M_{a_\star, b_\star^1, b_\star^2}}^{a_\star,b_\star^1}} + \\
    & \qquad\qquad\qquad\qquad\qquad\qquad\qquad
      \frac{1}{2}\E_{0,\pi}\brac{T_{a_\star,b_\star^2}(N)}  \kl\paren{\P_0^{a_\star,b_\star^2} \| \P_{M_{a_\star, b_\star^1, b_\star^2}}^{a_\star,b_\star^2}}  \bigg)^{1/2} \\
    & \le \frac{1}{A(B^2/9)} \sum_{a_\star} \sum_{b_\star^1=1}^{B/3} \sum_{b_\star^2=B/3+1} \sqrt{ \frac{1}{2}\E_{0,\pi}\brac{T_{a_\star, b_\star^1}(N)}\kl\paren{ \P_0^{a_\star, b_\star^1} \| \P_{M_{a_\star, b_\star^1, b_\star^2}}^{a_\star, b_\star^1} }} \\
    & \qquad\qquad\qquad\qquad\qquad\qquad\qquad + \sqrt{ \frac{1}{2}\E_{0,\pi}\brac{T_{a_\star, b_\star^2}(N)}\kl\paren{ \P_0^{a_\star, b_\star^2} \| \P_{M_{a_\star, b_\star^1, b_\star^2}}^{a_\star, b_\star^2} }} \\
    & \stackrel{(ii)}{\le} \frac{1}{A(B/3)} \sum_{a_\star} \sum_{b_\star^1=1}^{B/3} \sqrt{ \frac{1}{2}\E_{0,\pi}\brac{T_{a_\star, b_\star^1}(N)}\cdot 4\cdot (2\epsilon)^2} + \frac{1}{A(B/3)} \sum_{a_\star} \sum_{b_\star^2=1}^{B/3} \sqrt{ \frac{1}{2}\E_{0,\pi}\brac{T_{a_\star, b_\star^2}(N)}\cdot 4\cdot (5\epsilon/4)^2} \\
    & \stackrel{(iii)}{\le} \sqrt{ \frac{1}{A(B/3)}\sum_{a_\star}\sum_{b_\star^1=1}^B \E_{0,\pi}\brac{ T_{a_\star, b_\star^1}(N)} \cdot 8\epsilon^2 } + \sqrt{ \frac{1}{A(B/3)}\sum_{a_\star}\sum_{b_\star^2=1}^B \E_{0,\pi}\brac{ T_{a_\star, b_\star^2}(N)} \cdot 8\epsilon^2 } \\
    & = 2\sqrt{\frac{24N\epsilon^2}{AB}}.
  \end{align*}
  Above, (i) used the fact that for the null game $M_0$ and the game $M_{a_\star, b_\star^1,b_\star^2}$, only the actions $(a_\star, b_\star^1)$ and $(a_\star, b_\star^2)$ will lead to different observation distributions. (ii) used the fact that $r_2(a_\star, b_\star^1)$ has mean $1/2+2\epsilon$ under $M_{a_\star, b_\star^1,b_\star^2}$ and mean $1/2$ under $M_0$ (and the other Bernoulli means correspondingly), and the fact that $r_1(a,b)$ are equally distributed in the two games and thus do not contribute to the KL, and finally the KL bound~\eqref{equation:kl-bound} for small enough $\epsilon$ such that $2\epsilon<1/2\sqrt{2}$. (iii) used the power mean inequality and the equality $\sum_{a_\star,b_\star} T_{a_\star, b_\star}(N)=N$ for any algorithm.
  
  The above implies that, as long as $\epsilon<1/4\sqrt{2}$ and $g\le 1/4$, for $N\le AB/(864\epsilon^2)$, we have $(\star)\le 1/3$, and thus
  \begin{align*}
    & \quad \P_M\paren{ \phi_0(\what{a}) \le \max_{a\in\mc{A}} \phi_0(a) - (g+\epsilon) } = \frac{1}{A(B^2/9)}\sum_{a_\star, b_\star^1, b_\star^2} \P_{M_{a_\star, b_\star^1,b_\star^2}}\paren{ \phi_0(\what{a}) \le \max_{a\in\mc{A}} \phi_0(a) - (g+\epsilon) } \\
    & \ge 1 - \frac{1}{A} - \frac{1}{3} \ge \frac{1}{3}.
  \end{align*}
  Therefore there must exist a game $M_{a_\star,b_\star^1,b_\star^2}$ on which the error probability is at least $1/3$. This is the desired lower bound.
\qed

\subsection{Equivalence to turn-based Markov game}
\label{appendix:relationship-turn-based-mg}

We consider the following general-sum turn-based Markov game\footnote{The formal definition of turn-based Markov games can be found in~\citep{bai2020provable}.} with two steps and state space $\mc{S}=\set{s_a:a\in\mc{A}}$:
\begin{itemize}[leftmargin=1.5pc]
\item ($h=1$) Leader receives deterministic initial state $s_1$ and plays action $a\in\mc{A}$. No reward for both players.
\item ($h=2$) The game transits deterministically to $s_a$. The follower plays action $b\in\mc{B}$ and observes reward $r_2(s_a,b)=r_2(a,b)$. The leader observes reward $r_1(s_a, b)=r_1(a,b)$.
\item The game terminates.
\end{itemize}
It is straightforward to see that the bandit game $M=(\mc{A}, \mc{B}, r_1, r_2)$ is equivalent to the above turn-based Markov game. Note that the Markov game has $|\mc{S}|=A$ states.

Now, let $a_\star$ be the leader's exact Stackelberg equilibrium (as defined in~\eqref{equation:astar}). For any $a$, let $b_\star(a)=\argmin_{b\in \BR_0(a)} \mu_1(a, b)$ be the best response of $a$ with the worst $\mu_1$. Define the deterministic follower policy $\pi^b_\star$ as $\pi^b_\star(s_a)=b_\star(a)$ for all $s_a\in\mc{S}$.

We claim that $(a_\star, \pi^b_\star)$ is a Nash equilibrium of the above Markov game. Indeed, $\pi^b_\star$ is clearly $a_\star$'s best response on the follower's reward. Also, if we fix $\pi^b_\star$, then $a_\star$ is also the leader's best response to $\pi^b_\star$, as we have
\begin{align*}
  \mu_1(s_a, \pi^b_\star(s_a)) = \mu_1(a, b_\star(a)) = \min_{b\in \BR_0(a)} \mu_1(a, b) = \phi_0(a),
\end{align*}
and thus the leader's best response is exactly the argmax of $\phi_0(a)$, i.e. $a_\star$.

\subsection{Additional discussions on the gap}
\label{appendix:gap}

Here we show that for bandit games, $\gap_\epsilon$ is small for two special kinds of games: zero-sum games and cooperative games.

\paragraph{Zero-sum games} Here $r_1\equiv -r_2$ and thus $\mu_1\equiv -\mu_2$. In such games, by definition we have
\begin{align*}
  & \phi_\epsilon(a) = \min_{b\in \BR_\epsilon(a)} \mu_1(a, b), \\
  & \BR_\epsilon(a) = \set{b\in\mc{B}: \mu_1(a, b) \le \min_{b'} \mu_1(a, b') + \epsilon}.
\end{align*}
Notice that now the minimum over $b\in\BR_\epsilon(a)$ is always taken at the exact minimizer of $b\mapsto \mu_1(a, b)$. Therefore we have $\phi_\epsilon(a) = \min_{b\in\mc{B}} \mu_1(a, b)$ does not depend on $\epsilon$, and thus $\gap_\epsilon = \max_{a\in\mc{A}} \phi_0(a) - \max_{a\in\mc{A}} \phi_\epsilon(a)=0$.

\paragraph{Cooperative games} Here $r_1\equiv r_2$ and thus
$\mu_1\equiv \mu_2$. In such games, by definition we have
\begin{align*}
  & \phi_\epsilon(a) = \min_{b\in \BR_\epsilon(a)} \mu_1(a, b), \\
  & \BR_\epsilon(a) = \set{b\in\mc{B}: \mu_1(a, b) \ge \max_{b'} \mu_1(a, b') - \epsilon}.
\end{align*}
Thus for each $a\in\mc{A}$, the difference between $\mu_1(a, b)$ for any $b\in\BR_0(a)$ and any $b\in\BR_\epsilon(a)$ is at most $\epsilon$. This shows that $\phi_0(a) - \phi_\epsilon(a) \le \epsilon$ for all $a\in\mc{A}$ and thus $\gap_\epsilon\le \epsilon$.
\section{Proofs for Section~\ref{section:rl}}

\subsection{Subroutine \wcbr}
\label{appendix:wcbr}

We describe the \wcbr~subroutine in Algorithm~\ref{algorithm:lp-subroutine}.

\begin{algorithm*}[h]
  \caption{Subroutine $\wcbr(M, \underline{V}_2)$}
  \label{algorithm:lp-subroutine}
  \begin{algorithmic}[1]
    \REQUIRE MDP $M=(H,\mc{S}, \mc{B},\P_h(\cdot|\cdot,\cdot), r_{1,h}(\cdot,\cdot), r_{2,h}(\cdot,\cdot))$. Initial state $s_1\in\mc{S}$. Target value $\underline{V}_2$.
    \STATE Solve the following linear program over $\set{d_h(s,b): h\in[H], s\in\mc{S}, b\in\mc{B}}$:
    \begin{equation}
      \label{equation:lp-subroutine}
      \begin{aligned}
        & {\rm minimize} ~~ \sum_{h=1}^H \sum_{s\in\mc{S},b\in\mc{B}} d_h(s,b) r_{1,h}(s, b) \\
        & {\rm s.t.} ~~ \sum_{h=1}^H \sum_{s\in\mc{S},b\in\mc{B}} d_h(s,b) r_{2,h}(s,b) \ge \underline{V}_2, \\
        & ~~~~\sum_{s\in\mc{S},b\in\mc{B}} d_h(s, b) \P_h(s'|s, b) = \sum_{b'\in\mc{B}} d_{h+1}(s', b')~~~\textrm{for all}~1\le h\le H-1,~s'\in\mc{S}, \\
        & \qquad d_1(s_1,\cdot) \in \Delta_{\mc{B}}, ~~~d_1(s_1',\cdot) = 0~~~\textrm{for all}~s'\neq s_1.
      \end{aligned}
    \end{equation}
    Above, $\Delta_{\mc{B}}$ denotes the probability simplex on $\mc{B}$ (which is a set of $B+1$ linear constraints).
    
    Let $d_h$ denote the solution and $\underline{V}_1$ denote the value of the above program.
    \STATE Set $\pi^b_h(b|s) \setto d_h(s,b) / \sum_{b\in\mc{B}} d_h(s,b)$ for all $(h,s,b)$ (with the convention $0/0=1/B$).
    \OUTPUT $(\pi^b, \underline{V}_1)$.
  \end{algorithmic}
\end{algorithm*}

\subsection{Proof of Theorem~\ref{theorem:two-level-rl}}
\label{appendix:proof-two-level-rl}

\paragraph{Correctness of subroutine}
We first show that the \wcbr~subroutine (Algorithm~\ref{algorithm:lp-subroutine}) with input $(\what{M}^a, \what{V}_2^\star(a) - 3\epsilon/4)$ indeed solves the nominal problem~\eqref{equation:worst-case-vanilla-formulation}. To see this, observe that in the nominal problem~\eqref{equation:worst-case-vanilla-formulation}, both the objective function and the constraint are linear functions of the visitation distribution $\set{\P^{\pi^b}_h(s,b)}$ induced by $\pi^b$. Therefore, maximizing over all visitation distributions is equivalent to maximizing over all $\pi^b$. To ensure that a general $\set{d_h(s,b)}_{h,s,b}$ is a visitation distribution, it suffices for it to satisfy the constraints $d_1(s_1,\cdot)\in\Delta_{\mc{B}}$, $d_1(s_1',\cdot)=0$ for $s_1'\neq s_1$, and at each $h\ge 2$ and each state $s'$ the in-flow is equal to the out-flow, meaning that
\begin{align*}
  \sum_{s\in\mc{S}, b\in\mc{B}} d_h(s, b) \P_h(s'|s, b) = \sum_{b\in\mc{B}} d_{h+1}(s', b)
\end{align*}
for all $h\ge 1$, $s'\in\mc{S}$. These are exactly the constraints specified in the linear program~\eqref{equation:lp-subroutine}. Finally, for a visitation distribution $d_h$, notice that $\pi^b_h(b|s)=d_h(s,b) / \sum_{b\in \mc{S}} d_h(s, b)$ (with the convention $0/0=1/B$) defines a policy $\pi^b$ whose visitation distribution is exactly $d_h$. This shows that the linear program~\eqref{equation:lp-subroutine} is indeed a correct algorithm for solving~\eqref{equation:worst-case-vanilla-formulation}.

\paragraph{Properties of reward-free exploration}
For each $a\in\mc{A}$, Algorithm~\ref{algorithm:two-level-rl} played the {\tt Reward-Free RL-Explore} algorithm of~\citet{jin2020reward} for $N=\wt{O}(H^5S^2B/\epsilon^2+H^7S^4B/\epsilon)$ episodes and obtained an estimate of the transition dynamics $\what{\P}^a$. (More specifically, it ran $N_0=\wt{O}(H^7S^4B/\epsilon)$ episodes in its \emph{exploration} phase and $N_{\rm data}=\wt{O}(H^5S^2B/\epsilon^2)$ episodes in its \emph{data-gathering} phase.) Further, let $\set{\what{r}_{1,h}(a,s,b), \what{r}_{2,h}(a,s,b)}$ denote the empirical mean of the observed rewards in the data-gathering phase.

Let $\wt{V}_1(a,\pi^b)$ and $\wt{V}_2(a, \pi^b)$ denote the value functions of the empirical MDPs $(\P^ a, \what{r}_1)$ and $(\P^a, \what{r}_2)$ (note that these MDPs combine the true models and the \emph{empirical} rewards). With our choice $N$, by~\citep[Theorem 3.1]{jin2020reward}, we have with probability at least $1-\delta$ that
\begin{align}
  \label{equation:v-uc-1}
  \sup_{\pi^b} \abs{ \what{V}_i(a, \pi^b) - \wt{V}_i(a, \pi^b) } \le \epsilon/16~~~{\rm for}~i=1,2.
\end{align}
We now argue that the {\tt Reward-Free RL-Explore} algorithm can correctly estimate the rewards, along with estimating transitions. Indeed, we have the following
\begin{lemma}
  \label{lemma:rf-reward-estimate}
  Suppose we run the {\tt Reward-Free RL-Explore} algorithm where the data gathering phase contains $N_{\rm data}\ge \wt{O}(H^3S^2B/\epsilon^2)$ trajectories, and we in addition receive (stochastic) reward signals $r_{1,h},r_{2,h}$ along the trajectories. Then with probability at least $1-\delta$, the empirical reward estimates $\what{r}_{1,h},\what{r}_{2,h}$ and the associated value functions $\wt{V}_1$ and $\wt{V}_2$ satisfy that
  \begin{align*}
    \sup_{\pi^b} \abs{ \wt{V}_i(a, \pi^b) - V_i(a, \pi^b) } \le \epsilon~~~{\rm for}~i=1,2.
  \end{align*}
\end{lemma}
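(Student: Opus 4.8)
The plan is to exploit the fact that $\wt V_i(a,\cdot)$ and $V_i(a,\cdot)$ are value functions of two MDPs sharing the \emph{same} transition kernel $\P^a$, differing only in the reward ($\what r_i$ versus the true mean reward $r_i$). Thus there is no transition mismatch to telescope, and the performance-difference identity collapses to a single term: for every follower policy $\pi^b$,
\[
  \wt V_i(a,\pi^b) - V_i(a,\pi^b) = \sum_{h=1}^H \E_{(s_h,b_h)\sim d^{\pi^b}_h}\brac{\what r_{i,h}(a,s_h,b_h) - r_{i,h}(a,s_h,b_h)},
\]
where $d^{\pi^b}_h$ denotes the step-$h$ state-action visitation distribution of $\pi^b$ in $M^a$. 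Hence it suffices to bound $\sup_{\pi^b}\sum_{h=1}^H\sum_{s,b} d^{\pi^b}_h(s,b)\abs{\what r_{i,h}(a,s,b)-r_{i,h}(a,s,b)}\le\epsilon$ for $i=1,2$ with high probability.

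Next I would establish the pointwise reward-concentration bound. Let $N_h(s,b)$ denote the number of visits to $(s,b)$ at step $h$ in the data-gathering phase; then $\what r_{i,h}(a,s,b)$ is the mean of $N_h(s,b)$ i.i.d.\ $[0,1]$-valued samples of $r_{i,h}(a,s,b)$. Hoeffding's inequality, together with a union bound over $(i,h,s,b)$ and over the (data-dependent) count $N_h(s,b)\in\set{1,\dots,N_{\rm data}}$, shows that with probability at least $1-\delta/2$,
\[
  \abs{\what r_{i,h}(a,s,b)-r_{i,h}(a,s,b)}\le\min\paren{1,\ \sqrt{\frac{2\iota}{\max\set{N_h(s,b),1}}}},\qquad \iota\defeq\log(8HSBN_{\rm data}/\delta),
\]
for all $i,h,s,b$ simultaneously. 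Substituting this in, it remains to bound $\sup_{\pi^b}\sum_{h,s,b} d^{\pi^b}_h(s,b)\min(1,\sqrt{2\iota/\max\set{N_h(s,b),1}})$.

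The last step is to invoke the exploration guarantee of {\tt Reward-Free RL-Explore}. The term $\min(1,\sqrt{\iota/\max\set{N_h(s,b),1}})$ is precisely the form of per-state-action uncertainty that the algorithm of \citet{jin2020reward} is designed to control: its exploration phase produces a data-gathering policy under which, after $N_{\rm data}$ episodes, the functional $\sup_\pi\sum_{h,s,b} d^\pi_h(s,b)\min(1,\sqrt{\iota/\max\set{N_h(s,b),1}})$ is small. I would reproduce their significance decomposition --- split the triples $(h,s,b)$ into ``significant'' ones with $\max_{\pi'} d^{\pi'}_h(s,b)\ge\delta_0$ and ``insignificant'' ones, bound the insignificant contribution by $HSB\delta_0$, use the exploration policy's coverage lower bound on $N_h(s,b)$ (of order $N_{\rm data}\,\max_{\pi'} d^{\pi'}_h(s,b)/\mathrm{poly}(H,S)$) for the significant ones, and optimize $\delta_0$ --- to conclude that $N_{\rm data}\ge\wt{O}(H^3S^2B/\epsilon^2)$ makes this quantity $\le\epsilon$ with probability at least $1-\delta/2$. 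Note this is a \emph{milder} requirement than the transition accuracy in~\eqref{equation:v-uc-1}, since reward errors do not undergo the horizon amplification that transition errors do. A union bound over the two high-probability events then yields the claim for $i=1,2$.

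\textbf{Main obstacle.} The delicate part is this last step: verifying that the empirical-reward uncertainty term slots faithfully into the significance-decomposition machinery of \citet{jin2020reward} with the stated sample complexity, and handling the subtlety that the visit counts $N_h(s,b)$ are random and data-dependent (which is what forces the extra union bound over counts and requires the coverage lower bound to hold simultaneously for all significant $(h,s,b)$). The first two steps are routine --- the reduction is an exact identity because transitions match, and the concentration is a standard Hoeffding-and-union-bound argument.
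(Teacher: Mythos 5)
Your plan matches the paper's proof of Lemma~\ref{lemma:rf-reward-estimate} essentially step for step: the exact visitation-weighted reward-error identity (no transition mismatch), Hoeffding concentration of the empirical rewards (justified by the independence of the exploration trajectories from the observed rewards), and the significant-state decomposition together with the $2SBH$ coverage guarantee and the multiplicative-Chernoff lower bound on the counts $N_h(s,b)$ from the reward-free exploration of \citet{jin2020reward}. The one detail to watch when fleshing out your last step is that bounding $\sup_{\pi^b}\sum_{s,b} d_h^{\pi^b}(s,b)\min(1,\sqrt{\iota/N_h(s,b)})$ naively over all $SB$ pairs costs an extra factor of $B$ in the sample complexity; the paper avoids this by applying Cauchy--Schwarz and then maximizing over deterministic maps $\nu:\mc{S}\to\mc{B}$ (equivalently, noting the sup of this linear-in-visitation functional is attained at a deterministic policy, so only $S$ pairs carry mass per step), which is what yields the stated $\wt{O}(H^3S^2B/\epsilon^2)$.
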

We defer the proof of Lemma~\ref{lemma:rf-reward-estimate} to Appendix~\ref{appendix:proof-rf-reward-estimate}. As we have $N_{\rm data}=\wt{O}(H^5S^2B/\epsilon^2)$, we can apply Lemma 6 and get (by choosing a large absolute constant in the choice of $N_{\rm data}$) that
\begin{align}
  \label{equation:v-uc-2}
  \sup_{\pi^b} \abs{ \wt{V}_i(a, \pi^b) - V_i(a, \pi^b) } \le \epsilon/16~~~{\rm for}~i=1,2.
\end{align}
Combining~\eqref{equation:v-uc-1} and~\eqref{equation:v-uc-2} (and noticing those are true for all $a\in\mc{A}$), we get
\begin{align}
  \label{equation:v-uc}
  \sup_{a\in\mc{A},\pi^b} \abs{ \what{V}_i(a, \pi^b) - V_i(a, \pi^b) } \le \epsilon/8~~~{\rm for}~i=1,2.
\end{align}

\paragraph{Guarantees on $\BR_{3\epsilon/4}(a)$}
Now, for any $a\in\mc{A}$, recall Algorithm~\ref{algorithm:two-level-rl} constructed the empirical best-response set (cf.~\eqref{equation:worst-case-vanilla-formulation})
\begin{align*}
  \what{\BR}_{3\epsilon/4}(a) \defeq \set{\pi^b: \what{V}_2(a, \pi^b) \ge \max_{\wt{\pi}^b} \what{V}_2(a, \wt{\pi}^b) - 3\epsilon/4}.
\end{align*}
We claim that
\begin{align*}
  \BR_\epsilon(a) \supseteq \what{\BR}_{3\epsilon/4}(a) \supseteq \BR_{\epsilon/2}(a)~~~\textrm{for all}~a\in\mc{A}.
\end{align*}
Indeed, fixing any $a\in\mc{A}$, let $\pi^\star$ denote the optimal policy for $V_2(a, \cdot)$ and $\what{\pi}^\star$ denote the optimal policy for $\what{V}_2(a,\cdot)$ (dropping dependence on $a$ for notational simplicity). Suppose $\pi_b'\in \what{\BR}_{3\epsilon/4}(a)$, then we have
\begin{align*}
  & \quad V_2(a, \pi^\star) - V_2(a, \pi_b') \\
  & \le \underbrace{V_2(a, \pi^\star) - \what{V}_2(a, \pi^\star)}_{\le \epsilon/8} + \underbrace{\what{V}_2(a, \pi^\star) - \what{V}_2(a, \what{\pi}^\star)}_{\le 0} + \underbrace{\what{V}_2(a, \what{\pi}^\star) - \what{V}_2(a, \pi_b')}_{\le 3\epsilon/4} + \underbrace{\what{V}_2(a, \pi_b') - V_2(a, \pi_b')}_{\le \epsilon/8} \\
  & \le 3\epsilon/4 + 2\cdot \epsilon/8 \le \epsilon.
\end{align*}
This shows that $\what{\BR}_{3\epsilon/4}(a)\subseteq \BR_\epsilon(a)$. Notably, this implies that the output $\what{\pi}^b\in \BR_\epsilon(\what{a})$, the desired optimality guarantee for $\what{\pi}^b$.

Similar as above, take any $\pi_b'\in \BR_{\epsilon/2}(a)$, we have
\begin{align*}
  & \quad \what{V}_2(a, \what{\pi}^\star) - \what{V}_2(a, \pi_b') \\
  & \le \underbrace{\what{V}_2(a, \what{\pi}^\star) - V_2(a, \what{\pi}^\star)}_{\le \epsilon/8} + \underbrace{V_2(a, \what{\pi}^\star) - V_2(a, \pi^\star)}_{\le 0} + \underbrace{V_2(a, \pi^\star) - V_2(a, \pi_b')}_{\le \epsilon/2} + \underbrace{V_2(a, \pi_b') - \what{V}_2(a, \pi_b')}_{\le \epsilon/8} \\
  & \le \epsilon/2 + 2\cdot \epsilon/8 \le 3\epsilon/4.
\end{align*}
This shows that $\BR_{\epsilon/2}(a)\subseteq \what{\BR}_{3\epsilon/4}(a)$, the other part of the claim.

\paragraph{Stackelberg guarantee for $\what{a}$}
Finally, we show the Stackelberg guarantee for $\what{a}$. This part is similar as in the proof of Theorem~\ref{theorem:nested-bandit}. First, because $\what{a}$ maximizes $\what{\phi}_{3\epsilon/4}(a)=\min_{\pi^b\in \what{\BR}_{3\epsilon/4}(a)} \what{V}_1(a, \pi^b)$, we have for any $a\in\mc{A}$ that
\begin{align*}
  \min_{\wt{\pi}^b\in \what{\BR}_{3\epsilon/4}(\what{a})} \what{V}_1(\what{a}, \wt{\pi}^b) = \what{\phi}_{3\epsilon/4}(\what{a}) \ge \what{\phi}_{3\epsilon/4}(a) = \min_{\wt{\pi}^b\in \what{\BR}_{3\epsilon/4}(a)} \what{V}_1(a, \wt{\pi}^b) \stackrel{(i)}{\ge} \min_{\wt{\pi}^b \in \BR_\epsilon(a)} \what{V}_1(a, \wt{\pi}^b)
\end{align*}
where (i) is because $\what{\BR}_{3\epsilon/4}(a) \subseteq \BR_\epsilon(a)$ for all $a$. By the uniform convergence~\eqref{equation:v-uc}, we get
\begin{align*}
  \min_{\wt{\pi}^b \in \what{\BR}_{3\epsilon/4}(\what{a})} V_1(\what{a}, \wt{\pi}^b) \ge \min_{\wt{\pi}^b\in \BR_\epsilon(a)} V_1(a, \wt{\pi}^b) - 2\cdot \epsilon/8 \ge \phi_\epsilon(a) - \epsilon.
\end{align*}
Since the above holds for all $a\in\mc{A}$, taking the max on the right hand side gives
\begin{align*}
  \max_{a\in\mc{A}} \phi_\epsilon(a) - \epsilon \le \min_{\wt{\pi}^b \in \what{\BR}_{3\epsilon/4}(\what{a})} V_1(\what{a}, \wt{\pi}^b) \stackrel{(i)}{\le} \min_{\wt{\pi}^b \in \BR_{\epsilon/2}(\what{a})} V_1(\what{a}, \wt{\pi}^b) = \phi_{\epsilon/2}(\what{a}),
\end{align*}
where (i) is because $\what{\BR}_{3\epsilon/4}(\what{a}) \supseteq \BR_{\epsilon/2}(a)$.
In other words, we have
\begin{align*}
  \phi_{\epsilon/2}(\what{a}) \ge \max_{a\in\mc{A}} \phi_\epsilon(a) - \epsilon = \max_{a\in\mc{A}} \phi_0(a) - \gap_\epsilon - \epsilon.
\end{align*}
This is the first part of the bound for $\what{a}$.

On the other hand, since $\phi_\epsilon(a)$ is increasing as we decrease $\epsilon$, we directly have
\begin{align*}
  \phi_{\epsilon/2}(\what{a}) \le \phi_0(\what{a}) \le \max_{a\in\mc{A}} \phi_0(a).
\end{align*}
This is the second part of the bound for $\what{a}$.

\qed

\subsection{Proof of Lemma~\ref{lemma:rf-reward-estimate}}
\label{appendix:proof-rf-reward-estimate}
We consider estimating a single reward $r_h=r_{1,h}$. The bound for two rewards can be obtained by setting $\delta\to\delta/2$ and applying a union bound. Consider the MDP $M^a$ which consists of $S$ states, $B$ actions, and $H$ steps. Let $V(a, \pi^b; r)$ denote the value function using the true MDP, policy $\pi^b$ and reward function $r$, and $V(a, \pi^b; \what{r})$ denote the value function using the estimated reward $\what{r}$. Further, let
\begin{align*}
  \mc{S}_h^\delta \defeq \set{s: \max_{\pi^b} \P^{\pi^b}_h(s) \ge \delta}.
\end{align*}
denote the set of $\delta$-significant states. By~\citep{jin2020reward}, the data gathering phase of {\tt Reward-Free RL-Explore} obtains data where the $h$-th step is sampled i.i.d. from some policy $\mu_h$, such that for any $s\in\mc{S}_h^\delta$ we have
\begin{align*}
  \max_{\pi^b} \frac{\P^{\pi^b}_h(s, b)}{\mu_h(s,b)} \le 2SBH.
\end{align*}

We have for any $\pi^b$ that
\begin{align*}
  & \quad \abs{\wt{V}_1(a, \pi^b) - V_1(a, \pi^b)} = \abs{V(a, \pi^b; r) - V(a, \pi^b; \what{r})} \\
  & = \abs{\sum_{h=1}^H \sum_{s,b} \P^{\pi^b}_h(s, b) \paren{ \what{r}_h(s,b) - \E[r_h(s,b)] }} \\
  & \le \abs{\sum_{h=1}^H \sum_{s\notin \mc{S}_h^\delta,b} \P^{\pi^b}_h(s, b) \paren{ \what{r}_h(s,b) - \E[r_h(s,b)] }} + \abs{\sum_{h=1}^H \sum_{s\in \mc{S}_h^\delta,a} \P^{\pi^b}_h(s, b) \paren{ \what{r}_h(s,b) - \E[r_h(s,b)] }} \\
  & \le \sum_{h=1}^H \sum_{s\notin \mc{S}_h^\delta} \P^{\pi^b}_h(s) + \sum_{h=1}^H \underbrace{\abs{ \sum_{s\in \mc{S}_h^\delta,b} \P^{\pi^b}_h(s, b) \paren{ \what{r}_h(s,b) - \E[r_h(s,b)] }}}_{\defeq \Delta_h} \\
  & \le HS\delta + \sum_{h=1}^H \Delta_h.
\end{align*}
For any $h$, by the Cauchy-Schwarz inequality, we have
\begin{align*}
  & \quad \sup_{\pi^b} \Delta_h \le \sup_{\pi^b} \brac{ \sum_{s\in\mc{S}_h^\delta, b} \underbrace{\P^{\pi^b}_h(s,b)}_{=\P^{\pi^b}_h(s)\cdot \pi^b_h(b|s)} \paren{ \what{r}_h(s,b) - \E[r_h(s,b)] }^2 }^{1/2} \\
  & \le \sup_{\pi^b} \max_{\nu:\mc{S}\to\mc{B}} \brac{ \sum_{s\in\mc{S}_h^\delta,b} \P^{\pi^b}_h(s) \paren{ \what{r}_h(s,b) - \E[r_h(s,b)] }^2 \indic{b=\nu(s)} }^{1/2} \\
  & \stackrel{(i)}{\le} \max_{\nu:\mc{S}\to\mc{B}} \brac{ 2SBH  \cdot  \sum_{s\in\mc{S}_h^\delta,b}\mu_h(s,b) \paren{ \what{r}_h(s,b) - \E[r_h(s,b)] }^2 \indic{b=\nu(s)} }^{1/2} \\
  & \stackrel{(ii)}{\le} \max_{\nu:\mc{S}\to\mc{B}} \brac{ 2SBH  \cdot  \sum_{s\in\mc{S}_h^\delta,b}\mu_h(s,b) \cdot \wt{O}\paren{\frac{1}{N_h(s,b)}} \cdot \indic{b=\nu(s)} }^{1/2} \\
  & \stackrel{(iii)}{\le} \max_{\nu:\mc{S}\to\mc{B}} \brac{ 2SBH  \cdot  \sum_{s\in\mc{S}_h^\delta,b}\mu_h(s,b) \cdot \wt{O}\paren{\frac{1}{N\mu_h(s,b)}} \cdot \indic{b=\nu(s)} }^{1/2} \\
  & = \wt{O}\paren{\sqrt{\frac{S^2BH}{N}}}.
\end{align*}
Above, (i) used the fact that $\P^{\pi^b}_h(s)\indic{b=\nu(s)} \le 2SBH\cdot \mu_h(s,b)$ as $\set{\pi^b_{h'}}_{h'\le h-1}\cup \set{\nu}$ is a valid policy. (ii) used the Hoeffding inequality (and a union bound) for the reward estimates, and the fact that the visitation of the reward-free algorithm is independent of the observed reward. (iii) used the multiplicative Chernoff bound for the visitation count $N_h(s,b)\sim {\sf Bin}(N, \mu_h(s,b))$ and a union bound over $(s,b)$, which requires $N\ge O(1/\min_{s,b} \mu_h(s,b))$. Recall that {\tt Reward-Free RL-Explore} used $\delta=\epsilon/2H^2S$ and $\mu_h(s,b)\ge \frac{\epsilon}{4H^3S^2B}$ for all $(s,b)$. Thus the requirement for $N$ is $N\ge O(H^3S^2B/\epsilon)$ which is implied by our assumption that $N\ge \wt{O}(H^3S^2B/\epsilon^2)$.

Further, plugging in the choice of $N$ (with a sufficiently large constant) into the above bound yields
\begin{align*}
  \sup_{\pi^b} \Delta_h \le \wt{O}\paren{ \frac{S^2BH}{H^3S^2B/\epsilon^2} } \le \epsilon/2H.
\end{align*}
This further implies that
\begin{align*}
  \abs{\wt{V}_1(a, \pi^b) - V_1(a, \pi^b)} \le HS\cdot \epsilon/(2H^2S) + H \cdot \epsilon/(2H) \le \epsilon,
\end{align*}
the desired result.
\qed

\section{Proofs for Section~\ref{section:linear-bandit}}

\subsection{Proof of Theorem~\ref{theorem:linear-bandit}}
\label{appendix:proof-linear-bandit}

First by the guarantee~\eqref{equation:coreset} for the \coreset~subroutine, we have $K=|\mc{K}|\le 4d\log\log d + 16$. At each $j\in[K]$ and associated $(a_j, b_j)\in\mc{K}$, as we queried the rewards for $N$ times, the empirical means satisfy (letting $\phi_j\defeq \phi(a_j, b_j)$ for shorthand)
\begin{align*}
  \what{\mu}_{i,j} = \phi_j^\top \theta_i^\star + \wt{z}_{i, j},~~~i=1,2,
\end{align*}
where $\wt{z}_{i,j}$ is the empirical mean of $N$ i.i.d. 1-sub-Gaussian noises, and thus is $1/N$-sub-Gaussian. Therefore, the weighted least squares estimator~\eqref{equation:least-squares} can be expressed as (letting $\rho_j\defeq \rho(a_j, b_j)$ for shorthand)
\begin{align*}
  & \quad \what{\theta}_i = \argmin_{\theta\in\R^d} \sum_{i=1}^{K} \rho(a_j,b_j) \paren{ \phi(a_j,b_j)^\top \theta - \what{\mu}_{i,j}}^2 = \paren{\sum_{j=1}^K \rho_j\phi_j\phi_j^\top}^{-1} \sum_{j=1}^K \rho_j \phi_j \cdot \what{\mu}_{i,j} \\
  & = \paren{\sum_{j=1}^K \rho_j\phi_j\phi_j^\top}^{-1} \sum_{j=1}^K \rho_j \phi_j \paren{\phi_j^\top \theta_i^\star + \wt{z}_{i,j}} \\
  & = \theta_i^\star + \paren{\sum_{j=1}^K \rho_j\phi_j\phi_j^\top}^{-1} \sum_{j=1}^K \rho_j \phi_j \wt{z}_{i,j}.
\end{align*}
This implies the following guarantee (recall $\Phi=\set{\phi(a,b): (a,b)\in\mc{A}\times\mc{B}}$):
\begin{align*}
  & \quad \max_{\phi\in\Phi} \abs{\phi^\top (\what{\theta}_i - \theta_i^\star)} \\
  & = \max_{\phi\in\Phi} \abs{ \phi^\top \paren{\sum_{j=1}^K \rho_j\phi_j\phi_j^\top}^{-1} \sum_{j=1}^K \rho_j \phi_j \wt{z}_{i,j} } \\
  & \le \max_{j} \abs{\wt{z}_{i,j}} \cdot \max_{\phi\in\Phi} \abs{ \sum_{j=1}^K \rho_j \phi^\top \paren{\sum_{j=1}^K \rho_j\phi_j\phi_j^\top}^{-1}\phi_j } \\
  & \stackrel{(i)}{\le} \max_{j} \abs{\wt{z}_{i,j}} \cdot \max_{\phi\in\Phi} \paren{ \sum_{j=1}^K \rho_j \phi^\top \paren{\sum_{j=1}^K \rho_j\phi_j\phi_j^\top}^{-1}\phi_j\phi_j^\top \paren{\sum_{j=1}^K \rho_j\phi_j\phi_j^\top}^{-1} \phi  }^{1/2} \\
  & = \max_{j} \abs{\wt{z}_{i,j}} \cdot \max_{\phi\in\Phi} \paren{ \phi^\top  \paren{\sum_{j=1}^K \rho_j\phi_j\phi_j^\top}^{-1}\phi }^{1/2} \\
  & \stackrel{(ii)}{\le} \sqrt{2d} \cdot \max_j \abs{\wt{z}_{i,j}}.
\end{align*}
Above, (i) uses Jensen's inequality (over the distribution induced by $\rho_j$), and (ii) used the property~\eqref{equation:coreset} of the core set. Now, as $\wt{z}_{i,j}$ is $1/N$-sub-Gaussian, with probability at least $1-\delta$, we have
\begin{align*}
  \max_{i=1,2} \max_{j\in[K]} \abs{\wt{z}_{i,j}} \le \sqrt{\frac{\log(2K/\delta)}{N}} \le C\sqrt{\frac{\log (d/\delta)}{N}}.
\end{align*}
Substituting this into the preceding bound yields
\begin{align*}
  \max_{\phi\in\Phi} \abs{\phi^\top (\what{\theta}_i - \theta_i^\star)} \le C\sqrt{\frac{d\log(d/\delta)}{N}}.
\end{align*}
Choosing $N= Cd\log(d/\delta)/\eps^2$ guarantees that $\max_{\phi\in\Phi} \abs{\phi^\top (\what{\theta}_i - \theta_i^\star)}\le \eps/8$. When this happens, we have for any $(a,b)\in\mc{A}\times\mc{B}$ and any $i=1,2$ that the estimated mean reward is close to the true reward:
\begin{align*}
  \abs{\phi(a,b)^\top \what{\theta}_i - \phi(a,b)^\top \theta_i^\star} \le \eps/8.
\end{align*}
We can then proceed analogously to the proof of Theorem~\ref{theorem:nested-bandit} to conclude that the output $(\what{a}, \what{b})$ satisfies $\phi_0(\what{a}) \ge \phi_{\eps/2}(\what{a}) \ge \max_{a\in\mc{A}} \phi_0(a) - \gap_\eps - \eps$ and $\what{b} \in \BR_\eps(\what{a})$. Further, notice that the total amount of queries is
\begin{align*}
  NK \le Cd\log (d/\delta)/\eps^2 \cdot d\log\log d = \wt{O}(d^2/\eps^2).
\end{align*}
This proves Theorem~\ref{theorem:linear-bandit}.
\qed

\subsection{Lower bound}
\label{appendix:lower-bound-linear-bandit}

We present a $\Omega(d/\eps^2)$ lower bound for linear bandit games. This shows that the sample complexity upper bound in our Theorem~\ref{theorem:linear-bandit} has at most an $\wt{O}(d)$ factor from the lower bound.
\begin{theorem}[Lower bound for linear bandit games]
  \label{theorem:lower-bound-linear-bandit}
  There exists an absolute constant $c>0$ such that the following holds. For any $\eps\in(0,c)$, $g\in[0,c)$, and any algorithm that queries $n\le c[d/\eps^2]$ samples and outputs an estimate $\what{a}\in\mc{A}$, there exists a linear bandit game $M$ with feature dimension $d$, on which $\gap_\eps=g$ and the algorithm suffers from $(g+\eps)$ error:
  \begin{align*}
    \phi_{\eps/2}(\what{a}) \le \phi_0(\what{a}) \le \max_{a\in\mc{A}} \phi_0(a) - g - \eps.
  \end{align*}
\end{theorem}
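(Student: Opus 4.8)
The plan is to reduce directly to the tabular bandit-game lower bound (Theorem~\ref{theorem:lower-bound-nested-bandit}), by embedding its hard instances into a linear bandit game with an \emph{orthogonal} feature map, so that the linear structure is informationally vacuous and confers no advantage to any algorithm. Throughout we take $d$ to be at least an absolute constant (for very small $d$ one argues analogously using the two-point instances underlying Theorem~\ref{theorem:lower-bound-exact-stackelberg}, with $g$ restricted accordingly). Set $A = 3$ and let $B$ be a multiple of $3$ with $AB = 3B \le d$ and $AB \ge d/2$, so that $AB = \Theta(d)$. From the proof of Theorem~\ref{theorem:lower-bound-nested-bandit} we take the hard family of bandit games $\{M_{a_\star, b_\star^1, b_\star^2}\}$ with these $A, B$: each has $\gap_\eps = g$, and any algorithm using at most $c\, AB/\eps^2$ samples incurs $\phi_{\eps/2}(\what{a}) \le \phi_0(\what{a}) \le \max_{a}\phi_0(a) - (g+\eps)$ on one of them with probability at least $1/3$.

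Next I would realize each $M_{a_\star, b_\star^1, b_\star^2}$ as a linear bandit game of feature dimension $d$. Enumerate the $AB$ action pairs as $(a,b) \leftrightarrow j(a,b) \in \{1,\dots,AB\}$, and set $\phi(a,b) = e_{j(a,b)} \in \R^d$, the $j(a,b)$-th standard basis vector (the last $d - AB$ coordinates are unused); then $\Phi$ is finite, as required. Define $\theta_i^\star \in \R^d$ by $(\theta_i^\star)_{j(a,b)} = \mu_i(a,b)$ and $0$ on the unused coordinates, so that $\phi(a,b)^\top \theta_i^\star = \mu_i(a,b)$ for every $(a,b)$ and $i=1,2$. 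Writing the Bernoulli reward as $r_i(a,b) = \mu_i(a,b) + z_i$ with $z_i = r_i(a,b) - \mu_i(a,b)$, the noise $z_i$ is mean-zero and supported in an interval of length at most $1$, hence $1/2$-sub-Gaussian and in particular $1$-sub-Gaussian; thus $(\Phi, \theta_1^\star, \theta_2^\star)$ is a bona fide linear bandit game in the sense of~\eqref{equation:linear-bandit}. Since $\BR_\eps$, $\phi_0$, $\phi_\eps$, and $\gap_\eps$ (defined for linear bandit games exactly as in~\eqref{equation:gap}) depend only on $\mu_1, \mu_2$, they coincide with those of the original tabular game; in particular $\gap_\eps = g$ on the embedded instance.

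The reduction now goes through verbatim. For every query $(a,b)$ the observed reward pair $(r_1(a,b), r_2(a,b))$ has exactly the same law in the embedded linear bandit game as in $M_{a_\star, b_\star^1, b_\star^2}$. Hence any algorithm for linear bandit games, run on these embedded instances, is also an algorithm for the tabular family with the same number of queries and the same output $\what{a}$; Theorem~\ref{theorem:lower-bound-nested-bandit} then supplies an instance on which $n \le c\, AB/\eps^2$ samples force $\phi_{\eps/2}(\what{a}) \le \phi_0(\what{a}) \le \max_a \phi_0(a) - g - \eps$ with probability at least $1/3$. Because $AB \ge d/2$, this holds whenever $n \le (c/2)\, d/\eps^2$, and renaming $c/2$ as the new absolute constant yields the theorem.

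I do not expect a genuine obstacle here, since the argument is a reduction rather than a new construction; the ``hard part'' is really just the bookkeeping: (i) confirming the centered-Bernoulli noise satisfies the $1$-sub-Gaussianity required by~\eqref{equation:linear-bandit}, and (ii) being explicit that the orthogonal features make the linear structure useless, in the sense that observing rewards at one set of action pairs says nothing about the parameters on the remaining, orthogonal coordinates, which is precisely why the tabular lower bound transfers with no loss. A lower bound forcing a full-rank, well-conditioned $\Phi$ would require more care, but for the purpose at hand the orthogonal embedding is both valid and simplest.
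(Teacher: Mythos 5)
Your proposal is correct and is essentially identical to the paper's own argument: the paper also embeds the hard tabular instances of Theorem~\ref{theorem:lower-bound-nested-bandit} into a linear bandit game via standard-basis (one-hot) features $\phi(a,b)=\ones_{a,b}\in\R^d$ with $d/2\le A'B'\le d$, and concludes by the same sample-count translation $n\le cd/\eps^2\le c'A'B'/\eps^2$. Your extra bookkeeping (the explicit $\theta_i^\star$, the $1$-sub-Gaussianity of centered Bernoulli noise, and the observational equivalence of the two games) is a correct elaboration of what the paper leaves implicit.
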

\begin{proof}
  This lower bound is a direct corollary of the $\Omega(AB/\eps^2)$ lower bound in Theorem~\ref{theorem:lower-bound-nested-bandit}. Specifically, we can pick the size of the action spaces $A',B'$ so that $d/2\le A'B'\le d$, and take $\phi(a,b)=\ones_{a,b}\in\R^d$ where $\ones_{a,b}$ is the standard basis vector with one at index $(a,b)$ (this index is understood as an index in $[d]$). This family of linear bandit games is equivalent to the family of bandit games with $A'B'\ge d/2$, for which any algorithm has to suffer from at least $(g+\eps_\eps)$ error if the number of queries $n\le cd/\eps^2\le cA'B'/\eps^2$ by (the hard instance construction of) Theorem~\ref{theorem:lower-bound-nested-bandit}. This proves Theorem~\ref{theorem:lower-bound-linear-bandit}.
\end{proof}
\section{Proofs for Section~\ref{appendix:optimistic}}

\subsection{Proof of Theorem~\ref{theorem:nested-bandit-optimistic}}
\label{appendix:proof-nested-bandit-optimistic}

Recall that Algorithm~\ref{algorithm:bandit-optimistic} pulled each arm $(a,b)$ for $N=O(\log(AB/\delta)/\epsilon^2)$ times, and $\what{\mu}_1(a,b)$, $\what{\mu}_2(a,b)$ are the empirical means of the observed rewards. By the Hoeffding inequality and union bound over $(a,b)$, with probability at least $1-\delta$, we have
\begin{align}
  \label{equation:uc-bandit}
  \max_{(a,b)\in\mc{A}\times \mc{B}} \abs{ \what{\mu}_i(a,b) - \mu_i(a,b) } \le \epsilon/8~~~{\rm for}~i=1,2.
\end{align}

\paragraph{Properties of $\what{\BR}_{3\epsilon/4}(a)$}
On the uniform convergence event~\eqref{equation:uc-bandit}, we have the following: for any $b\in \BR_{\epsilon/2}(a)$, we have
\begin{align*}
  \what{\mu}_2(a, b) \ge \mu_2(a, b) - \epsilon/8 \ge \max_{b'\in\mc{B}} \mu_2(a,b') - 5\epsilon/8 \ge \max_{b'\in\mc{B}} \what{\mu}_2(a,b') - 3\epsilon/4,
\end{align*}
and thus $b\in \what{\BR}_{3\epsilon/4}(a)$. This shows that $\BR_{\epsilon/2}(a) \subseteq \what{\BR}_{3\epsilon/4}(a)$. Similarly we can show that $\what{\BR}_{3\epsilon/4}(a) \subseteq \BR_{\epsilon}(a)$. In other words,
\begin{align*}
  \BR_\epsilon(a) \supseteq \what{\BR}_{3\epsilon/4}(a) \supseteq \BR_{\epsilon/2}(a)~~~\textrm{for all}~a\in\mc{A}.
\end{align*}
Notably, this implies that $\what{b} \in \what{\BR}_{3\epsilon/4}(\what{a}) \in \BR_\epsilon(\what{a})$, the desired near-optimality guarantee for $\what{b}$.

\paragraph{Near-optimality of $\what{a}$}
On the one hand, because $\what{a}$ maximizes $\what{\mu}_1(a, \what{b}(a))$, we have for any $a\in\mc{A}$ that
\begin{align*}
  \max_{b'\in \what{\BR}_{3\epsilon/4}(\what{a})} \what{\mu}_1(\what{a}, b') \stackrel{(i)}{=} \what{\newphi}_{3\epsilon/4}(\what{a}) \ge \what{\newphi}_{3\epsilon/4}(a) \stackrel{(ii)}{\ge} \max_{b\in \BR_{\epsilon/2}(a)} \what{\mu}_1(a, b),
\end{align*}
where (i) is by definition of $\what{\newphi}_{3\epsilon/4}$, and (ii) is because $\what{\BR}_{3\epsilon/4}(a) \supseteq \BR_\epsilon(a)$. By the uniform convergence~\eqref{equation:uc-bandit}, we get that
\begin{align*}
  \max_{b' \in \what{\BR}_{3\epsilon/4}(\what{a})} \mu_1(\what{a}, b') \ge \max_{b\in \BR_{\epsilon/2}(a)} \mu_1(a,b) - 2\cdot \epsilon/8 \ge \newphi_{\epsilon/2}(a) - \epsilon.
\end{align*}
Since the above holds for all $a\in\mc{A}$, taking the max on the right hand side gives
\begin{align*}
  \max_{a\in\mc{A}} \newphi_{\epsilon/2}(a) - \epsilon \le \max_{b' \in \what{\BR}_{3\epsilon/4}(\what{a})} \mu_1(\what{a}, b') \stackrel{(i)}{\le} \max_{b'\in \BR_{\epsilon}(\what{a})} \mu_1(\what{a}, b') = \newphi_{\epsilon}(\what{a}),
\end{align*}
where (i) is because $\what{\BR}_{3\epsilon/4}(\what{a}) \subseteq \BR_{\epsilon}(a)$. This yields that
\begin{align*}
  \newphi_{\epsilon}(\what{a}) \ge \max_{a\in\mc{A}}\newphi_{\epsilon/2}(a) - \epsilon \ge \max_{a\in\mc{A}} \newphi_0(a) - \epsilon,
\end{align*}
and thus $\what{a}\in\mc{A}_\epsilon$, and we can further rewrite the above as
\begin{align*}
  \newphi_0(\what{a}) \ge \max_{a\in\mc{A}} \newphi_0(a) - \epsilon - \brac{\newphi_0(\what{a}) - \newphi_\epsilon(\what{a})} \ge \max_{a\in\mc{A}} \newphi_0(a) - \newgap_\epsilon - \epsilon.
\end{align*}
which is the desired bound for $\what{a}$.
\qed

\subsection{Proof of Theorem~\ref{theorem:two-level-rl-optimistic}}
\label{appendix:proof-two-level-rl-optimistic}

\begin{algorithm}[t]
  \caption{Subroutine $\bcbr(M, \underline{V}_2)$}
  \label{algorithm:lp-subroutine-optimistic}
  \begin{algorithmic}[1]
    \REQUIRE MDP $M=(H,\mc{S}, \mc{B},\P_h(\cdot|\cdot,\cdot), r_{1,h}(\cdot,\cdot), r_{2,h}(\cdot,\cdot))$. Initial state $s_1\in\mc{S}$. Target value $\underline{V}_2$.
    \STATE Solve the following linear program over $\set{d_h(s,b): h\in[H], s\in\mc{S}, b\in\mc{B}}$:
    \begin{equation}
      \label{equation:lp-subroutine-optimistic}
      \begin{aligned}
        & {\rm maximize} ~~ \sum_{h=1}^H \sum_{s\in\mc{S},b\in\mc{B}} d_h(s,b) r_{1,h}(s, b) \\
        & {\rm s.t.} ~~ \sum_{h=1}^H \sum_{s\in\mc{S},b\in\mc{B}} d_h(s,b) r_{2,h}(s,b) \ge \underline{V}_2, \\
        & ~~~~\sum_{s\in\mc{S},b\in\mc{B}} d_h(s, b) \P_h(s'|s, b) = \sum_{b'\in\mc{B}} d_{h+1}(s', b')~~~\textrm{for all}~1\le h\le H-1,~s'\in\mc{S}, \\
        & \qquad d_1(s_1,\cdot) \in \Delta_{\mc{B}}, ~~~d_1(s_1',\cdot) = 0~~~\textrm{for all}~s'\neq s_1.
      \end{aligned}
    \end{equation}
    Above, $\Delta_{\mc{B}}$ denotes the probability simplex on $\mc{B}$ (which is a set of $B+1$ linear constraints).
    
    Let $d_h$ denote the solution and $\underline{V}_1$ denote the value of the above program.
    \STATE Set $\pi^b_h(b|s) \setto d_h(s,b) / \sum_{b\in\mc{B}} d_h(s,b)$ for all $(h,s,b)$ (with the convention $0/0=1/B$).
    \OUTPUT $(\pi^b, \underline{V}_1)$.
  \end{algorithmic}
\end{algorithm}

The proof is completely analogous to that of Theorem~\ref{theorem:two-level-rl} and Theorem~\ref{theorem:nested-bandit-optimistic}: we first establish the uniform convergence of the form~\eqref{equation:v-uc}, and then analyze the value functions similarly as in the proof of Theorem~\ref{theorem:two-level-rl}, except that we replace min over best response sets to max over best response sets, similar as in Theorem~\ref{theorem:nested-bandit-optimistic}. The guarantee we get has the same form as in Theorem~\ref{theorem:two-level-rl} except that we replace $\phi$-functions by $\newphi$-functions, and replace $\gap_\epsilon$ with $\newgap_\epsilon$.
\qed
\section{Proofs for Section~\ref{section:simultaneous}}

\subsection{Proof of Theorem~\ref{theorem:simultaneous}}
\label{appendix:proof-simultaneous}

Recall that Algorithm~\ref{algorithm:simultaneous} pulled each arm $(a,b)\in\mc{A}\times \mc{B}$ for $N=O(\log(AB/\delta)/\epsilon^2)$ times, and $\what{\mu}_1(a,b)$, $\what{\mu}_2(a,b)$ denote the empirical means of the observed rewards. By the Hoeffding inequality and union bound over $(a,b)$, with probability at least $1-\delta$, we have
\begin{align}
  \label{equation:uc-simultaneous}
  \max_{\pi^a\in\Delta_{\mc{A}}, b\in\mc{B}} \abs{\what{\mu}_i(\pi^a, b) - \what{\mu}_i(\pi^a, b)} = \max_{(a,b)\in\mc{A}\times \mc{B}} \abs{ \what{\mu}_i(a,b) - \mu_i(a,b) } \le \epsilon/8~~~{\rm for}~i=1,2.
\end{align}

\paragraph{Properties of $\what{\BR}_{3\epsilon/4}(\pi^a)$}
On the uniform convergence event~\eqref{equation:uc-simultaneous}, we have the following: for any $b\in \BR_{\epsilon/2}(\pi^a)$, we have
\begin{align*}
  \what{\mu}_2(\pi^a, b) \ge \mu_2(\pi^a, b) - \epsilon/8 \ge \max_{b'\in\mc{B}} \mu_2(\pi^a,b') - 5\epsilon/8 \ge \max_{b'\in\mc{B}} \what{\mu}_2(\pi^a,b') - 3\epsilon/4,
\end{align*}
and thus $b\in \what{\BR}_{3\epsilon/4}(\pi^a)$. This shows that $\BR_{\epsilon/2}(\pi^a) \subseteq \what{\BR}_{3\epsilon/4}(\pi^a)$. Similarly we can show that $\what{\BR}_{3\epsilon/4}(\pi^a) \subseteq \BR_{\epsilon}(\pi^a)$. In other words,
\begin{align*}
  \BR_\epsilon(\pi^a) \supseteq \what{\BR}_{3\epsilon/4}(\pi^a) \supseteq \BR_{\epsilon/2}(\pi^a)~~~\textrm{for all}~\pi^a\in\Delta_{\mc{A}}.
\end{align*}
Notably, this implies that $\what{b} \in \what{\BR}_{3\epsilon/4}(\what{\pi}^a) \in \BR_\epsilon(\what{\pi}^a)$, the desired near-optimality guarantee for $\what{b}$.

\paragraph{Near-optimality of $\what{\pi}^a$}
On the one hand, because $\what{\pi}^a$ approximately maximizes $\what{\mu}_1(\pi^a, \what{b}(\pi)^a)$ (in the sense of~\eqref{equation:approximate-sup}), we have for any $\pi^a\in\Delta_{\mc{A}}$ that
\begin{align*}
  \min_{b'\in \what{\BR}_{3\epsilon/4}(\what{\pi}^a)} \what{\mu}_1(\what{\pi}^a, b') = \what{\phi}_{3\epsilon/4}(\what{\pi}^a) \ge \what{\phi}_{3\epsilon/4}(\pi^a) - \epsilon/8 \stackrel{(i)}{\ge} \min_{b\in \BR_\epsilon(\pi^a)} \what{\mu}_1(\pi^a, b) - \epsilon/8,
\end{align*}
where (i) is because $\what{\BR}_{3\epsilon/4}(\pi^a)\subseteq \BR_{\epsilon}(\pi^a)$. By the uniform convergence~\eqref{equation:uc-simultaneous}, we get that
\begin{align*}
  \min_{b' \in \what{\BR}_{3\epsilon/4}(\what{\pi}^a)} \mu_1(\what{\pi}^a, b') \ge \min_{b\in \BR_\epsilon(\pi^a)} \mu_1(\pi^a,b) - \epsilon/4 - 2\cdot \epsilon/8 \ge \phi_\epsilon(\pi^a) - \epsilon.
\end{align*}
Since the above holds for all $\pi^a\in\Delta_{\mc{A}}$, taking the max on the right hand side gives
\begin{align*}
  \sup_{\pi^a\in\Delta_{\mc{A}}} \phi_\epsilon(\pi^a) - \epsilon \le \min_{b' \in \what{\BR}_{3\epsilon/4}(\what{\pi}^a)} \mu_1(\what{\pi}^a, b') \stackrel{(i)}{\le} \min_{b'\in \BR_{\epsilon/2}(\what{\pi}^a)} \mu_1(\what{\pi}^a, b') = \phi_{\epsilon/2}(\what{\pi}^a),
\end{align*}
where (i) is because $\what{\BR}_{3\epsilon/4}(\what{\pi}^a) \supseteq \BR_{\epsilon/2}(\pi^a)$. In other words,
\begin{align*}
  \phi_{\epsilon/a}(\pi^a) \ge \sup_{\pi^a\in\Delta_{\mc{A}}} \phi_\epsilon(\pi^a) - \epsilon = \sup_{\pi^a \in \Delta_{\mc{A}}} \phi_0(\pi^a) - \gap_\epsilon - \epsilon.
\end{align*}
This yields the first part of the bound for $\what{\pi}^a$.

On the other hand, since $\phi_\epsilon(\pi^a)$ is increasing as we decrease $\epsilon$, we directly have
\begin{align*}
  \phi_{\epsilon/2}(\what{\pi}^a) \le \phi_0(\what{\pi}^a) \le \sup_{\pi^a\in\Delta_{\mc{A}}} \phi_0(\pi^a).
\end{align*}
This is the second part of the bound for $\what{\pi}^a$.
\qed

\subsection{Proof of Theorem~\ref{theorem:simultaneous-optimistic}}
\label{appendix:proof-simultaneous-optimistic}

\begin{algorithm}[t]
  \caption{Subroutine~\bmls$(\what{\mu}_1, \what{\mu}_2)$}
  \label{algorithm:lp-simultaneous-subroutine}
  \begin{algorithmic}[1]
    \REQUIRE Reward estimates $\what{\mu}_1,\what{\mu}_2:\mc{A}\times\mc{B}\to[0,1]$.
    \STATE Define vectors
    \begin{align*}
      \what{v}_b = (\what{\mu}_1(a, b))_{a\in\mc{A}} \in [0,1]^A,~~~\what{w}_b = (\what{\mu}_2(a, b))_{a\in\mc{A}} \in [0,1]^A
    \end{align*}
    for all $b\in\mc{B}$.
    \FOR{$b\in\mc{B}$}
    \STATE Solve the following linear program over $\pi^a\in\Delta_{\mc{A}}$:
    \begin{equation}
      \label{equation:lp-simultaneous-subroutine-optimistic}
      \begin{aligned}
        & {\rm maximize} ~~ (\pi^a)^\top \what{v}_b \\
        & {\rm s.t.} ~~ (\pi^a)^\top(\what{w}_b - \what{w}_{b'}) \ge 0~~~\textrm{for all}~b'\in\mc{B} \setminus \set{b}. \\
        & ~~~~~~~~\pi^a \in \Delta_{\mc{A}}.
      \end{aligned}
    \end{equation}
    Let $\what{\pi}^a(b), \what{u}(b)$ denote the solution and the value at the solution respectively.
    \ENDFOR
    \STATE Output $\what{b} = \argmax_{b\in\mc{B}} \what{u}(b)$ and $\what{\pi}^a=\what{\pi}^a(\what{b})$.
  \end{algorithmic}
\end{algorithm}

We first check that the \bmls~subroutine is a correct algorithm for solving~\eqref{equation:max-optimistic}. Let
  \begin{align*}
    \what{V} = (\what{\mu}_1(a,b))_{a,b=1}^{A,B}~~~{\rm and}~~~
    \what{W} = (\what{\mu}_2(a,b))_{a,b=1}^{A,B}
  \end{align*}
  denote the matrix of estimated rewards. Observe that~\eqref{equation:max-optimistic} is equivalent to the following problem
  \begin{align*}
    & \quad \max_{\pi^a\in\Delta_{\mc{A}}} \max_{b \in \what{\BR}_{3\epsilon/4}(\pi^a)} \what{\mu}_1(\pi^a, b) \\
    & = \max_{b\in\mc{B}} \max_{\pi^a: \what{\BR}_{3\epsilon/4}(\pi^a) \ni b} (\pi^a)^\top \what{V} e_b \\
    & = \max_{b\in\mc{B}} \max_{\pi^a\in\Delta_{\mc{A}}} (\pi^a)^\top \what{V} e_b \\
    & \qquad \textrm{s.t.}~~(\pi^a)^\top \what{W}e_b \ge (\pi^a)^\top \what{W} e_{b'}~~~\textrm{for all}~b'\in\mc{B},
  \end{align*}
  where $e_b\in\Delta_{\mc{B}}$ denotes the standard basis vector in $\mc{B}$ ($1$ at $b$ and $0$ at $b'\neq b$). For each $b$, the above problem is exactly the same as the linear program~\eqref{equation:lp-simultaneous-subroutine-optimistic}. Then, the above problem requires maximizing the value over $b\in\mc{B}$, which is done in the output step of Algorithm~\ref{algorithm:lp-simultaneous-subroutine}. This shows that the \bmls~subroutine (Algorithm~\ref{algorithm:lp-simultaneous-subroutine}) is correct for solving~\eqref{equation:max-optimistic}. Note this also proves that the $\argmax$ in~\eqref{equation:max-optimistic} is attainable (instead of the $\sup$ in Theorem~\ref{theorem:simultaneous} which may not be attainable in general).

  The rest of the proof is analogous to Theorem~\ref{theorem:simultaneous} where we can again establish the uniform convergence~\eqref{equation:uc-simultaneous} and obtain the suboptimality guarantee in terms of $\newphi$ and $\newgap_\epsilon$ instead of $\phi$ and $\gap_\epsilon$, similar as in Theorem~\ref{theorem:nested-bandit-optimistic}.
\qed

\end{document}